\newtheorem*{rep@theorem}{\rep@title}
\newcommand{\newreptheorem}[2]{%
\newenvironment{rep#1}[1]{%
 \def\rep@title{#2 \ref{##1}}%
 \begin{rep@theorem}}%
 {\end{rep@theorem}}}
\theoremstyle{plain}
\newtheorem{theorem}{Theorem}
\newtheorem{proposition}{Proposition}
\theoremstyle{definition}
\newtheorem{definition}{Definition}
\newtheorem{remark}{Remark}
\newtheorem{lemma}{Lemma}
\author{Eugene Ndiaye, Olivier Fercoq, Alexandre Gramfort, Joseph Salmon \\
\small CNRS LTCI, T\'el\'ecom ParisTech, Universit\'e Paris-Saclay\\
\small 46 rue Barrault, 75013, Paris, France\\}
\date{}
\providecommand{\keywords}[1]{\textbf{\textit{Keywords ---}} #1}
\title{GAP Safe Screening Rules for \SGL}
\begin{document}
\maketitle

\vskip 0.3in

\begin{abstract}
In high dimensional settings, sparse structures are crucial for efficiency, either in term of memory, computation or performance. In some contexts, it is natural to handle more refined structures than pure sparsity, such as for instance group sparsity. \SGL has recently been introduced in the context of linear regression to enforce sparsity both at the feature level and at the group level. We adapt to the case of \SGL recent safe screening rules that discard early in the solver irrelevant features/groups. Such rules have led to important speed-ups for a wide range of iterative methods. Thanks to dual gap computations, we provide new safe screening rules for \SGL and show significant gains in term of computing time for a coordinate descent implementation.
\end{abstract}
\keywords{Lasso, Group-Lasso, \SGL, screening, safe rules, duality gap}

\section{Introduction}
Sparsity is a critical property for the success of regression methods, especially in high dimension. Often, group (or block) sparsity is helpful when some known group structure needs to be enforced. This is for instance the case in multi-task learning \citep{Argyriou_Evgeniou_Pontil08} or multinomial logistic regression \citep[Chapter 3]{Buhlmann_vandeGeer11}. In the multi-task setting, the group structure appears natural since one aims at jointly recovering signals whose supports are shared. In this context, sparsity and group sparsity are generally obtained by adding a regularization term to the data-fitting: $\ell_1$ norm for simple sparsity and $\ell_{1,2}$ for group sparsity.

Along with recent works on hierarchical regularization \cite{Jenatton_Mairal_Obozinski_Bach11,Sprechmann_Ramirez_Sapiro_Eldar11, Simon_Friedman_Hastie_Tibshirani13} have focused on a specific case: the \SGL. This method is the solution of a (convex) optimization program with a regularization term that is a convex combination of the two aforementioned norms, enforcing sparsity and group sparsity at the same time.

When using such advanced regularizations, the computational burden can be heavy particularly in high dimension. Yet, it can be significantly reduced if one can exploit the fact that the solution of the optimization problem is sparse. Following the seminal paper on ``safe screening rules'' \citep{ElGhaoui_Viallon_Rabbani12}, many contributions have investigated such strategies \citep{Xiang_Xu_Ramadge11,Bonnefoy_Emiya_Ralaivola_Gribonval14, Bonnefoy_Emiya_Ralaivola_Gribonval15,Wang_Ye14}. These so called safe screening rules compute some tests on dual feasible points to eliminate primal variables whose coefficients are guaranteed to be zero in the exact solution. Still, the computation of a dual feasible point can be challenging when the regularization is more complex than $\ell_1$ or $\ell_{1,2}$ norms. This is the case for the \SGL as it is not straightforward to characterize efficiently if a dual point is feasible or not~\citep{Wang_Ye14}. Hence, an efficient computation of the associated dual norm is required. This is all the more challenging that a naive implementation computing the dual norm associated to the \SGL is very expensive (it is quadratic with respect to the groups dimensions).

Here, we propose efficient dynamic safe screening rules (\ie rules that perform screening as the algorithm proceeds) for the \SGL. More precisely, we elaborate on refinements called GAP safe rules relying on dual gap computations. Such rules have been recently introduced for the Lasso
in \citet{Fercoq_Gramfort_Salmon15} and extended to various tasks in
\citet{Ndiaye_Fercoq_Gramfort_Salmon15}. We propose a natural extension of GAP safe rules to handle the \SGL case.
Moreover, we link the \SGL penalties to the $\epsilon$-norm in \cite{Burdakov88}. We adapt an algorithm introduced in \cite{Burdakov_Merkulov01} to efficiently compute the required
dual norms and highlight geometrical properties of the problem that give an easier way to characterize a dual feasible point. We incorporate our proposed Gap Safe rules in a block coordinate
descent algorithm and show its practical efficiency in climate prediction tasks where the computation time is demanding.

Note that alternative (unsafe) screening rules, for instance the ``strong rules'' \citep{Tibshirani_Bien_Friedman_Hastie_Simon_Tibshirani12}, have been applied to the Lasso and its simple variants. Moreover, strategies also leveraging dual gap computations have recently been considered in the \textbf{Blitz} algorithm \cite{Jonhson_Guestrin15} to speed up working set methods.

\subsection*{Notation}
For any integer $d\in\bbN$, we denote by $[d]$ the set $\{1, \ldots, d\}$. Our
observation vector is $y \in \bbR^n$ and the design matrix $X=
[X_1,\ldots, X_p ] \in \bbR^{n\times p}$ has $p$ explanatory variables or
features, stored column-wise. The standard Euclidean norm is written
$\|\cdot\|$, the $\ell_1$ norm $\|\cdot\|_1$, the $\ell_\infty$ norm
$\|\cdot\|_\infty$, and the transpose of a matrix $Q$ is denoted by
${Q}^\top$. We also denote $(t)_+=\max(0,t)$.

We consider problems where the vector of parameter $\beta = (\beta_1, \ldots,
\beta_p)^\top$ admits a natural group structure. A group of features is a
subset $g \subset [p]$ and $n_g$ is its cardinality. The set of groups is
denoted by $\mathcal{G}$ and we focus only on non-overlapping groups that form a partition of the set $[p]$. We denote by $\beta_g$ the vector in $\bbR^{n_g}$
which is the restriction of $\beta$ to the indexes in $g$.
We write $[\beta_g]_j$ the $j$-th coordinate of $\beta_g$. We also use the
notation $X_g \in \bbR^{n \times n_g}$ to refer to the sub-matrix of $X$
assembled from the columns with indexes $j \in g$, similarly $[X_g]_j$ is
the $j$-th column of $[X_g]$.

For any norm $\Omega$, $\mathcal{B}_{\Omega}$ refers to the corresponding unit
ball, and $\mathcal{B}$ (resp. $\mathcal{B}_{\infty}$) stands for the
Euclidean (resp. $\ell_\infty$) unit ball.
The soft-thresholding operator (at level $\tau \geq 0$), $\ST{\tau}$, is defined for any $x \in \bbR^d$ by $[\ST{\tau} (x)]_j = \sign(x_j)(|x_j|- \tau)_+$, while the group soft-thresholding (at level $\tau$) is $\GST{\tau}(x) = ( 1 - \tau/ \|x\|)_+ x$. Denoting $\Pi_\mathcal{C}$ the projection on a closed convex set $\mathcal{C}$
yields $\ST{\tau}=\Id-\Pi_{\tau \mathcal{B}_\infty}$.
The sub-differential of a convex function $f : \bbR^d \rightarrow
\bbR$ at $x$ is defined by $ \partial f(x) = \{ z \in \bbR^d: \forall y \in
\bbR^d, f(x) - f(y) \geq {z}^\top (x - y) \}$.


For any norm $\Omega$ over $\bbR^d$, $\Omega^D$ is the dual norm of $\Omega$,
and is defined for any $x\in \bbR^d$ by $\Omega^D(x)= \max_{v \in \mathcal{B}_{\Omega}} v^\top x$, \eg $\norm{\cdot}_{1}^{D} = \norm{\cdot}_{\infty}$ and $\norm{\cdot}^{D} = \norm{\cdot}$. We also recall that the sub-differential $\partial \norm{\cdot}_1$ of the $\ell_1$ norm is $\sign(\cdot)$, defined element-wise by
\begin{align}\label{eq:sub-differential_1}
\forall j \in [d], \sign(x)_j =
\begin{cases}
\left\{\sign(x_j) \right\} & {\textnormal{if }}  x_j \neq 0, \\
[-1,1] & \textnormal{if }  {x}_{j} = 0,
\end{cases}
\end{align}
and the sub-differential $\partial \norm{\cdot}$ of the Euclidean norm is
\begin{align}\label{eq:sub-differential_2}
\partial \norm{\cdot}(x)=
\begin{cases}
\left\{ \frac{x}{\norm{x}} \right\} & \textnormal{if }  x \neq 0, \\
\mathcal{B} & \textnormal{if }  x = 0.
\end{cases}
\end{align}


\section{Convex optimization reminder}

We first recall the necessary tools for building screening rules, namely the Fermat's first order optimality condition (also called Fermat's rule) and the characterization of the sub-differential of a norm by means of its dual norm.

\begin{proposition}[Fermat's rule]
(\citet[Prop. 26.1]{Bauschke_Combettes11})
For any convex function $ f: \bbR^d \to \bbR,$
 \begin{equation} \label{th:Fermat_rule}
  x^\star \in \argmin_{x \in \bbR^d} f(x) \Longleftrightarrow 0 \in \partial f (x^\star).
 \end{equation}
\end{proposition}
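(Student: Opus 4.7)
The plan is to read the equivalence directly off the definition of the sub-differential by specializing to the sub-gradient $z = 0$; no further machinery is needed. Indeed, the sub-differential inequality $f(y) \geq f(x^\star) + z^\top(y - x^\star)$ evaluated at $z = 0$ collapses to $f(y) \geq f(x^\star)$, which is precisely the condition that $x^\star$ is a global minimizer of $f$.

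Unwinding this in both directions: for the implication $0 \in \partial f(x^\star) \Rightarrow x^\star \in \argmin f$, I would simply substitute $z = 0$ into the defining inequality and read off $f(y) \geq f(x^\star)$ for every $y \in \bbR^d$. For the converse, assuming $x^\star \in \argmin_{x \in \bbR^d} f(x)$, the inequality $f(y) \geq f(x^\star)$ valid for all $y$ can be rewritten as $f(y) \geq f(x^\star) + 0^\top(y - x^\star)$, which is exactly the sub-differential membership condition certifying $0 \in \partial f(x^\star)$.

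There is no real obstacle here; the argument is a one-line unwrapping of the definition. It is worth remarking that convexity of $f$ plays no role in the biconditional itself — the sub-differential is defined by a global first-order inequality, so Fermat's rule in this unconstrained form is essentially tautological once $\partial f$ is defined. The convexity hypothesis enters only implicitly to guarantee that $\partial f(x^\star)$ is nonempty at interior points of the domain (via standard results such as Moreau--Rockafellar), which is what makes the criterion genuinely useful for deriving the dual feasibility conditions invoked later in the paper.
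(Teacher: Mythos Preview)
Your argument is correct: the equivalence follows immediately by specializing the defining inequality of the sub-differential to $z = 0$, and your remark that convexity is only needed for nonemptiness of $\partial f$ is apt. The paper does not supply its own proof of this proposition --- it simply quotes the result with a citation to \citet[Prop.~26.1]{Bauschke_Combettes11} --- so there is nothing to compare against; your one-line unwrapping is the standard proof one would find in that reference.
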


\begin{proposition}
(\citet[Prop. 1.2]{Bach_Jenatton_Mairal_Obozinski12})
The sub-differential of the norm $\Omega$ at x, denoted $\partial \Omega(x)$, is given by
 \begin{equation} \label{eq:sub-differential_norm} \hspace*{-0.49cm}
  \begin{cases}
   \{z \in \bbR^d: \Omega^D(z) \leq 1 \} = \mathcal{B}_{\Omega^D}
   &\text{ if } x = 0, \\
   \{z \in \bbR^d: \Omega^D(z) = 1 \text{ and } z^\top x = \Omega(x) \}
   &\text{ otherwise}.
  \end{cases}
 \end{equation}
\end{proposition}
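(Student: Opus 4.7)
The plan is to work directly from the subgradient inequality $\Omega(y) \geq \Omega(x) + z^\top(y - x)$ for all $y \in \bbR^d$ and to translate between this inequality and statements about the dual norm using the defining identity $\Omega^D(z) = \sup_{v \in \mathcal{B}_\Omega} v^\top z$. Both branches of the claim will reduce to two standard facts about norms: positive homogeneity $\Omega(\alpha x) = \alpha\,\Omega(x)$ for $\alpha \geq 0$, and the duality inequality $z^\top v \leq \Omega^D(z)\,\Omega(v)$.

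First I would handle the case $x = 0$, where the subgradient inequality collapses to $\Omega(y) \geq z^\top y$ for every $y$. This is equivalent to $\Omega^D(z) \leq 1$: the reverse direction uses the duality inequality directly, while the forward direction follows by rescaling $y \ne 0$ to $y/\Omega(y) \in \mathcal{B}_\Omega$ and taking the supremum. This immediately produces the first branch of the proposition.

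For $x \neq 0$, I would split the equivalence into two implications. For the forward one I would first extract the condition $z^\top x = \Omega(x)$ by testing the subgradient inequality at two well-chosen points: $y = 0$ gives $z^\top x \geq \Omega(x)$, and $y = 2x$, combined with positive homogeneity $\Omega(2x) = 2\Omega(x)$, gives the reverse inequality. Substituting $z^\top x = \Omega(x)$ back into the subgradient inequality for a general $y$ reduces it to $\Omega(y) \geq z^\top y$, which by the previous case gives $\Omega^D(z) \leq 1$; evaluating the dual norm at $v = x/\Omega(x) \in \mathcal{B}_\Omega$ delivers the matching lower bound $\Omega^D(z) \geq 1$, hence equality. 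The converse direction is essentially a one-liner: from $\Omega^D(z) = 1$ we have $z^\top y \leq \Omega(y)$ for every $y$, and adding and subtracting $z^\top x = \Omega(x)$ on the right turns this into the required subgradient inequality.

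There is no real obstacle in this argument; the only care needed is to carry out the $x \neq 0$ case in the correct order, namely first extracting $z^\top x = \Omega(x)$ via homogeneity and only then reducing to the $x = 0$ case to obtain the dual-norm bound, rather than attempting to derive both conclusions at once.
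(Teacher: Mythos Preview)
Your argument is correct and is essentially the standard proof of this classical fact. Note, however, that the paper does not supply its own proof here: the proposition is simply quoted from \citet[Prop.~1.2]{Bach_Jenatton_Mairal_Obozinski12}, so there is nothing to compare against beyond observing that your derivation is the usual one.
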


\section{\SGL regression}
We are interested in solving an estimation problem with penalty governed by $\Omega$, a sparsity inducing norm and a parameter $\lambda>0$ trading-off between data-fitting and sparsity. The primal problem reads:
\begin{equation}
 \label{eq:general_primal_problem} \!\!
 \tbeta {\lambda, \Omega} \in \argmin_{\beta \in \bbR^{p}}
\frac{1}{2} \norm{y - X\beta}^{2} + \lambda \Omega(\beta) : = P_{\lambda,\Omega}(\beta).
\end{equation}
A dual formulation (see \citet[Th.~3.3.5]{Borwein_Lewis06}) of
\eqref{eq:general_primal_problem} is given by
\begin{equation} \label{eq:general_dual_problem}
\ttheta {\lambda, \Omega} = \argmax_{\theta \in \dualomega}
\frac{1}{2} \norm{y}^{2} - \frac{\lambda^2}{2}
\norm{\theta - \frac{y}{\lambda}}^{2} : = D_{\lambda}(\theta),
\end{equation}
where $\dualomega = \{\theta \in \bbR^n: \Omega^D(X^{\top} \theta) \leq 1\}$.

Moreover, Fermat's rule reads:
\begin{align}
\lambda \ttheta {\lambda, \Omega} &= y - X \tbeta{\lambda, \Omega}
\quad\textbf{ (link-equation) }, \label{eq:link_equation}\\
\!\!\! {X}^{\top} \ttheta {\lambda, \Omega} &\in
\partial \Omega(\tbeta {\lambda,\Omega})
\textbf{ (sub-differential inclusion)}
\label{eq:sub-differential_inclusion}.
\end{align}

\begin{remark}[Dual uniqueness]
As for the Lasso problem, the dual solution $\ttheta{\lambda, \Omega}$
is unique, while the primal solution $\tbeta{\lambda, \Omega}$ might not be.
Indeed, the dual formulation \eqref{eq:general_dual_problem} is equivalent to
$\ttheta{\lambda, \Omega} = {\argmin}_{\theta \in \dualomega} \,
\norm{\theta - y/\lambda}$
and so $\ttheta{\lambda, \Omega} = \Pi_{\dualomega}(y/\lambda)$ is the
projection of $y/\lambda$ over the dual feasible (closed and convex) set $\dualomega$.
\end{remark}

\begin{remark}[Critical parameter: $\lambda_{\max}$]
There is a critical value $\lambda_{\max}$ such that $0$ is a primal solution of \eqref{eq:general_primal_problem} for all $\lambda \geq \lambda_{\max}$. Indeed, the Fermat's rule states:
\begin{align*}
0 &\in \argmin_{\beta \in \bbR^p} \frac{1}{2} \| y - X\beta\|^{2} +
\lambda \Omega(\beta) \\
&\overset{\eqref{th:Fermat_rule}}{\Longleftrightarrow}
0 \in \{X^\top y \} + \lambda \partial \Omega(0)
\overset{\eqref{eq:sub-differential_norm}}{\Longleftrightarrow}
\Omega^D (X^\top y) \leq \lambda.
\end{align*}
Hence, the critical parameter is given by:
\begin{equation}
 \label{eq:general_lambda_max}
 \lambda_{\max} := \Omega^D(X^\top y).
\end{equation}
\end{remark}

\begin{figure*}[!ht]
\subfigure[Lasso dual ball $\mathcal{B}_{\Omega^D}$ for  $\Omega^{D}(\theta)=\|\theta\|_\infty$.]
{\includegraphics[width=0.32\linewidth]{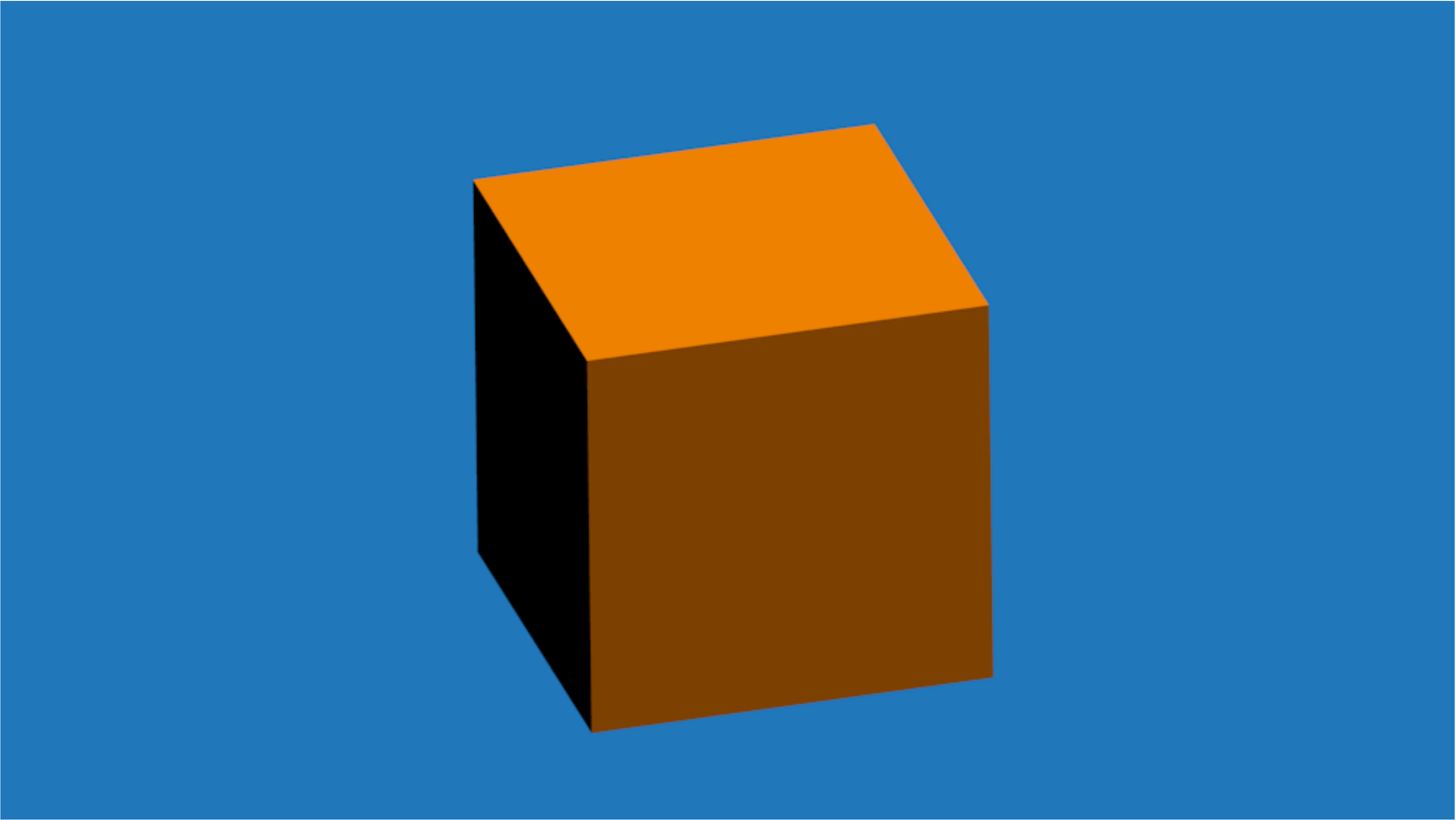}}
\hfill
\subfigure[Group-Lasso dual ball $\mathcal{B}_{\Omega^D}$ for $\Omega^{D}(\theta)=\max(\sqrt{\theta_1^2+\theta_2^2}, |\theta_3|)$.]
{\includegraphics[width=0.32\linewidth]{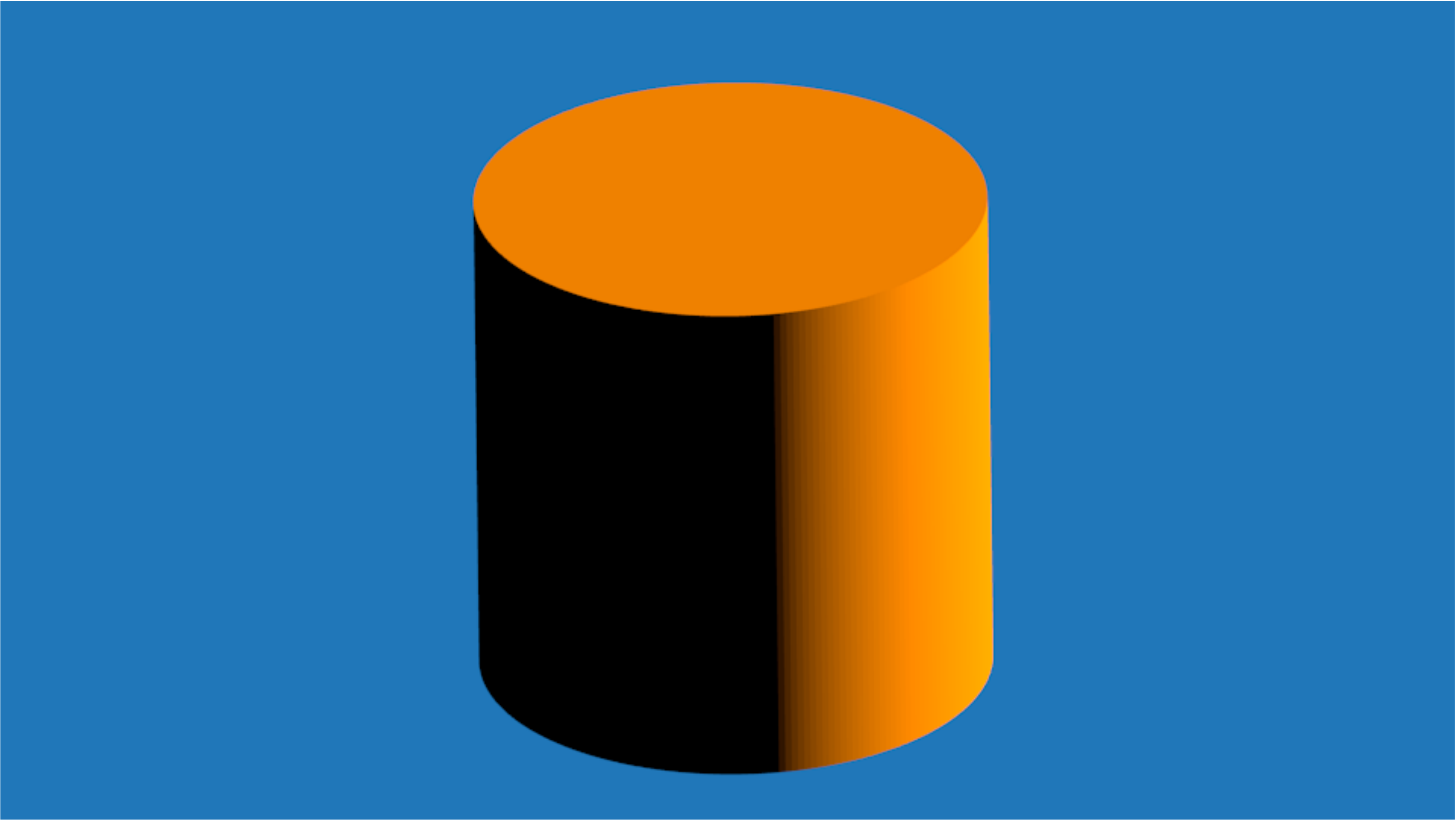}}
\hfill
\subfigure[\SGL dual ball $\mathcal{B}_{\Omega^D}=\big\{\theta: \forall g \in \mathcal{G},
\|\ST{\tau}(\theta_g)\| \leq (1 - \tau)w_g \big\}$.]{\includegraphics[width=0.32\linewidth]{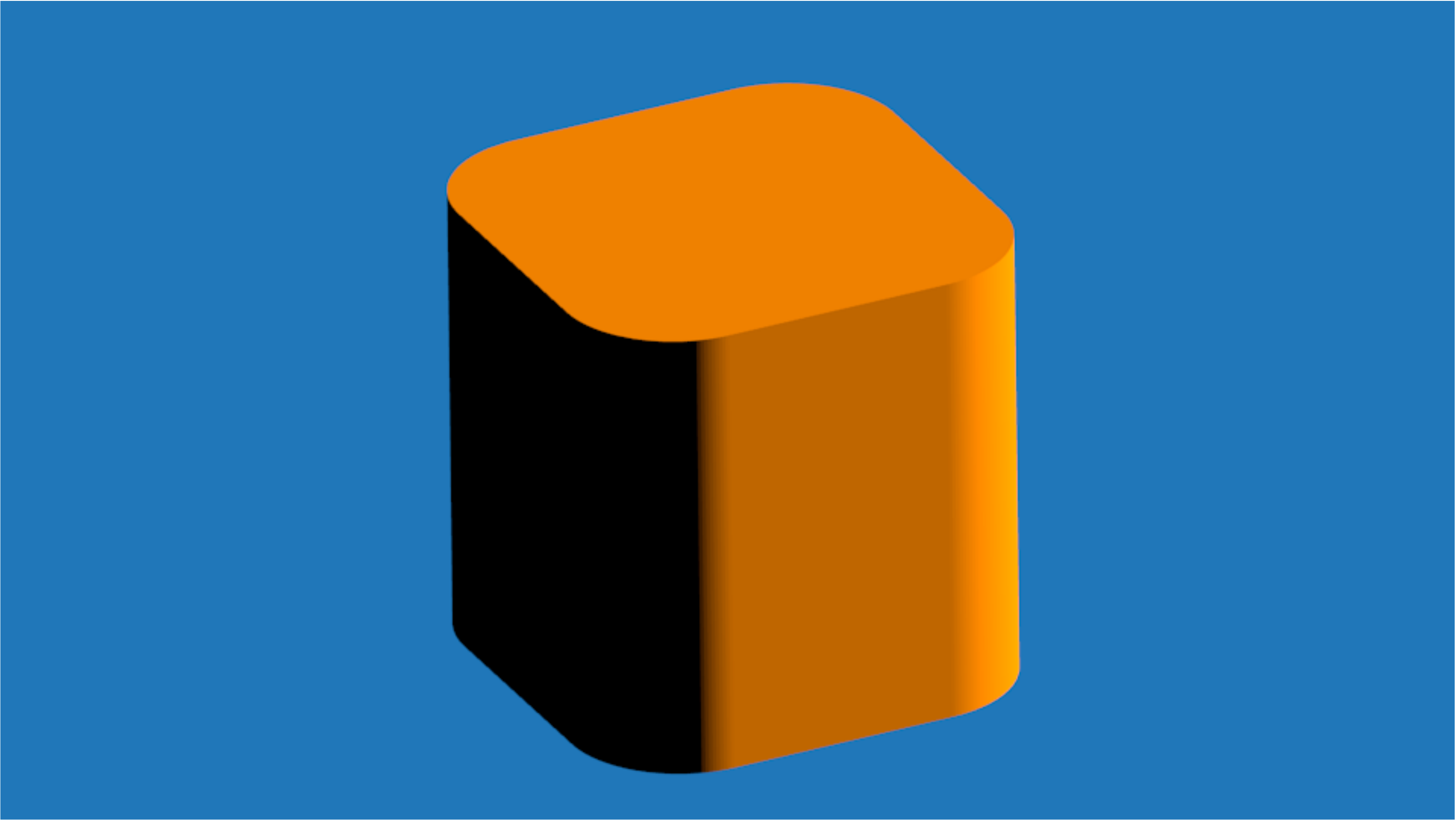}}
\caption{Lasso, Group-Lasso and \SGL dual unit balls $\mathcal{B}_{\Omega^D}=\{\theta : \Omega^{D}(\theta) \leq 1 \}$, for the case of $\mathcal{G}=\{ \{1,2\},\{3\} \}$ (\ie $g_1=\{1,2\},g_2=\{3\}$), $n=p=3$, $w_{g_1}=w_{g_2}=1$ and $\tau=1/2$.}
\label{fig:dual_balls}
\end{figure*}
In what follows, we are only interested in the \SGL norm $\Omega=\Omega_{\tau,w}$ defined by
\begin{equation} \label{eq:sgl_norm}
\normsgl(\beta) := \tau \|\beta\|_{1} + (1 - \tau) \sum_{g\in \mathcal{G}} w_g \norm{\beta_{g}},
\end{equation}
for $\tau \in [0,1], w = (w_g)_{g\in \mathcal{G}}$ with $w_g \geq 0$ for all $g\in \mathcal{G}$. The case where $w_g=0$ for some $g \in \mathcal{G}$ together with $\tau=0$ is excluded ($\normsgl$ is not a norm in such a case).

For $\lambda>0$ and $\tau \in [0,1]$, the \SGL estimator denoted by
$\tbeta{\lambda,\tau, w}$ is defined as a minimizer of the primal objective $P_{\lambda,\tau,w}:= P_{\lambda,\Omega_{\tau,w}}$ defined by \eqref{eq:general_primal_problem}, with the norm $\normsgl$. Similarly $\ttheta{\lambda, \tau, w}$ stands for the maximizer of the dual objective $D_{\lambda}$ over $\dualomegasgl$ in \eqref{eq:general_dual_problem}.

\begin{remark} We recover the Lasso \cite{Tibshirani96} if $\tau=1$, and the group-Lasso \cite{Yuan_Lin06} if $\tau = 0$.
\end{remark}

\section{GAP safe rule for the \SGL}

The safe rule we propose here is an extension to the \SGL of the GAP safe rules introduced for the Lasso and the Group-Lasso \citep{Fercoq_Gramfort_Salmon15,Ndiaye_Fercoq_Gramfort_Salmon15}. For the \SGL, the geometry of the dual feasible set $\dualomegasgl$ is more complex (see Figure~\ref{fig:dual_balls}). As a consequence, additional geometrical insights are needed to derive efficient safe rules.

\subsection{Description of the screening rules}

Safe screening rules exploit the known sparsity of the solutions of problems such as \eqref{eq:general_primal_problem}. They discard inactive features whose coefficients are guaranteed to be zero for optimal solutions. Ignoring ``irrelevant'' features in the optimization can significantly reduce computation time.

The \SGL beneficiates from two levels of screening: the safe rules can detect both group-wise zeros in the vector $\tbeta {\lambda, \tau, w}$ and coordinate-wise zeros in the remaining groups. We now derive such properties.
\begin{proposition}[Theoretical screening rules]
\label{prop:theoretical_screening_rules}
The two levels of screening rules for the \SGL are: \\
\textbf{Feature level screening: }
\begin{equation*}
\forall j \in g, \,
|X_j^\top \ttheta{\lambda,\tau,w}| < \tau
\Longrightarrow \tbeta{\lambda,\tau,w}_j = 0.
\end{equation*}
\textbf{Group level screening: }
\begin{equation*}
\forall g \in \mathcal{G}, \,
\|\mathcal{S}_\tau(X_g^\top \ttheta{\lambda,\tau,w})\|
< (1-\tau) w_g \Longrightarrow \tbeta{\lambda,\tau,w}_g = 0.
\end{equation*}
\end{proposition}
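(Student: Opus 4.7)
The plan is to derive both screening rules from Fermat's sub-differential inclusion \eqref{eq:sub-differential_inclusion} applied to the decomposition of the SGL subdifferential, then argue by contrapositive. Since $\Omega_{\tau,w}$ is a sum of the $\ell_1$ norm and of the group norms $\beta \mapsto \|\beta_g\|$ (each with nonnegative weight), linearity of the sub-differential gives
\begin{equation*}
\partial \Omega_{\tau,w}(\beta) = \tau\,\partial \|\cdot\|_1(\beta) + (1-\tau)\sum_{g\in\mathcal{G}} w_g\,\partial(\beta\mapsto\|\beta_g\|)(\beta),
\end{equation*}
so at the optimum there exist $s\in\partial\|\cdot\|_1(\tilde\beta)$ and, for each $g$, $z_g\in\partial\|\cdot\|(\tilde\beta_g)$ such that $X_g^\top\tilde\theta = \tau\,s_g + (1-\tau)w_g\,z_g$ for every group $g$.

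For the feature-level rule, fix $j\in g$ and assume $\tilde\beta_j\neq 0$. Then $s_j=\mathrm{sign}(\tilde\beta_j)$ by \eqref{eq:sub-differential_1}, and since $\tilde\beta_g\neq 0$ we have $z_g=\tilde\beta_g/\|\tilde\beta_g\|$ by \eqref{eq:sub-differential_2}, whose $j$-th coordinate has the same sign as $\tilde\beta_j$. The two contributions therefore add constructively in absolute value:
\begin{equation*}
|X_j^\top\tilde\theta| = \tau + (1-\tau)w_g\,\frac{|[\tilde\beta_g]_j|}{\|\tilde\beta_g\|} \geq \tau,
\end{equation*}
so the contrapositive $|X_j^\top\tilde\theta|<\tau \Longrightarrow \tilde\beta_j=0$ follows.

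For the group-level rule, fix $g$ and assume $\tilde\beta_g\neq 0$. Using $\mathcal{S}_\tau=\mathrm{Id}-\Pi_{\tau\mathcal{B}_\infty}$, I compute $\mathcal{S}_\tau(X_g^\top\tilde\theta)$ coordinate by coordinate. For indices $j\in g$ with $[\tilde\beta_g]_j=0$, the two terms in $[X_g^\top\tilde\theta]_j=\tau s_j + (1-\tau)w_g[z_g]_j$ satisfy $|s_j|\leq 1$ and $[z_g]_j=[\tilde\beta_g]_j/\|\tilde\beta_g\|=0$, hence $[X_g^\top\tilde\theta]_j\in[-\tau,\tau]$ and soft-thresholding annihilates that coordinate. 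For indices $j$ with $[\tilde\beta_g]_j\neq 0$, the same sign-alignment as above gives $|[X_g^\top\tilde\theta]_j|=\tau+(1-\tau)w_g|[\tilde\beta_g]_j|/\|\tilde\beta_g\|$, so the soft-thresholded coordinate equals $(1-\tau)w_g[\tilde\beta_g]_j/\|\tilde\beta_g\|$ in absolute value. Summing squares,
\begin{equation*}
\|\mathcal{S}_\tau(X_g^\top\tilde\theta)\|^2 = (1-\tau)^2 w_g^2\,\frac{\|\tilde\beta_g\|^2}{\|\tilde\beta_g\|^2} = (1-\tau)^2 w_g^2,
\end{equation*}
so $\|\mathcal{S}_\tau(X_g^\top\tilde\theta)\|\geq(1-\tau)w_g$ (with equality), and the contrapositive gives the claim.

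The step I expect to require the most care is the group-level computation, because it needs a sign-alignment argument between the $\ell_1$ and $\ell_2$ subgradients coordinate by coordinate, together with a clean treatment of the indices where $[\tilde\beta_g]_j=0$ while $\tilde\beta_g\neq 0$. Everything else is a direct application of Fermat's rule and the characterizations of $\partial\|\cdot\|_1$ and $\partial\|\cdot\|$ recalled in \eqref{eq:sub-differential_1}--\eqref{eq:sub-differential_2}.
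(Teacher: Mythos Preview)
Your proof is correct and follows the same route as the paper: Fermat's sub-differential inclusion, the additive structure of $\partial\Omega_{\tau,w}$, and contraposition. The only cosmetic difference is that for the group rule the paper matches the sub-differential decomposition against the identity $\mathcal{S}_\tau=\mathrm{Id}-\Pi_{\tau\mathcal{B}_\infty}$ to read off $\mathcal{S}_\tau(X_g^\top\hat\theta)=(1-\tau)w_g\,\hat\beta_g/\|\hat\beta_g\|$ in one step, whereas you verify this equality coordinate by coordinate.
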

\begin{proof}
The proof is given in the Appendix; see also \cite{Wang_Ye14}.
\end{proof}
\begin{remark}
The first rule is with a strict inequality, but it can be relaxed to a
non-strict inequality when $\tau \neq 1$.
\end{remark}

Note that the screening rules above are theoretical as stated since $\ttheta{\lambda,\tau,w}$ is inherently unknown. To get useful screening rules one needs a \textbf{safe region}, \ie a set that contains the optimal dual solution $\ttheta{\lambda,\tau,w}$.  When choosing a ball $\mathcal{B}(\theta_c,r)$ with radius $r$ and centered at $\theta_c$ as a safe region, we call it a safe sphere, following \citet{ElGhaoui_Viallon_Rabbani12}. A safe ball is all the more useful that $r$ is small and $\theta_c$ close to $\ttheta{\lambda,\tau,w}$. The safe rules for the \SGL reads: for any group $g$ in $\mathcal{G}$ and any safe ball $\mathcal{B}(\theta_c,r)$

\textbf{Group level safe screening rule:}
\begin{equation}
\label{eq:safe_rule_group}
 \max_{\theta \in \mathcal{B}(\theta_c,r)} \|\mathcal{S}_\tau (X_g^\top
\theta)\| <
 (1 - \tau) w_g \Rightarrow \tbeta{\lambda,\tau,w}_g  = 0.
\end{equation}
\textbf{Feature level safe screening rule: }
\begin{equation}
\label{eq:safe_rule_feature} \forall j \in g,
\max_{\theta \in \mathcal{B}(\theta_c,r)} |X_j^\top \theta| < \tau
\Rightarrow \tbeta{\lambda, \tau, w}_j=0.
\end{equation}
For screening variables, we rely on the upper-bounds on $\max_{\theta \in \mathcal{B}(\theta_c,
r)} |X_j^\top \theta|$ and  $\max_{\theta \in \mathcal{B}(\theta_c,r)}
\|\mathcal{S}_\tau (X_g^\top \theta)\|$ presented below (see also \citep{Wang_Ye14}). A new and shorter proof is given in the Appendix.

\begin{proposition}\label{prop:screening_bound}
For all group $g \in \mathcal{G}$ and $j \in g$,
\begin{equation}\label{eq:max_st_features}
\max_{\theta \in \mathcal{B}(\theta_c, r)} |X_j^\top \theta| \leq
|X_j^\top \theta_c| + r \|X_j\|.
\end{equation}
$\max_{\theta \in \mathcal{B}(\theta_c, r)} \| \ST{\tau} (X_g^\top \theta) \|$
is upper bounded by
\begin{equation}\label{eq:max_ST}
\begin{cases}
 \|\ST{\tau} (X_g^\top \theta_c)\| + r \|X_g\| &\text{ if }
 \|X_g^\top \theta_c\|_{\infty} > \tau, \\
 (\|X_g^\top \theta_c\|_{\infty} + r \|X_g\| - \tau)_+ &\text{ otherwise}.
\end{cases}
\end{equation}
\end{proposition}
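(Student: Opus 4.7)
The plan is to establish the two bounds separately. For~\eqref{eq:max_st_features}, the argument is routine: writing any $\theta \in \mathcal{B}(\theta_c, r)$ as $\theta = \theta_c + h$ with $\|h\| \leq r$, the triangle inequality gives $|X_j^\top \theta| \leq |X_j^\top \theta_c| + |X_j^\top h|$, and the second term is bounded by $r\|X_j\|$ via Cauchy--Schwarz.

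For~\eqref{eq:max_ST}, set $c := X_g^\top \theta_c$ and note that $\{X_g^\top \theta : \theta \in \mathcal{B}(\theta_c, r)\}$ sits inside the Euclidean ball $\mathcal{B}(c, r\|X_g\|)$, where $\|X_g\|$ denotes the operator norm. In the first case $\|c\|_\infty > \tau$, the plan is to exploit that $\ST{\tau} = \Id - \Pi_{\tau \mathcal{B}_\infty}$ is $1$-Lipschitz (being the proximal operator of $\tau\|\cdot\|_1$, equivalently $\Id$ minus a projection), so that the triangle inequality yields
\[
\|\ST{\tau}(X_g^\top \theta)\| \leq \|\ST{\tau}(c)\| + \|X_g^\top \theta - c\| \leq \|\ST{\tau}(c)\| + r\|X_g\|.
\]

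The subtle case is $\|c\|_\infty \leq \tau$, because the primal Lipschitz argument only delivers $r\|X_g\|$ and misses the slack $\tau - \|c\|_\infty$ that appears in $(\|c\|_\infty + r\|X_g\| - \tau)_+$. My plan is to pass to a dual representation: Moreau's identity gives $\|\ST{\tau}(v)\| = d(v, \tau \mathcal{B}_\infty)$, and dualising $\|v-w\| = \max_{\|u\|\leq 1} u^\top(v-w)$ with a min--max exchange yields $d(v, \tau \mathcal{B}_\infty) = \max_{\|u\|\leq 1}(u^\top v - \tau \|u\|_1)$, since the support function of $\tau\mathcal{B}_\infty$ is $\tau\|\cdot\|_1$. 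Substituting $v = X_g^\top \theta$ and swapping the two maxima (both over compact convex sets, so exchange is trivial) gives
\[
\max_{\theta \in \mathcal{B}(\theta_c, r)} \|\ST{\tau}(X_g^\top \theta)\| = \max_{\|u\|\leq 1}\bigl(u^\top c + r\|X_g u\| - \tau \|u\|_1\bigr).
\]
Bounding $\|X_g u\| \leq \|X_g\|\|u\|$ and $u^\top c \leq \|c\|_\infty \|u\|_1$ (H\"older) reduces the bracket to $(\|c\|_\infty - \tau)\|u\|_1 + r\|X_g\|\|u\|$. The decisive step, available only in case~2, is that $\|c\|_\infty - \tau \leq 0$ together with $\|u\| \leq \|u\|_1$ allows one to replace $\|u\|_1$ by $\|u\|$, so the bracket is at most $(\|c\|_\infty + r\|X_g\| - \tau)\|u\| \leq (\|c\|_\infty + r\|X_g\| - \tau)_+$, using $\|u\| \leq 1$ and taking $u = 0$ when the coefficient is negative.

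The main obstacle is precisely this case~2 sharpening: one has to notice that the primal Lipschitz viewpoint cannot exploit $\tau - \|c\|_\infty$, whereas the support-function viewpoint isolates this slack as the coefficient of $\|u\|_1$ and then absorbs it into the $\|u\|$ term via the trivial norm inequality $\|u\|_2 \leq \|u\|_1$. Everything else (operator-norm control on $X_g$, H\"older, Cauchy--Schwarz) is bookkeeping.
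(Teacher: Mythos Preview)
Your argument is correct. The feature-level bound and the case $\|c\|_\infty > \tau$ are handled essentially as in the paper (you invoke the $1$-Lipschitzness of the prox; the paper uses the nearest-point property of the projection, which amounts to the same thing). The genuine difference is in the case $\|c\|_\infty \leq \tau$. The paper works geometrically: it enlarges $\{X_g^\top \theta : \theta \in \mathcal{B}(\theta_c,r)\}$ to the ball $\mathcal{B}(\xi_c,\tilde r)$ with $\xi_c = X_g^\top\theta_c$, $\tilde r = r\|X_g\|$, and then bounds $\max_{\xi \in \mathcal{B}(\xi_c,\tilde r)}\|\xi - \Pi_{\tau\mathcal{B}_\infty}(\xi)\|$ by an explicit case analysis on whether $\mathcal{B}(\xi_c,\tilde r)$ is contained in, intersects, or lies outside $\tau\mathcal{B}_\infty$, constructing in each sub-case a point on $\partial\mathcal{B}(\xi_c,\tilde r)$ that attains the bound. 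You instead dualise via $\|\ST{\tau}(v)\| = d(v,\tau\mathcal{B}_\infty) = \max_{\|u\|\leq 1}(u^\top v - \tau\|u\|_1)$, swap the two maxima to get the exact identity $\max_{\theta}\|\ST{\tau}(X_g^\top\theta)\| = \max_{\|u\|\leq 1}(u^\top c + r\|X_g u\| - \tau\|u\|_1)$, and then bound with H\"older and $\|u\|\leq\|u\|_1$. Your route is shorter and avoids any picture; the paper's route, by actually maximising over $\mathcal{B}(\xi_c,\tilde r)$, additionally shows that the stated upper bounds are tight for that enlarged problem (information you do not recover, but also do not need for the proposition as stated).
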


\begin{remark}
Note that other kinds of safe regions can be use, for instance domes \citet{ElGhaoui_Viallon_Rabbani12}, but we only focus on safe sphere for simplicity. The experiments in \citep{Fercoq_Gramfort_Salmon15} have shown limited speed-ups when substituting domes to spheres (with same diameters).
\end{remark}

Assume one has found a safe sphere $\mathcal{B}(\theta_c, r)$, the safe rules given by \eqref{eq:safe_rule_group} and \eqref{eq:safe_rule_feature} read:
\begin{theorem}[Safe rules for the \SGL]
\textbf{Group level safe screening: }
\begin{equation*}
 \forall g \in \mathcal{G}, \text{ if } \mathcal{T}_{g} < (1 - \tau) w_g ,
\text{ then } \tbeta{\lambda, \tau, w}_g = 0, \text{ where }
\end{equation*}
\begin{equation*}
\mathcal{T}_{g} :=
\begin{cases}
 \|\ST{\tau} (X_g^\top \theta_c)\| + r \|X_g\| &\text{ if }
 \|X_g^\top \theta_c\|_{\infty} > \tau, \\
 (\|X_g^\top \theta_c\|_{\infty} + r \|X_g\| - \tau)_+ &\text{ otherwise }.
\end{cases}
\end{equation*}
\textbf{Feature level safe screening: }
 \begin{equation*}
 \forall g \in \mathcal{G},
  \forall j \in g: \text{ if }
  |X_j^\top \theta_c| + r \|X_j\| < \tau,
  \text{ then } \tbeta{\lambda, \tau, w}_j = 0.
 \end{equation*}
\end{theorem}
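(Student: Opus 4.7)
The plan is to observe that this theorem is essentially a routine combination of the two ingredients already assembled: the theoretical screening rules in Proposition~\ref{prop:theoretical_screening_rules} (which screen using the unknown exact dual optimum $\ttheta{\lambda,\tau,w}$) and the upper bounds in Proposition~\ref{prop:screening_bound} (which control $|X_j^\top\theta|$ and $\|\ST{\tau}(X_g^\top\theta)\|$ uniformly over the ball $\mathcal{B}(\theta_c,r)$). Since by hypothesis $\mathcal{B}(\theta_c,r)$ is a safe sphere, i.e.\ $\ttheta{\lambda,\tau,w}\in\mathcal{B}(\theta_c,r)$, the bounds of Proposition~\ref{prop:screening_bound} apply a fortiori at $\theta=\ttheta{\lambda,\tau,w}$.

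For the feature level rule, I would fix $g\in\mathcal{G}$ and $j\in g$, and note that from \eqref{eq:max_st_features} together with the safe sphere assumption one obtains
\[
|X_j^\top \ttheta{\lambda,\tau,w}| \;\leq\; \max_{\theta\in\mathcal{B}(\theta_c,r)} |X_j^\top\theta| \;\leq\; |X_j^\top\theta_c| + r\|X_j\|.
\]
If the right-hand side is strictly less than $\tau$, the feature-level condition of Proposition~\ref{prop:theoretical_screening_rules} is triggered, giving $\tbeta{\lambda,\tau,w}_j=0$.

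For the group level rule, I would do the analogous computation with \eqref{eq:max_ST}: whichever of the two cases in the piecewise bound holds, it yields $\|\ST{\tau}(X_g^\top \ttheta{\lambda,\tau,w})\|\leq \mathcal{T}_g$, so if $\mathcal{T}_g<(1-\tau)w_g$ then the group-level condition of Proposition~\ref{prop:theoretical_screening_rules} fires and $\tbeta{\lambda,\tau,w}_g=0$. No obstacle is anticipated since all the analytic work sits in Propositions~\ref{prop:theoretical_screening_rules} and~\ref{prop:screening_bound}; the only thing to be mildly careful about is propagating the strict inequality through the case split in \eqref{eq:max_ST} (the $(\cdot)_+$ branch can only strengthen the bound, so a strict inequality on $\mathcal{T}_g$ still implies a strict inequality on $\|\ST{\tau}(X_g^\top \ttheta{\lambda,\tau,w})\|$).
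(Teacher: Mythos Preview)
Your proposal is correct and matches the paper's own proof, which is the one-line observation that combining \eqref{eq:safe_rule_group} with \eqref{eq:max_ST} gives the group rule and combining \eqref{eq:safe_rule_feature} with \eqref{eq:max_st_features} gives the feature rule. Your version just unpacks this slightly more by routing through Proposition~\ref{prop:theoretical_screening_rules} and the containment $\ttheta{\lambda,\tau,w}\in\mathcal{B}(\theta_c,r)$, which is exactly what underlies \eqref{eq:safe_rule_group}--\eqref{eq:safe_rule_feature}.
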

\begin{proof}
Combining \eqref{eq:safe_rule_group} with \eqref{eq:max_ST} yields the group level safe screening. Combining \eqref{eq:safe_rule_feature} with \eqref{eq:max_st_features} yields the feature level safe screening.
\end{proof}

The screening rules above show us which coordinates or group of
coordinates can be safely set to zero. As a consequence, we can remove the corresponding features from the design matrix $X$ during the optimization process. While standard algorithms solve the problem \eqref{eq:general_primal_problem} scanning all variables, only active ones \ie non screened-out variables (\lcf Section~\ref{subesec:active_set} for details) need to be considered with safe screening strategies. This leads to significant computational speed-ups, especially with a coordinate descent algorithm for which it is natural to ignore features (see Algorithm \ref{alg:ista_bc_safe}). Now, let us show how to compute efficiently the radius $r$ and the dual feasible point $\theta$ for the \SGL, using the duality gap.

\subsection{GAP Safe sphere}

\subsubsection{Computation of the radius}

With a dual feasible point $\theta \in \dualomegasgl$ and a primal
vector $\beta\in \bbR^{p}$ at hand, let us construct a safe
sphere centered on $\theta$, with radius 
obtained thanks to dual gap computations.

\begin{theorem}[Safe radius]\label{th:GAP_Safe_sphere}
For any $\theta \in \dualomegasgl$ and any $\beta\in \bbR^p$, one has
$ \ttheta{\lambda,\tau, w} \in
\mathcal{B}\left(\theta,\bestR[\lambda,\tau]{\beta}{\theta}\right),$ for
\begin{align*}
\bestR[\lambda,\tau]{\beta}{\theta}&=\sqrt{\frac{2(P_{\lambda,\tau,w}(\beta)-
D_{\lambda}(\theta))}{\lambda^2}},
\end{align*}
\ie the aforementioned ball is a safe
region for the \SGL problem.
\end{theorem}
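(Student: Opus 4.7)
The plan is to exploit the $\lambda^2$-strong concavity of the dual objective. Reading the explicit form $D_{\lambda}(\theta) = \tfrac{1}{2}\|y\|^2 - \tfrac{\lambda^2}{2}\|\theta - y/\lambda\|^2$, one sees that $D_\lambda$ is a quadratic with Hessian $-\lambda^2 I_n$, so it is $\lambda^2$-strongly concave on $\bbR^n$ and in particular on the convex feasible set $\dualomegasgl$. This gives, for every pair of points $\theta,\theta' \in \bbR^n$, the inequality
\begin{equation*}
D_\lambda(\theta) \le D_\lambda(\theta') + \nabla D_\lambda(\theta')^\top (\theta-\theta') - \frac{\lambda^2}{2}\|\theta-\theta'\|^2.
\end{equation*}

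Next, I would apply this at $\theta' = \ttheta{\lambda,\tau,w}$. Since $\ttheta{\lambda,\tau,w}$ maximizes $D_\lambda$ over the convex set $\dualomegasgl$, the first-order optimality condition for constrained maximization yields $\nabla D_\lambda(\ttheta{\lambda,\tau,w})^\top(\theta - \ttheta{\lambda,\tau,w}) \le 0$ for every $\theta \in \dualomegasgl$. Plugging this into the previous display kills the linear term and leaves, for any dual feasible $\theta$,
\begin{equation*}
D_\lambda(\theta) \le D_\lambda(\ttheta{\lambda,\tau,w}) - \frac{\lambda^2}{2}\|\theta - \ttheta{\lambda,\tau,w}\|^2.
\end{equation*}

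Finally, I would use weak duality $D_\lambda(\ttheta{\lambda,\tau,w}) \le P_{\lambda,\tau,w}(\beta)$, valid for any $\beta \in \bbR^p$, to replace the unknown optimal value by the primal value. Rearranging gives
\begin{equation*}
\|\theta - \ttheta{\lambda,\tau,w}\|^2 \le \frac{2\bigl(P_{\lambda,\tau,w}(\beta) - D_\lambda(\theta)\bigr)}{\lambda^2} = \bestR[\lambda,\tau]{\beta}{\theta}^2,
\end{equation*}
which is exactly the claim that $\ttheta{\lambda,\tau,w} \in \mathcal{B}(\theta, \bestR[\lambda,\tau]{\beta}{\theta})$. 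I do not anticipate a real obstacle: the whole argument is three lines of convex analysis and, crucially, uses only weak duality (no strong duality) and only that $\theta$ is dual feasible (not optimal), which is precisely what makes the bound usable in a screening loop where $\theta$ is produced on-the-fly from the current primal iterate.
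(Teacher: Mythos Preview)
Your proof is correct and follows essentially the same route as the paper: $\lambda^2$-strong concavity of $D_\lambda$, the first-order optimality condition at $\ttheta{\lambda,\tau,w}$ to drop the linear term, and weak duality to replace the optimal dual value by $P_{\lambda,\tau,w}(\beta)$. The only difference is cosmetic (you justify the strong concavity via the Hessian $-\lambda^2 I_n$, which the paper leaves implicit).
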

\begin{proof}
This results holds thanks to strong concavity of the dual objective. A complete proof is given in the Appendix.
\end{proof}

\subsubsection{Computation of the center}

In GAP safe screening rules, the screening test relies crucially on the ability to compute a vector that belongs to the dual feasible set. Following \citet{Bonnefoy_Emiya_Ralaivola_Gribonval15}, we leverage the primal/dual link-equation \eqref{eq:link_equation} to dynamically construct a dual point based on a current approximation $\beta_k$ of $\tbeta{\lambda, \tau, w}$. Note that here $\beta_k$ is the primal value at iteration $k$ obtained by an iterative algorithm. Starting from a current residual $\rho_k = y - X\beta_k$, one can create a dual feasible point by\footnote{We have used a simpler scaling w.r.t. \citet{Bonnefoy_Emiya_Ralaivola_Gribonval14} choice's (without noticing much difference): $\theta_k = s \rho_k$ where
$s = \min \left[
 \max \left(\frac{\rho_k^\top y }{\lambda \norm{\rho_k}^{2}},
	     \frac{-1}{\normsgl^D(X^\top \rho_k)}\right),
 \frac{1}{\normsgl^D(X^\top \rho_k)} \right]$.}
 choosing for all $k \in \bbN$:
\begin{equation}\label{eq:dual_feasible_point}
\theta_k = \frac{\rho_k}{\max(\lambda, \normsgl^D(X^\top \rho_k))}.
\end{equation}
We refer to $\mathcal{B}(\theta_k, \bestR[\lambda,\tau]{\beta_k}{\theta_k})$ as GAP safe spheres.

\begin{remark}
Recall that $\lambda \geq \lambda_{\max}$ yields $\tbeta {\lambda, \tau, w} = 0$, in which case $\rho := y - X \tbeta {\lambda, \tau, w} = y$ is the optimal residual and $y/\lambda_{\max}$ is the dual solution. Thus, as for getting $\lambda_{\max} = \normsgl^D(X^\top y)$, the scaling computation in \eqref{eq:dual_feasible_point} requires a dual norm evaluation.
\end{remark}

\subsection{Convergence of the active set}\label{subesec:active_set}
Let us recall the notion of converging safe regions introduced in \cite{Fercoq_Gramfort_Salmon15}.
\begin{definition}
Let $(\mathcal{R}_{k})_{k \in \bbN}$ be a sequence of closed convex sets in
$\bbR^n$ containing $\ttheta{\lambda, \tau, w}$. It is a converging sequence of
safe regions if the diameters of the sets converge to zero.
\end{definition}

The following proposition states that the sequence of dual feasible points obtained from \eqref{eq:dual_feasible_point} converges to the dual solution $\ttheta{\lambda,\tau,w}$ if $(\beta_k)_{k \in
\bbN}$ converges to an optimal primal solution $\tbeta{\lambda,\tau,w}$ (the proof is in the Appendix).
\begin{proposition} \label{prop:dual_convergence}
 If $\lim_{k \to \infty} \beta_k = \tbeta{\lambda, \tau, w} $, then
 $\lim_{k \to \infty} \theta_k = \ttheta{\lambda, \tau, w} $.
\end{proposition}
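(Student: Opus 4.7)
The plan is to exploit the link equation \eqref{eq:link_equation} together with continuity of all the operators involved in the definition \eqref{eq:dual_feasible_point} of $\theta_k$. Concretely, the strategy is: (i) pass $\beta_k \to \tbeta{\lambda,\tau,w}$ through the continuous map $\beta \mapsto y - X\beta$ to identify the limit of the residuals $\rho_k$; (ii) use continuity of $\normsgl^D$ and of the max to identify the limit of the scaling factor in the denominator of \eqref{eq:dual_feasible_point}; and (iii) conclude by the quotient.

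For step (i), linearity (hence continuity) of $\beta \mapsto X\beta$ gives $\rho_k = y - X\beta_k \to y - X\tbeta{\lambda,\tau,w}$. By the link equation \eqref{eq:link_equation}, this limit equals $\lambda \ttheta{\lambda,\tau,w}$. For step (ii), the dual norm $\normsgl^D$ is a norm, hence continuous, and $X^\top$ is continuous, so
\begin{equation*}
\normsgl^D(X^\top \rho_k) \longrightarrow \normsgl^D(X^\top (\lambda \ttheta{\lambda,\tau,w})) = \lambda \, \normsgl^D(X^\top \ttheta{\lambda,\tau,w}).
\end{equation*}
Since $\ttheta{\lambda,\tau,w} \in \dualomegasgl$, we have $\normsgl^D(X^\top \ttheta{\lambda,\tau,w}) \leq 1$, so this limit is at most $\lambda$. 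Continuity of $t \mapsto \max(\lambda,t)$ then yields $\max(\lambda, \normsgl^D(X^\top \rho_k)) \to \max(\lambda, \lambda \, \normsgl^D(X^\top \ttheta{\lambda,\tau,w})) = \lambda$.

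For step (iii), the denominator is bounded below by $\lambda>0$, so the quotient is continuous; combining (i) and (ii) gives
\begin{equation*}
\theta_k = \frac{\rho_k}{\max(\lambda, \normsgl^D(X^\top \rho_k))} \longrightarrow \frac{\lambda \, \ttheta{\lambda,\tau,w}}{\lambda} = \ttheta{\lambda,\tau,w},
\end{equation*}
as desired.

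There is no serious obstacle here: the only thing one must verify carefully is that $\normsgl^D(X^\top \ttheta{\lambda,\tau,w}) \leq 1$ (so that the denominator actually converges to $\lambda$ and not to something larger), which is precisely the definition of $\dualomegasgl$ and is guaranteed by the fact that $\ttheta{\lambda,\tau,w}$ is the dual optimum. Continuity of the \SGL dual norm is immediate from the general fact that any norm on $\bbR^n$ is Lipschitz.
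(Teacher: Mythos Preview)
Your proof is correct and follows essentially the same approach as the paper: both arguments identify the limit of $\rho_k$ via the link equation \eqref{eq:link_equation}, use dual feasibility of $\ttheta{\lambda,\tau,w}$ to show the scaling factor $\max(\lambda,\normsgl^D(X^\top\rho_k))$ converges to $\lambda$, and then conclude. The only cosmetic difference is that the paper writes an explicit triangle-inequality bound on $\|\theta_k - \ttheta{\lambda,\tau,w}\|$ whereas you invoke continuity of the quotient directly.
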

\begin{remark}
This proposition guarantees that the GAP safe spheres $\mathcal{B}(\theta_k, \bestR[\lambda,\tau]{\beta_k}{\theta_k})$ are converging safe regions in the sense introduced by \citet{Fercoq_Gramfort_Salmon15}, since by strong duality $\lim_{k \rightarrow \infty}
\bestR[\lambda,\tau]{\beta_k}{\theta_k} = 0$.
\end{remark}

For any safe region $\mathcal{R}$, \ie containing $\ttheta{\lambda, \tau, w}$, we define two levels of active sets:
\begin{align*}
\mathcal{A}_{\text{groups}}(\mathcal{R}) &:=
\left\{ g \in \mathcal{G}, \, \max_{\theta \in \mathcal{R}}
\|\mathcal{S}_{\tau}(X_{g}^{\top} \theta)\| \geq (1 - \tau) w_g \right\}, \\
\mathcal{A}_{\text{features}}(\mathcal{R}) &:=
\bigcup_{g \in \mathcal{A}_{\text{groups}}(\mathcal{R})}
\left\{ j \in g: \, \max_{\theta \in \mathcal{R}}
|X_j^\top \theta| \geq \tau \right\}.
\end{align*}
If one considers sequence of converging regions, then the next proposition states that we can identify, in finite time, the optimal active sets defined as follows (see
Appendix):
\begin{align*}
\mathcal{E}_{\text{groups}} &:= \left\{ g \in \mathcal{G}: \,
\|\mathcal{S}_{\tau}(X_{g}^{\top} \ttheta{\lambda,\tau,w})\|
= (1 - \tau) w_g \right\}, \\
\mathcal{E}_{\text{features}} &:= \bigcup_{g \in
\mathcal{E}_{\text{groups}}}
\left\{ j \in g: \, |X_j^\top \ttheta{\lambda,\tau,w}| \geq \tau \right\}.
\end{align*}
\begin{proposition}
\label{prop:convergence_regions}
Let $(\mathcal{R}_k)_{k \in \bbN}$ be a sequence of safe regions whose diameters converge to 0. Then,
$\displaystyle \lim_{k \rightarrow \infty}
\mathcal{A}_{\text{groups}}(\mathcal{R}_k) = \mathcal{E}_{\text{groups}}$ and
$\displaystyle \lim_{k \rightarrow \infty}
\mathcal{A}_{\text{features}}(\mathcal{R}_k) = \mathcal{E}_{\text{features}}$.
\end{proposition}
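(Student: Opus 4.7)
The plan is to exploit the fact that the safe regions $\mathcal{R}_k$ all contain the unique dual optimum $\ttheta{\lambda,\tau,w}$ and, having diameter shrinking to $0$, actually collapse onto it in the Hausdorff sense. Since $\mathcal{G}$ and each $g$ are finite, convergence of the active sets reduces to showing equality with $\mathcal{E}_{\text{groups}}$ and $\mathcal{E}_{\text{features}}$ for every $k$ large enough, which I will obtain by combining a "trivial" inclusion with a continuity argument for the reverse inclusion.

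First I would settle the easy inclusion. Because $\ttheta{\lambda,\tau,w}$ is dual feasible, the dual constraint $\normsgl^D(X^\top \ttheta{\lambda,\tau,w}) \leq 1$ translates, via the characterisation of $\mathcal{B}_{\Omega^D}$ recalled in Figure~\ref{fig:dual_balls}, into $\|\mathcal{S}_{\tau}(X_g^\top \ttheta{\lambda,\tau,w})\| \leq (1-\tau)w_g$ and $|X_j^\top \ttheta{\lambda,\tau,w}| \leq \tau$ for every $g$ and $j$. Hence $g \in \mathcal{E}_{\text{groups}}$ means exactly that the first inequality is saturated, and since $\ttheta{\lambda,\tau,w} \in \mathcal{R}_k$, the max defining $\mathcal{A}_{\text{groups}}(\mathcal{R}_k)$ is at least $(1-\tau)w_g$; the same holds feature-wise. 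This gives $\mathcal{E}_{\text{groups}} \subseteq \mathcal{A}_{\text{groups}}(\mathcal{R}_k)$ and $\mathcal{E}_{\text{features}} \subseteq \mathcal{A}_{\text{features}}(\mathcal{R}_k)$ for every $k$.

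Next I would prove the reverse inclusion for $k$ large. Let $g \notin \mathcal{E}_{\text{groups}}$; by the previous paragraph we then have the strict inequality $\|\mathcal{S}_{\tau}(X_g^\top \ttheta{\lambda,\tau,w})\| < (1-\tau)w_g$. The function $\theta \mapsto \|\mathcal{S}_{\tau}(X_g^\top \theta)\|$ is continuous (it is the composition of a linear map, the Lipschitz soft-thresholding $\mathcal{S}_\tau = \Id - \Pi_{\tau \mathcal{B}_\infty}$, and the Euclidean norm). Since the diameters of $\mathcal{R}_k$ tend to $0$ and every $\mathcal{R}_k$ contains $\ttheta{\lambda,\tau,w}$, the sets converge to $\{\ttheta{\lambda,\tau,w}\}$ in the Hausdorff sense, so $\max_{\theta \in \mathcal{R}_k}\|\mathcal{S}_{\tau}(X_g^\top \theta)\| \to \|\mathcal{S}_{\tau}(X_g^\top \ttheta{\lambda,\tau,w})\|$. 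For $k$ large enough this max stays below $(1-\tau)w_g$, so $g \notin \mathcal{A}_{\text{groups}}(\mathcal{R}_k)$. Since $\mathcal{G}$ is finite, one can pick a common $k_0$ beyond which $\mathcal{A}_{\text{groups}}(\mathcal{R}_k) = \mathcal{E}_{\text{groups}}$.

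Finally I would handle the feature-level statement in the same spirit. Once $k \geq k_0$, the union defining $\mathcal{A}_{\text{features}}(\mathcal{R}_k)$ is taken over $\mathcal{E}_{\text{groups}}$, matching the index set of $\mathcal{E}_{\text{features}}$, and I apply the continuity argument to $\theta \mapsto |X_j^\top \theta|$ to discard, for $k$ large, every $j \in g \in \mathcal{E}_{\text{groups}}$ with $|X_j^\top \ttheta{\lambda,\tau,w}| < \tau$, concluding by finiteness of the features inside the active groups. The main subtlety, and the only place I would be careful, is in turning "$\mathcal{R}_k$ has vanishing diameter" into convergence of the max over $\mathcal{R}_k$: this is where containing $\ttheta{\lambda,\tau,w}$ is essential, because it forces every point of $\mathcal{R}_k$ to lie within distance equal to the diameter from $\ttheta{\lambda,\tau,w}$, and continuity of the two test functions then yields the uniform bound needed to exchange $\max$ and limit.
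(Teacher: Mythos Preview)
Your argument is correct and follows the same double-inclusion strategy as the paper's proof: the inclusion $\mathcal{E}_{\bullet}\subseteq\mathcal{A}_{\bullet}(\mathcal{R}_k)$ from $\ttheta{\lambda,\tau,w}\in\mathcal{R}_k$, and the reverse inclusion for large $k$ from continuity of the test maps combined with finiteness of $\mathcal{G}$ (the paper makes this quantitative via the $1$-Lipschitz property of $\mathcal{S}_\tau$ and an explicit choice of $\epsilon$, you phrase it as Hausdorff convergence of $\mathcal{R}_k$ to $\{\ttheta{\lambda,\tau,w}\}$, but the content is identical). One harmless slip: dual feasibility does \emph{not} give $|X_j^\top \ttheta{\lambda,\tau,w}|\leq\tau$ for every $j$ --- the characterisation of $\mathcal{B}_{\Omega^D}$ only yields $\|\mathcal{S}_\tau(X_g^\top\ttheta{\lambda,\tau,w})\|\leq(1-\tau)w_g$ --- but since $\mathcal{E}_{\text{features}}$ is defined with ``$\geq\tau$'' rather than by saturation, your feature-level inclusion $\mathcal{E}_{\text{features}}\subseteq\mathcal{A}_{\text{features}}(\mathcal{R}_k)$ follows directly from $\ttheta{\lambda,\tau,w}\in\mathcal{R}_k$ anyway, so the erroneous claim is never actually used.
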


\section{Properties of the \SGL}
\label{section:properties_of_sgl}
The remaining ingredient for creating our GAP safe screening rule is a way to perform the evaluation of the dual norm $\normsgl^D$, which we describe hereafter along with some useful properties of the norm $\normsgl$.
Such evaluations need to be performed multiple times during the algorithm. This motivates the derivation of the efficient Algorithm~\ref{alg:compute_lambda} presented in this section.

\subsection{Connections with $\epsilon$-norms}

Here, we establish a link between the \SGL norm $\normsgl$ and the
$\epsilon$-norm (denoted $\norm{\cdot}_{\epsilon}$) introduced in \cite{Burdakov88}.
For any $\epsilon \in [0,1]$ and any $x\in \bbR^d$,
$\norm{x}_{\epsilon}$ is defined as the unique nonnegative solution $\nu$ of
the following equation:
\begin{equation}\label{eq:1_burdakov_norm}
 \sum_{i=1}^d(|x_i| - (1 - \epsilon)\nu)_+^2 = (\epsilon \nu)^2,
\end{equation}
Using soft-thresholding, this is equivalent to:
\begin{equation}
\label{eq:2_burdakov_norm}
\sum_{i = 1}^{d} {\ST{(1 - \epsilon)\nu}(x_i)}^2 =
  \norm{\ST{(1 - \epsilon)\nu}(x)}^{2} = (\epsilon \nu)^2.
\end{equation}
Moreover, the dual norm of the $\epsilon$-norm is defined by\footnote{see \citep[Eq. (42)]{Burdakov_Merkulov01} or Appendix}:
%
\begin{equation*}
\norm{y}_{\epsilon}^{D} = \epsilon \norm{y}^{D} +
(1 - \epsilon) \norm{y}_{\infty}^{D} =
\epsilon \norm{y} + (1 - \epsilon) \norm{y}_{1}.
\end{equation*}
Now we can express the \SGL norm $\normsgl$ in term of the $\epsilon$-dual-norm and derive some basic properties.


\begin{proposition}\label{prop:properties_of_sgl}
For all groups $g$ in $\mathcal{G}$, let us introduce
\begin{equation}
\epsilon_g := \frac{(1 - \tau)w_g}{\tau + (1 - \tau)w_g}.
\end{equation}
Then, the \SGL norm satisfies the following properties:  for any $\beta$ and $\xi$ in $\bbR^p$
\begin{align}
&\normsgl(\beta) = \sum_{g \in \mathcal{G}} (\tau + (1 - \tau) w_g)
\|\beta_g\|_{\epsilon_g}^{D},
\label{eq:epsilon_norm} \\
&\normsgl^{D}(\xi) = \max_{g \in \mathcal{G}}
\frac{\norm{\xi_g}_{\epsilon_g}}{\tau + (1 - \tau) w_g},
\label{eq:dual_norm_computation}\\
&\mathcal{B}_{\normsgl^D} \!\!\!=
\big\{\xi \in \bbR^p : \forall g \in \mathcal{G},
\|\ST{\tau}(\xi_g)\| \leq (1 - \tau)w_g \big\}.
\label{eq:unit_dual_ball}
\end{align}
The sub-differential $\partial\normsgl(\beta)$ of the norm $\normsgl$ at $\beta$ is
\begin{equation*}
\bigg\{z \in \bbR^p: \forall g \in
\mathcal{G}, z_g \in \tau \partial \|\cdot\|_{1}(\beta_g) +
(1 - \tau) w_g \partial \|\cdot\|(\beta_g) \bigg\}
\end{equation*}
\end{proposition}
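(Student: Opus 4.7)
The plan is to prove the four identities of Proposition~\ref{prop:properties_of_sgl} sequentially, each derived from the previous, all based on the formula $\|y\|_\epsilon^D = \epsilon\|y\| + (1-\epsilon)\|y\|_1$ recalled just before the statement.

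For \eqref{eq:epsilon_norm} I would argue by pure algebra. Set $c_g := \tau + (1-\tau)w_g$; by the definition of $\epsilon_g$ one checks $(1-\epsilon_g)\,c_g = \tau$ and $\epsilon_g\,c_g = (1-\tau)w_g$. Splitting $\normsgl(\beta) = \sum_{g\in\mathcal{G}}(\tau\|\beta_g\|_1 + (1-\tau)w_g\|\beta_g\|)$ group by group and substituting, each summand equals $c_g(\epsilon_g\|\beta_g\| + (1-\epsilon_g)\|\beta_g\|_1) = c_g\|\beta_g\|_{\epsilon_g}^D$, as claimed.

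For \eqref{eq:dual_norm_computation} I would exploit group-separability. Identity \eqref{eq:epsilon_norm} rewrites $\normsgl = \sum_g \Omega_g$ with $\Omega_g := c_g\|\cdot\|_{\epsilon_g}^D$ acting on $\beta_g$. A standard short argument shows that the dual of such a separable norm is the block-wise \emph{maximum} of the block dual norms: the upper bound $\xi^\top\beta = \sum_g \xi_g^\top\beta_g \leq \sum_g \Omega_g(\beta_g)\,\Omega_g^D(\xi_g) \leq \left(\max_g \Omega_g^D(\xi_g)\right)\normsgl(\beta)$ is matched by concentrating $\beta$ on a single maximising group. Since rescaling inverts a dual norm and the bi-dual of $\|\cdot\|_{\epsilon_g}$ is $\|\cdot\|_{\epsilon_g}$ itself, $\Omega_g^D = \|\cdot\|_{\epsilon_g}/c_g$, which yields \eqref{eq:dual_norm_computation}.

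Identity \eqref{eq:unit_dual_ball} then follows by combining \eqref{eq:dual_norm_computation} with the defining equation \eqref{eq:2_burdakov_norm}. The key step is a monotonicity observation: for fixed $x$ and $\epsilon \in (0,1]$, the map $\nu \mapsto \|\ST{(1-\epsilon)\nu}(x)\|^2 - (\epsilon\nu)^2$ is strictly decreasing on $[0,\infty)$, so $\|x\|_\epsilon \leq \nu$ if and only if $\|\ST{(1-\epsilon)\nu}(x)\| \leq \epsilon\nu$. Applying this with $\nu = c_g$ and using $(1-\epsilon_g)c_g = \tau$, $\epsilon_g c_g = (1-\tau)w_g$ turns the condition $\normsgl^D(\xi) \leq 1$ into $\|\ST{\tau}(\xi_g)\| \leq (1-\tau)w_g$ for all $g$. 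Finally, for the sub-differential, writing $\normsgl(\beta) = \sum_g f_g(\beta_g)$ with $f_g(u) := \tau\|u\|_1 + (1-\tau)w_g\|u\|$, group-separability yields $z\in\partial\normsgl(\beta) \Leftrightarrow z_g \in \partial f_g(\beta_g)$ for all $g$, and the Moreau--Rockafellar sum rule (valid since both summands are finite-valued and continuous) gives $\partial f_g(\beta_g) = \tau\,\partial\|\cdot\|_1(\beta_g) + (1-\tau)w_g\,\partial\|\cdot\|(\beta_g)$.

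The main obstacle is the monotonicity argument used to derive \eqref{eq:unit_dual_ball}: one has to be careful when converting the implicit-equation characterisation of $\|\cdot\|_\epsilon$ into the one-sided inequality needed to describe the unit ball, and to verify that both summands of the relevant function move in the correct direction. Once this is in place, the remaining steps reduce to careful bookkeeping of the scaling constants $c_g$ and $\epsilon_g$.
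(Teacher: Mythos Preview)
Your proof is correct, but for parts \eqref{eq:dual_norm_computation} and \eqref{eq:unit_dual_ball} you take a genuinely different route from the paper.

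For \eqref{eq:dual_norm_computation}, the paper argues via a Lagrangian / Fenchel-conjugate computation: it introduces a multiplier $\mu>0$ for the constraint $\Omega(\beta)\le 1$, swaps sup and inf, and uses that the conjugate of a norm is the indicator of its dual ball to reduce to a group-wise maximum. Your argument is more elementary: you observe directly that a block-separable norm $\sum_g \Omega_g$ has dual $\max_g \Omega_g^{D}$ (via a one-line H\"older-type bound plus the obvious extremiser), and then invoke the bidual identity $(\|\cdot\|_{\epsilon_g}^{D})^{D}=\|\cdot\|_{\epsilon_g}$. This avoids the min--max manipulation entirely and is shorter, at the cost of appealing to the bidual theorem.

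For \eqref{eq:unit_dual_ball}, the paper does \emph{not} deduce it from \eqref{eq:dual_norm_computation}; instead it computes the Fenchel conjugate $\Omega^{*}$ by inf-convolution of the $\ell_1$ and group-$\ell_2$ parts (following \cite{Wang_Ye14}), which directly produces the soft-thresholding description. Your derivation stays within the $\epsilon$-norm framework: you use that $\nu\mapsto\|\mathcal{S}_{(1-\epsilon)\nu}(x)\|^{2}-(\epsilon\nu)^{2}$ is strictly decreasing to turn the implicit equation \eqref{eq:2_burdakov_norm} into the inequality characterising the sub-level set, and then substitute the scaling identities $(1-\epsilon_g)c_g=\tau$, $\epsilon_g c_g=(1-\tau)w_g$. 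This is neat because each claim builds on the previous one; the paper's inf-convolution argument is independent of \eqref{eq:dual_norm_computation} and brings in an extra tool, but makes the geometric decomposition $\xi_g\in(1-\tau)w_g\mathcal{B}+\tau\mathcal{B}_\infty$ more transparent. One small caveat in your version: the strict monotonicity relies on $\epsilon_g>0$, i.e.\ $w_g>0$ (or $\tau<1$); the boundary case $\epsilon_g=0$ still works but needs a one-line separate check.

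Your treatment of the subdifferential via separability plus the Moreau--Rockafellar sum rule is correct and essentially what the paper implicitly uses elsewhere; the appendix proof does not spell this part out.
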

\begin{remark}[Decomposition of a dual feasible point]
We obtain from the sub-differential inclusion \eqref{eq:sub-differential_norm} and the characterization
of the unit dual ball \eqref{eq:unit_dual_ball} that for the \SGL any dual
feasible point $\theta \in \dualomegasgl$ verifies:
\begin{equation*}\label{rq:decomposition_of_dual_feasible_point}
\forall g \in \mathcal{G}, \quad X_{g}^{\top} \theta \in (1 - \tau) w_g
\mathcal{B} + \tau \mathcal{B}_{\infty}.
\end{equation*}
\end{remark}
From the dual norm formulation \eqref{eq:dual_norm_computation}, a vector $\theta \in \bbR^n$ is feasible if and only if $\normsgl^D(X^\top
\theta) \leq 1$, \ie $\forall g \in \mathcal{G}, \|X_{g}^\top \theta\|_{\epsilon_g} \leq \tau + (1 - \tau) w_g$. Hence we deduce from \eqref{eq:unit_dual_ball} a new characterization of the dual feasible set:

\begin{proposition}[Dual feasible set and $\epsilon$-norm] \label{rq:dual_feasible_set_burdakov_characterization}
\begin{align*}
\dualomegasgl &=
 \big\{\theta \in \mathbb{R}^n: \forall g \in \mathcal{G},
\|X_{g}^\top \theta\|_{\epsilon_g} \leq \tau + (1 - \tau)w_g \big\}.
\end{align*}
\end{proposition}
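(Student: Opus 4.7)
The plan is to obtain this characterization as an immediate consequence of the dual-norm identity \eqref{eq:dual_norm_computation} that has just been stated in Proposition~\ref{prop:properties_of_sgl}. First I would unfold the defining inequality $\dualomegasgl = \{\theta \in \bbR^n : \normsgl^D(X^\top\theta) \leq 1\}$ and substitute $\xi = X^\top\theta$ into the formula $\normsgl^D(\xi) = \max_{g\in\mathcal{G}} \|\xi_g\|_{\epsilon_g}/(\tau + (1-\tau)w_g)$. Because $(X^\top\theta)_g = X_g^\top\theta$, the constraint $\normsgl^D(X^\top\theta)\leq 1$ becomes
\[
\max_{g \in \mathcal{G}} \frac{\|X_g^\top\theta\|_{\epsilon_g}}{\tau + (1-\tau)w_g} \leq 1.
\]

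Next I would justify dividing out the normalizing constant: the quantity $\tau + (1-\tau)w_g$ is strictly positive for every $g\in\mathcal{G}$, the only way it could vanish being the simultaneous case $\tau=0$ and $w_g=0$, which the paper explicitly excludes (otherwise $\normsgl$ would not be a norm). Then the equivalence ``$\max_g a_g \leq 1 \Longleftrightarrow \forall g,\ a_g\leq 1$'' lets me rewrite the above as the system
\[
\forall g \in \mathcal{G}, \quad \|X_g^\top\theta\|_{\epsilon_g} \leq \tau + (1-\tau)w_g,
\]
which is exactly the announced characterization. This is a purely mechanical unpacking once \eqref{eq:dual_norm_computation} is in hand, so there is no genuine obstacle to overcome here; the substantive work has already been done in proving Proposition~\ref{prop:properties_of_sgl}.

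If one wishes to connect the result back to \eqref{eq:unit_dual_ball} to make the proof self-contained in case only that formulation is used, I would additionally verify the one-line identity $\|u\|_\epsilon \leq \nu \Longleftrightarrow \|\ST{(1-\epsilon)\nu}(u)\| \leq \epsilon\nu$ applied to $u = X_g^\top\theta$, $\nu = \tau + (1-\tau)w_g$, and $\epsilon = \epsilon_g$; since $(1-\epsilon_g)\nu = \tau$ and $\epsilon_g \nu = (1-\tau)w_g$, this recovers $\|\ST{\tau}(X_g^\top\theta)\| \leq (1-\tau)w_g$, reconciling the two descriptions. Either route yields the proposition in a few lines.
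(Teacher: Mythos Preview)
Your proposal is correct and matches the paper's own justification essentially verbatim: the paper does not give a separate proof but simply notes, just before stating the proposition, that feasibility $\normsgl^D(X^\top\theta)\leq 1$ combined with the dual-norm formula \eqref{eq:dual_norm_computation} immediately yields $\forall g,\ \|X_g^\top\theta\|_{\epsilon_g}\leq \tau+(1-\tau)w_g$. Your optional reconciliation with \eqref{eq:unit_dual_ball} is also in the same spirit as the paper's remark.
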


\renewcommand{\algorithmicloop}{
\textbf{First method}}
\begin{algorithm}[!Ht]
\caption{Computation of $\Lambda(x,\alpha,R)$.}
\begin{algorithmic} \label{alg:sgl_burdakov}
\INPUT{$x=(x_1,\ldots,x_d)^\top \in \bbR^d, \quad \alpha \in [0, 1], \quad R \geq 0$}
\IF{$\alpha = 0$ and $R = 0$}
    \STATE $\Lambda(x,\alpha,R) = \infty$
\ELSIF{$\alpha = 0$ and $R \neq 0$}
   \STATE $\Lambda(x,\alpha,R) = \norm{x} / R$
\ELSIF{$R = 0$}
  \STATE $\Lambda(x,\alpha,R) = \norm{x}_{\infty} / \alpha$
\ELSE
  \STATE $\text{Get }  n_{I} :=  \text{Card} \left( \left\{i \in [d]:\!|x_i|\!>\!\frac{\alpha \norm{x}_{\infty}}{\alpha + R}\right\} \right)$
  \STATE Sort $x_{(1)} \geq x_{(2)} \geq \cdots \geq x_{(n_{I})}$
  \STATE $S_0 = x_{(0)}, \quad S_{0}^{(2)} = x_{(0)}^{2}, \quad a_0 = 0$
  \FOR{$ k \in [n_{I}- 1]$}
    \STATE $S_k = S_{k-1} + x_{(k)}; \quad
            S_{k}^{(2)} = S_{k-1}^{(2)} + x_{(k)}^{2}$
    \STATE $a_{k+1} = \frac{S_{k}^{(2)}}{x_{(k+1)}^{2}} -
            2 \frac{S_{k}}{ x_{(k+1)} } + k + 1$
    \IF{$ \frac{R^2}{\alpha^2} \in \left[a_k, a_{k+1}\right[$}
      \STATE $j_0 = k + 1$
      \STATE {\bf break}
    \ENDIF
  \ENDFOR
  \IF {$\alpha^2 j_0 - R^2 = 0$}
      \STATE $\Lambda(x,\alpha,R) = \frac{S_{j_0}^{2}}{2 \alpha S_{j_0}}$
  \ELSE
      \STATE $\Lambda(x,\alpha,R) = \frac{
\alpha S_{j_0} - \sqrt{\alpha^2 S_{j_0}^{2} - S_{j_0}^{(2)}(\alpha^2 j_0 - R^2)}
}{\alpha^2 j_0 - R^2}$ 
  \ENDIF
\ENDIF
\OUTPUT{$\Lambda(x,\alpha,R)$}
\end{algorithmic}
\label{alg:compute_lambda}
\end{algorithm}

\subsection{Efficient computation of the dual norm}
The following proposition shows how to compute the dual norm of the \SGL (and the $\epsilon$-norm), a crucial tool for our safe rules. This is turned into an efficient procedure in Algorithm \ref{alg:sgl_burdakov} (see the Appendix for more details).

\begin{proposition}\label{th:sparse_groupe_burdakov}
For $\alpha \in [0,1], R\geq0$ and $x\in\bbR^d$, the equation $\sum_{i=1}^d
\ST{\nu \alpha}(x_i)^2 = (\nu R)^2$ has a unique
solution $\nu \in \bbR_+$, denoted by $\Lambda(x,\alpha,R)$ and that can be
computed in $O(d \log d)$ operations in the worst case.
\end{proposition}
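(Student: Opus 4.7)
My plan is to treat the edge cases separately (matching the first three branches of Algorithm~\ref{alg:sgl_burdakov}) and then handle the generic case $\alpha,R>0$. If $\alpha=0$ the equation reduces to $\|x\|^{2}=(\nu R)^{2}$, giving $\nu=\|x\|/R$ when $R>0$ and a degenerate situation when $R=0$, handled by the convention $\Lambda=\infty$. If $R=0<\alpha$ the equation is $\|\ST{\nu\alpha}(x)\|^{2}=0$, whose minimal solution is $\nu=\|x\|_{\infty}/\alpha$.

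Under $\alpha,R>0$, I would introduce
\[
F(\nu)\;=\;\sum_{i=1}^{d}(|x_i|-\nu\alpha)_+^{2}\;-\;\nu^{2}R^{2}.
\]
Then $F$ is continuous on $\bbR_+$ with $F(0)=\|x\|^{2}\ge 0$ and $F(\nu)\to -\infty$ as $\nu\to\infty$. Wherever differentiable, $F'(\nu)=-2\alpha\sum_i(|x_i|-\nu\alpha)_+-2\nu R^{2}<0$ for $\nu>0$, and continuity at the kinks upgrades this to strict monotonicity on $(0,\infty)$. The intermediate value theorem then yields a unique zero $\Lambda(x,\alpha,R)\in\bbR_+$.

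For the computation I would exploit that $F$ is piecewise quadratic with breakpoints $x_{(k)}/\alpha$, available after an initial sort of $(|x_i|)_{i\in[d]}$ at cost $O(d\log d)$. On the interval where the top $j_0$ sorted coordinates exceed $\nu\alpha$, the equation $F(\nu)=0$ reduces to
\[
(\alpha^{2}j_0-R^{2})\,\nu^{2}\;-\;2\alpha S_{j_0}\,\nu\;+\;S_{j_0}^{(2)}\;=\;0,
\]
with $S_k$ and $S_k^{(2)}$ the cumulative sums of the sorted values and their squares. Because $F$ is decreasing through $\nu^{\star}$, a brief sign analysis on $\alpha^{2}j_0-R^{2}$ shows that the correct root is always the ``minus'' branch displayed in the algorithm, and the linear fallback is invoked exactly when $\alpha^{2}j_0=R^{2}$. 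To identify $j_0^{\star}$ I would substitute each candidate breakpoint $\nu\alpha=x_{(k)}$ (at which one term vanishes) into $F=0$, yielding a sequence of thresholds $(a_k)_k$ — precisely the quantities maintained by the algorithm — such that $j_0^{\star}$ is pinned down by which interval $[a_k,a_{k+1})$ contains $R^{2}/\alpha^{2}$. Monotonicity of $\nu^{\star}$ in $R$ and of the active set in $\nu$ guarantees that $(a_k)$ is non-decreasing, hence that the correct interval is unique.

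For complexity: the sort costs $O(d\log d)$; the cumulative sums and thresholds update in $O(1)$ per step; the scan can be restricted to the $n_I$ indices with $|x_i|>\alpha\|x\|_\infty/(\alpha+R)$, a bound justified by $\nu^{\star}\ge\|x\|_\infty/(\alpha+R)$ (keep only the largest term in $F$); and the final quadratic solve is $O(1)$. Summing, the total worst-case cost is $O(d\log d)$. I expect the main obstacle to be the bookkeeping around $j_0^{\star}$ — namely the monotonicity of the thresholds $(a_k)$ and the justification of the ``minus'' branch as the physically meaningful root — since existence, uniqueness, and the edge cases follow immediately from the monotonicity of $F$.
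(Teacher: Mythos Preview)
Your proposal is correct and follows essentially the same route as the paper's proof: the same treatment of the three edge cases, the same piecewise-quadratic reduction leading to the equation $(\alpha^{2}j_0-R^{2})\nu^{2}-2\alpha S_{j_0}\nu+S_{j_0}^{(2)}=0$, the same case analysis on the sign of $\alpha^{2}j_0-R^{2}$ to select the ``minus'' root, and the same restriction to the $n_I$ indices above $\alpha\|x\|_\infty/(\alpha+R)$. The only cosmetic difference is that the paper first divides by $\nu^{2}$ and argues that $\nu\mapsto\sum_j\ST{\alpha}(x_j/\nu)^{2}$ is non-increasing, whereas you work directly with $F(\nu)$ and its derivative; your formulation is slightly cleaner in that it avoids the separate handling of $x=0$ needed to justify the division.
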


\begin{remark}
The complexity of Algorithm \ref{alg:compute_lambda} is $n_I \log(n_I)$ where $n_I = \text{Card} \left\{i \in [d]:|x_i|>\alpha \norm{x}_{\infty}/(\alpha + R)\right\}$ is often much smaller than the ambient dimension $d$.
\end{remark}

\begin{remark}
Thanks to \eqref{eq:general_lambda_max}, we can easily deduce the
critical parameter $\lambda_{\max}$ for the \SGL that is
\begin{equation}\label{eq:computation_alg_lambda_max}
\lambda_{\max} =
\max_{g\in \mathcal{G} } \frac{\Lambda(X_g^\top y, 1 - \epsilon_g, \epsilon_g)}{\tau + (1-\tau) w_g} = \normsgl^D(X^\top y),
\end{equation}
and compute a dual feasible point \eqref{eq:dual_feasible_point}, since
\begin{equation}\label{eq:dual_scaling}
\normsgl^D(X^\top \rho_k) = \max_{g\in \mathcal{G} } \frac{\Lambda(X_g^\top \rho_k,
1 - \epsilon_g, \epsilon_g)}{\tau + (1-\tau) w_g}.
\end{equation}
\end{remark}

\section{Implementation}

\begin{algorithm}[!Ht]
\caption{ISTA-BC with GAP SAFE rules}
\begin{algorithmic}\label{alg:ista_bc_safe}
\INPUT{$X, y , \epsilon, K, f^{\rm{ce}}, (\lambda_t)_{t \in [T-1]}$}
\STATE{\quad $\forall g \in \mathcal{G}$, compute $L_g=\|X_g\|_2^2 $}
\STATE \quad Compute $\lambda_0=\lambda_{\max}$ thanks to \eqref{eq:computation_alg_lambda_max} and Algorithm
\ref{alg:sgl_burdakov}
\STATE \quad $\beta^{\lambda_0}=0$
\FOR{$ t \in [T-1]$}
\STATE $\forall g \in \mathcal{G}, \alpha_g \leftarrow \lambda_t/L_g$
\STATE $\beta \leftarrow \beta^{\lambda_{t-1}}$ \COMMENT{previous $\epsilon$-solution}
\FOR{$k \in [K]$}
\IF{$ k\mod f^{\rm{ce}} = 1 $}
\STATE Compute $\theta$ thanks to \eqref{eq:dual_feasible_point} and Algorithm
\ref{alg:sgl_burdakov}.
\STATE Set $\mathcal{R} = \mathcal{B} \left(\theta,
\sqrt{\frac{2 (P_{\lambda_t,\tau,w}(\beta) - D_{\lambda_t}(\theta) )}{\lambda_t^2}} \right)$
\IF{$P_{\lambda_t,\tau,w}(\beta) - D_{\lambda_t}(\theta) \leq \epsilon$}
\STATE $\beta^{\lambda_t}\leftarrow \beta$
\STATE {\bf break}
\ENDIF
\ENDIF
\FOR[Active groups]{$g \in \mathcal{A}_{\text{groups}}(\mathcal{R})$ }
  \FOR[Active features]{$j \in g
\cap \mathcal{A}_{\text{features}}(\mathcal{R})$}
\STATE $\beta_j \leftarrow \mathcal{S}_{ \tau \alpha_g } \left( \beta_{j} -
     \frac{\nabla_j f(\beta)}{L_g} \right)$ \COMMENT{Soft-thresholding}
  \ENDFOR
\STATE $\beta_{g} \leftarrow \GST{(1-\tau)\omega_g \alpha_g} \left(\beta_g\right)$ \COMMENT{Block Soft-thresholding}
\ENDFOR
\ENDFOR
\ENDFOR
\OUTPUT{$(\beta^{\lambda_t})_{t\in [T-1]}$}
\end{algorithmic}
\label{alg:cd_screening}
\end{algorithm}

In this Section we provide details on how to solve the \SGL primal problem,
and how we apply the GAP safe screening rules. We focus on the \textit{block coordinate iterative soft-thresholding algorithm (ISTA-BC)}; see \citep{Qin_Scheinberg_Goldfarb13}.

This algorithm requires a block-wise Lipschitz gradient condition on the data fitting term $f(\beta) = \frac{1}{2}\norm{y - X \beta}^2$.
For our problem \eqref{eq:general_primal_problem}, one can show that for all group $g$ in $\mathcal{G}, L_g = \norm{X_g}_{2}^{2}$ (where $\|\cdot\|_2$ is the spectral norm of a matrix) is a suitable block-wise Lipschitz constant.
We thus have a quadratic bound available on the variation of $f$
along each block, using \citep[Lemma 1.2.3]{Nesterov04}.

We define the block coordinate descent algorithm according to the
 Majorization-Minimization principle:
at each iteration $l$, we choose a group $g$ and the next iterate $\beta^{l+1}$ is defined such that $\beta^{l+1}_{g'} = \beta^{l}_{g'}$ if $g' \neq g$ and
otherwise
\begin{align*} \label{eq:sol_ISTA_BC_problem}
\beta_{g}^{l+1} \! &= \! \argmin_{\beta_g \in \bbR^{n_g}} \!
                 \frac{1}{2} \norm{\beta_g - \left( \!\! \beta_{g}^{l} -
\frac{\nabla_g f(\beta^{l})}{L_g}\! \right)}^{2} \!\!\!\! \\
& \hspace{4em} + \!\frac{\lambda}{L_g} \big( \tau\norm{\beta_g}_1 + (1-\tau)w_g \norm{\beta_g}\big) \\
    &= \GST{(1-\tau)\omega_g \alpha_g}
   \left( \mathcal{S}_{ \tau \alpha_g } \left( \beta_{g}^{l} -
\frac{\nabla_g f(\beta^{l})}{L_g} \right) \right),
\end{align*}
where we denote for all $g$ in $\mathcal{G}, \alpha_g := \frac{\lambda}{L_g}$.
In our implementation, we chose the groups in a cyclic fashion over the set of active groups.

The expensive computation of the dual gap is not performed at each pass over the data, but only every $f^{\rm{ce}}$ pass (in practice $f^{\rm{ce}}=10$ in all our experiments).


\section{Experiments}

\subsection{Numerical experiments}
In our experiments\footnote{The source code can be found in
 \url{https://github.com/EugeneNdiaye/GAPSAFE_SGL}.}, we run Algorithm
  \ref{alg:ista_bc_safe} to obtain the \SGL estimator with a non-increasing sequence of $T$ regularization parameters $(\lambda_t)_{t \in [T-1]}$ defined as follows: $\lambda_t: = \lambda_{\max} 10^{- \frac{\delta t}{T - 1}}$. By default, we choose $\delta = 3$ and $T=100$, following the standard practice when running cross-validation using sparse models (see R GLMNET package \cite{Friedman_Hastie_Hofling_Tibshirani07}). The weights are always chosen as $w_g=\sqrt{n_g}$ (as in \cite{Simon_Friedman_Hastie_Tibshirani13}).

We also provide a natural extension of the previous safe rules \cite{ElGhaoui_Viallon_Rabbani12, Xiang_Xu_Ramadge11,Bonnefoy_Emiya_Ralaivola_Gribonval14} to the \SGL for comparisons (please refer to the appendix for more details). The \textbf{static safe region} \citep{ElGhaoui_Viallon_Rabbani12} is given by $\mathcal{B} \left(y/\lambda, \norm{y/\lambda_{\max} - y/\lambda} \right)$. The corresponding \textbf{dynamic safe region} \citep{Bonnefoy_Emiya_Ralaivola_Gribonval14}) is given by $\mathcal{B} \left(y/\lambda, \norm{\theta_k - y/\lambda} \right)$ where $(\theta_k)_{k \in \bbN}$ is a sequence of dual feasible points obtained by dual scaling; \lcf Equation~\eqref{eq:dual_feasible_point}. The \textbf{DST3}, which is an improvement of the preceding safe region (see also \citet{Xiang_Xu_Ramadge11, Bonnefoy_Emiya_Ralaivola_Gribonval14}), is the sphere $\mathcal{B}(\theta_c, r_{\theta_k})$ where
\begin{align*}
\theta_c &:= \frac{y}{\lambda} -
\frac{\frac{\eta^\top y }{\lambda} -
(\tau + (1 - \tau) w_{g_\star})}{\norm{\eta}^{2}} \eta, \\
r_{\theta_k}^2 &:= {\norm{\frac{y}{\lambda} - \theta_k}^{2} -
\norm{\frac{y}{\lambda} - \theta_c}^{2}},
\end{align*}
\begin{align*}
g_{\star} &:= \argmax_{g \in \mathcal{G}} \normsgl^D(X_{g}^{\top}y), \,
\epsilon_{g_{\star}} := \frac{(1 - \tau)w_{g_{\star}}}{\tau + (1 - \tau)w_{g_{\star}}}, \\
\eta &:=  \frac{X_{g_{\star}} \xi^{\star}}{\norm{\xi^{\star}}_{\epsilon_{g_{\star}}}^{D}}, \,
\xi^{\star} \!=\! \ST{(1 - \epsilon_{g_{\star}})
 \norm{X_{g_{\star}}^{\top}\frac{y}{\lambda_{\max}}}_{\epsilon_{g_{\star}}}} \!\!\!\!
 \left(\! X_{g_{\star}}^{\top}\frac{y}{\lambda_{\max}} \!\right).
\end{align*}
The sequence $(\theta_k)_{k \in \bbN}$ is also obtained thanks to Eq.~\eqref{eq:dual_feasible_point}.

We now demonstrate the efficiency of our method in both synthetic and real datasets described below. For comparison, we report actual computation time
to reach convergence up to a certain tolerance on the duality gap.

\textbf{Synthetic dataset:}
We use a common framework \citep{Tibshirani_Bien_Friedman_Hastie_Simon_Tibshirani12, Wang_Ye14} based on the model $y = X \beta + 0.01 \varepsilon$ where $\varepsilon
\sim \mathcal{N}(0, \Id_n)$, $X \in \bbR^{n \times p}$ follows a multivariate
normal distribution such that $\forall (i,j) \in [p]^2, \text{corr}(X_i, X_j) =
\rho^{|i-j|}$. We fix $n=100$ and break randomly $p = 10000$ in $1000$ groups
of size 10 and select $\gamma_1$ groups to be active and the others
are set to zero. In each of the selected groups, $\gamma_2$
coordinates are drawn such that $[\beta_g]_j = \sign(\xi) \times U$ where
$U$ is uniform in $[0.5, 10])$, $\xi$ uniform in $[-1, 1]$. 
The results of this experiment are presented in Section~\ref{sec:performance_screening_rule}.

\begin{figure*}[!ht]
\subfigure[Proportion of active coordinate variables as a function of
parameters $(\lambda_t)$ and the number of iterations $K$.]
{\includegraphics[width=0.32\linewidth, keepaspectratio]{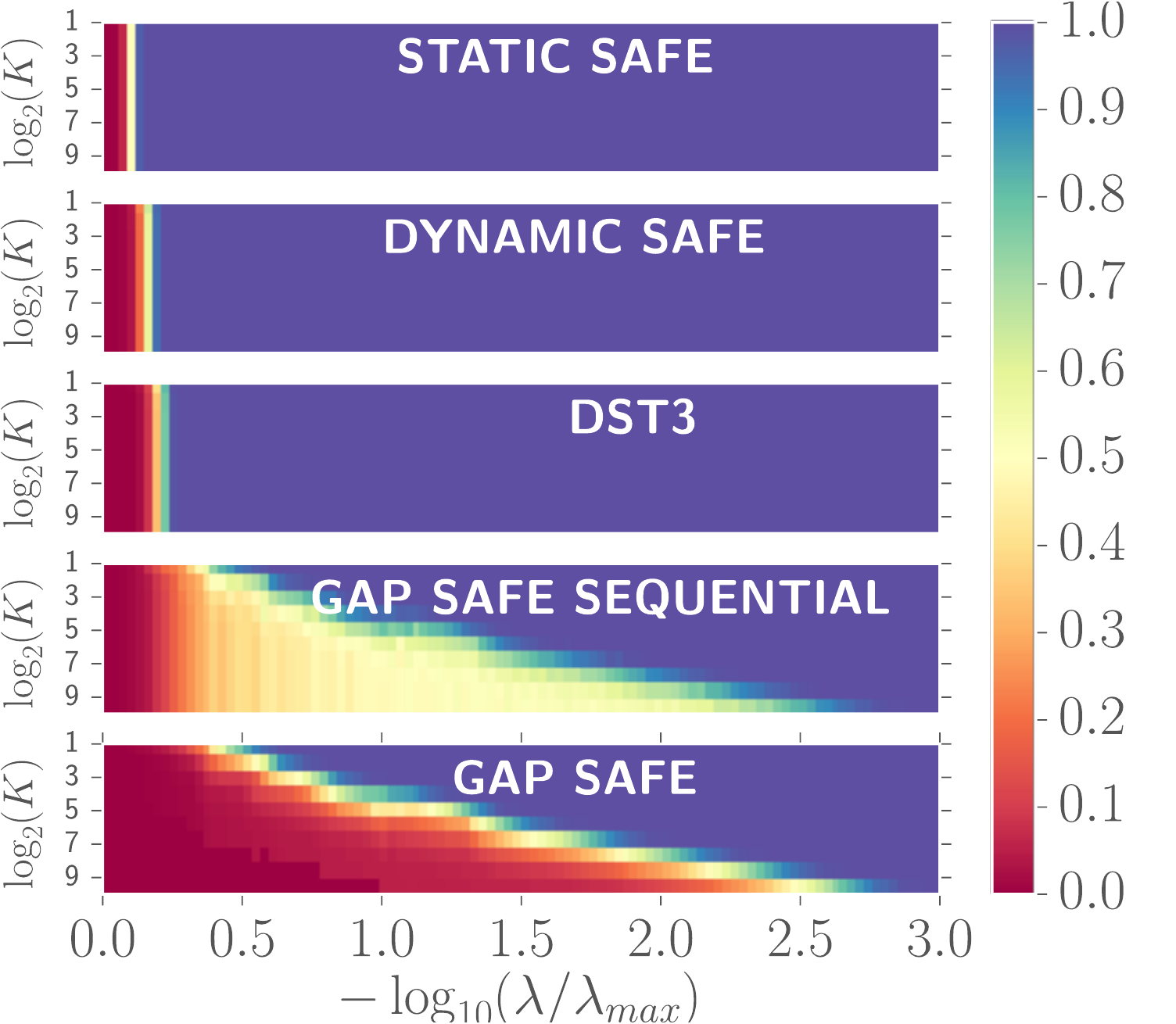}}
\hfill
\subfigure[Proportion of active group variables as a function of
parameters $(\lambda_t)$ and the number of iterations $K$.]
{\includegraphics[width=0.32\linewidth, keepaspectratio]{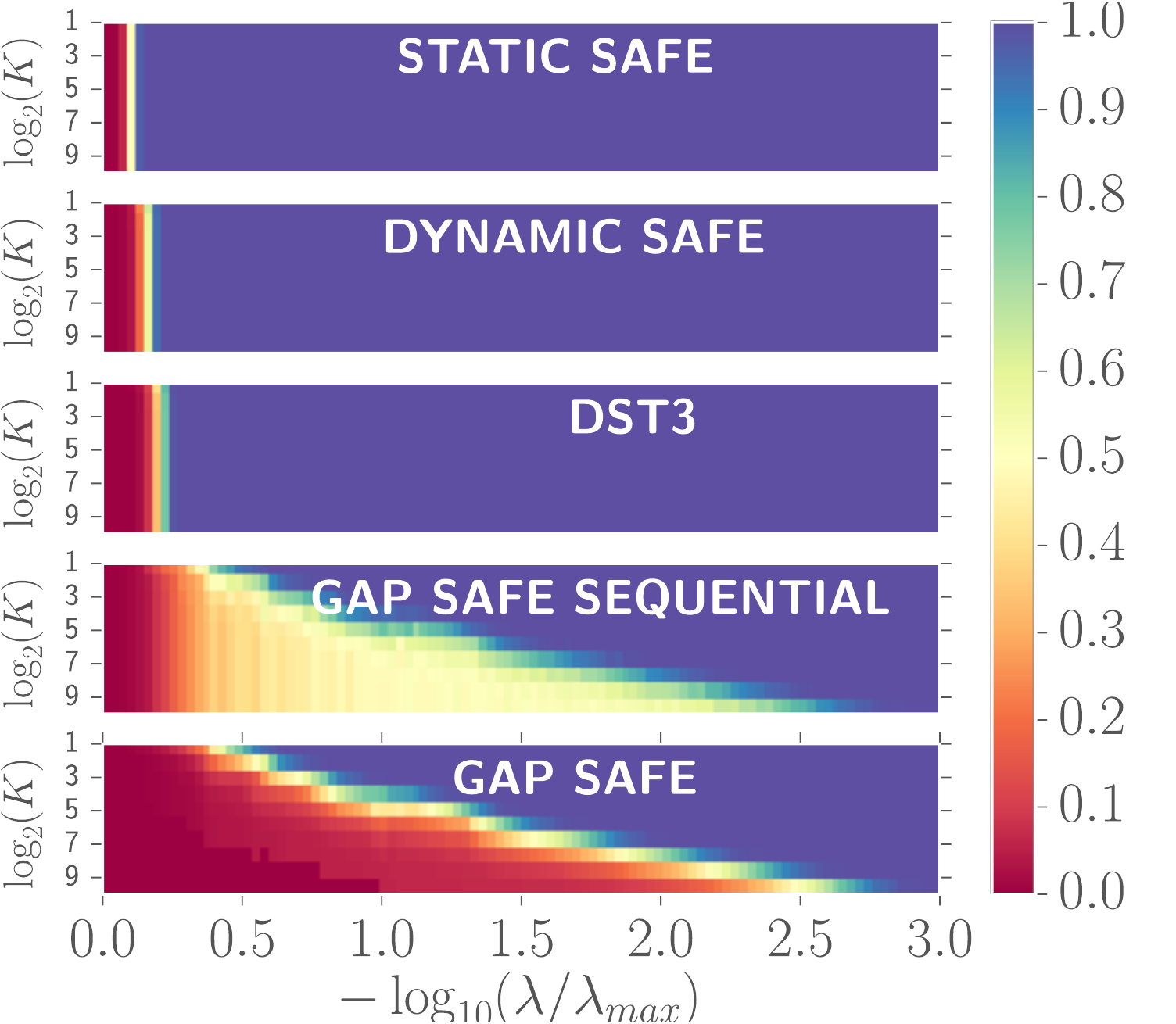}}
\hfill
\subfigure[Time to reach convergence as a function of increasing prescribed accuracy and using various screening strategies.\label{fig:lowcorr_comptime}]
{\includegraphics[width=0.32\linewidth, keepaspectratio]{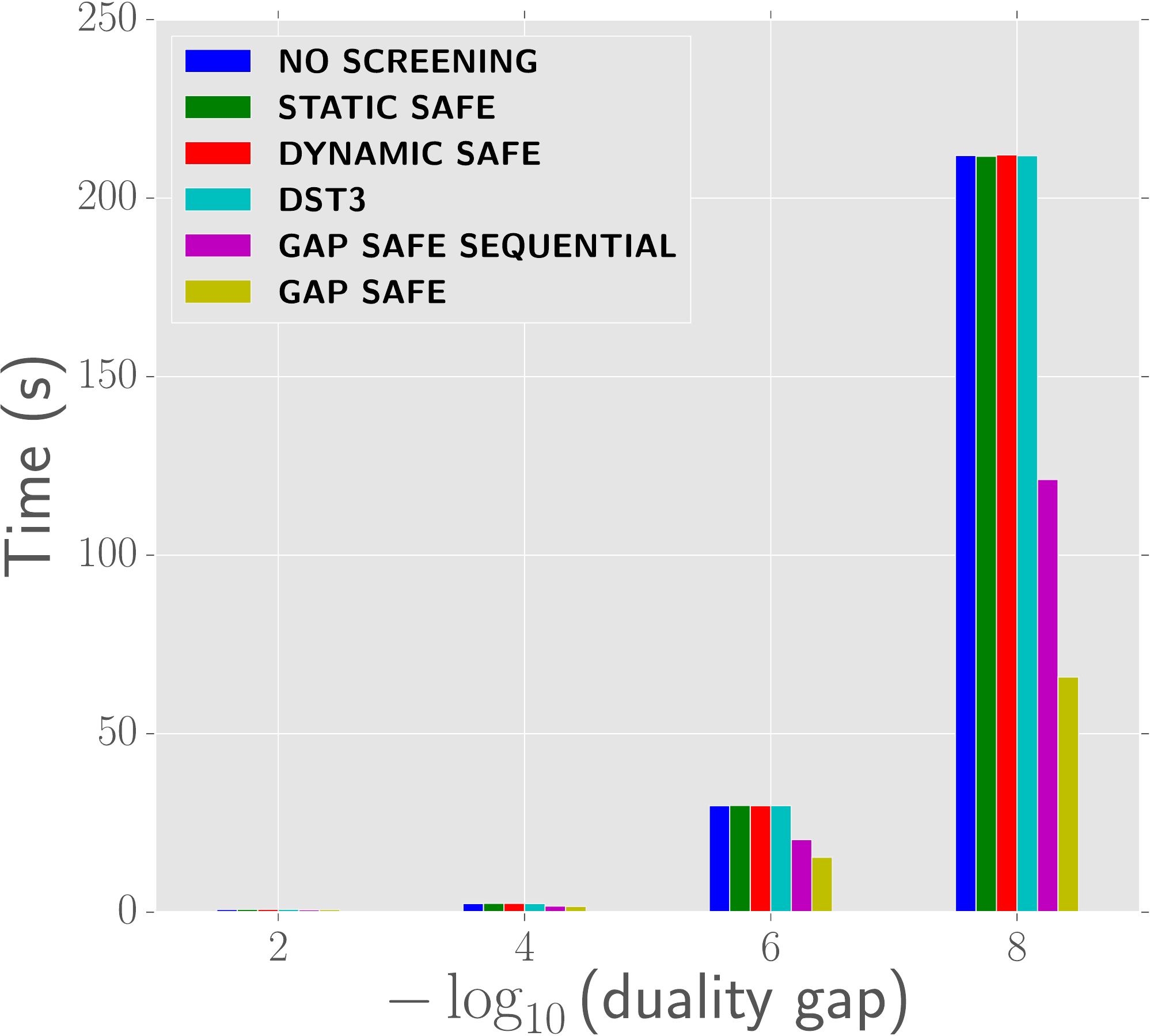}}
\caption{Experiments on a synthetic dataset
($\rho = 0.5, \gamma_1 = 10, \gamma_2 = 4, \tau = 0.2$).\label{fig:synth_dataset}}
\end{figure*}

 \textbf{Real dataset: NCEP/NCAR  Reanalysis 1 \cite{Kalnay_Kanamitsu_Kistler_Collins_Deaven_Gandin_Iredell_Saha_White_Woollen_Others96}}
The dataset contains monthly means of climate data measurements spread across the globe in a
grid of $2.5^\circ \times 2.5^\circ$ resolutions (longitude and latitude $144
\times 73$) from $1948/1/1$ to $2015/10/31$ . Each grid point constitutes a group of
 $7$ predictive variables (\textit{Air Temperature, Precipitable water, Relative humidity, Pressure,
Sea Level Pressure, Horizontal Wind Speed}
and \textit{Vertical Wind Speed}) whose concatenation across time constitutes our design
matrix $X \in \bbR^{814 \times 73577}$. Such data have therefore a natural group structure.

\begin{figure*}[ht]
\subfigure[We show the prediction error for the \SGL path with $100$ values of $\lambda$ and
$11$ values of $\tau$. The best performance is achieved with $\tau^{\star} = 0.4$.\label{fig:cross_val}]
{\includegraphics[width=0.5\columnwidth,keepaspectratio]{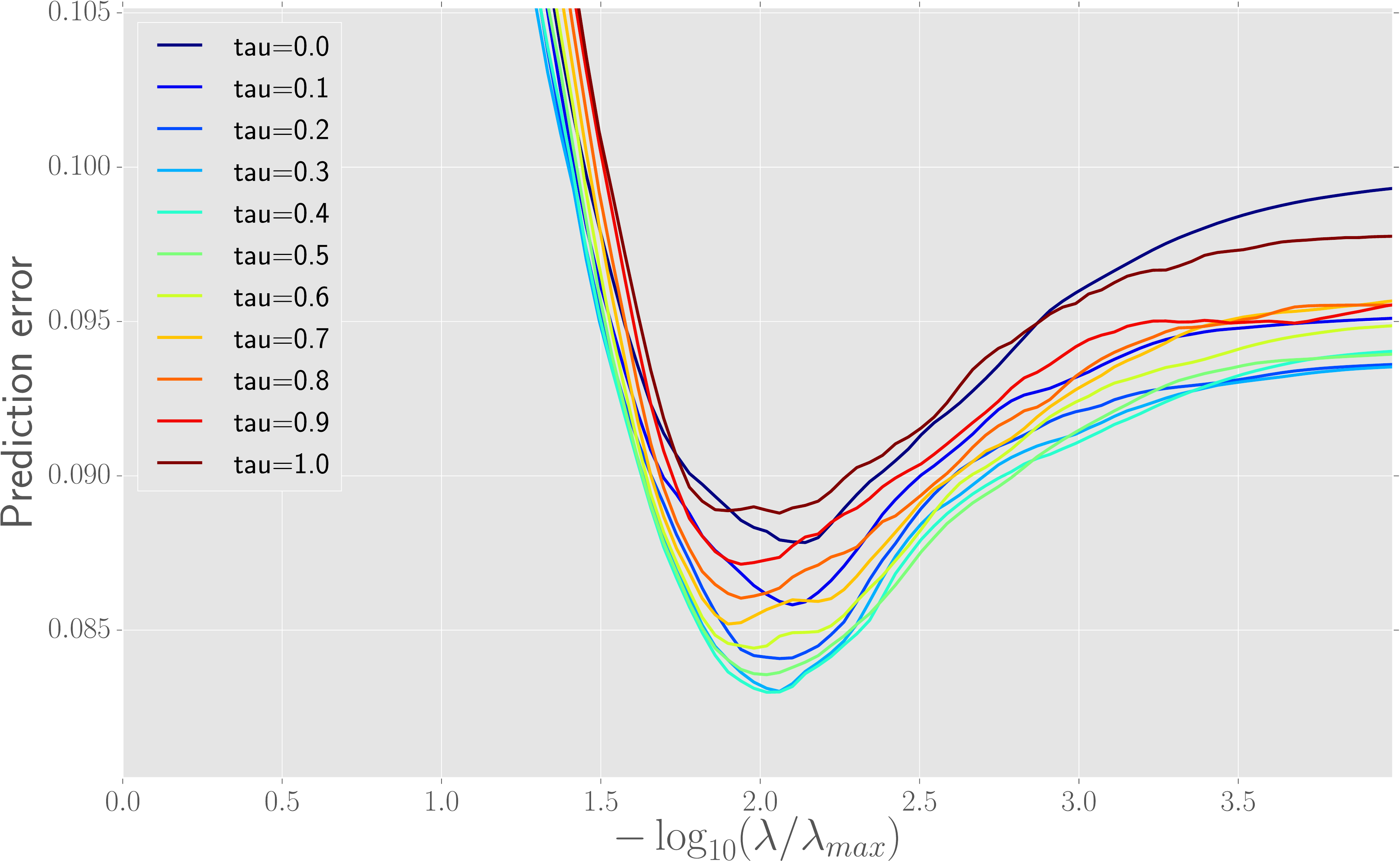}}
$\qquad$
\subfigure[We show the computation time to reach convergence as a function of the desired
accuracy on the dual gap.
The time includes the whole path over $(\lambda_t)_{t \in [T]}$ with $\delta = 2.5$ and
$\tau^{\star} = 0.4$.\label{fig:bench_time_clim}]
{\includegraphics[width=0.5\columnwidth, keepaspectratio]{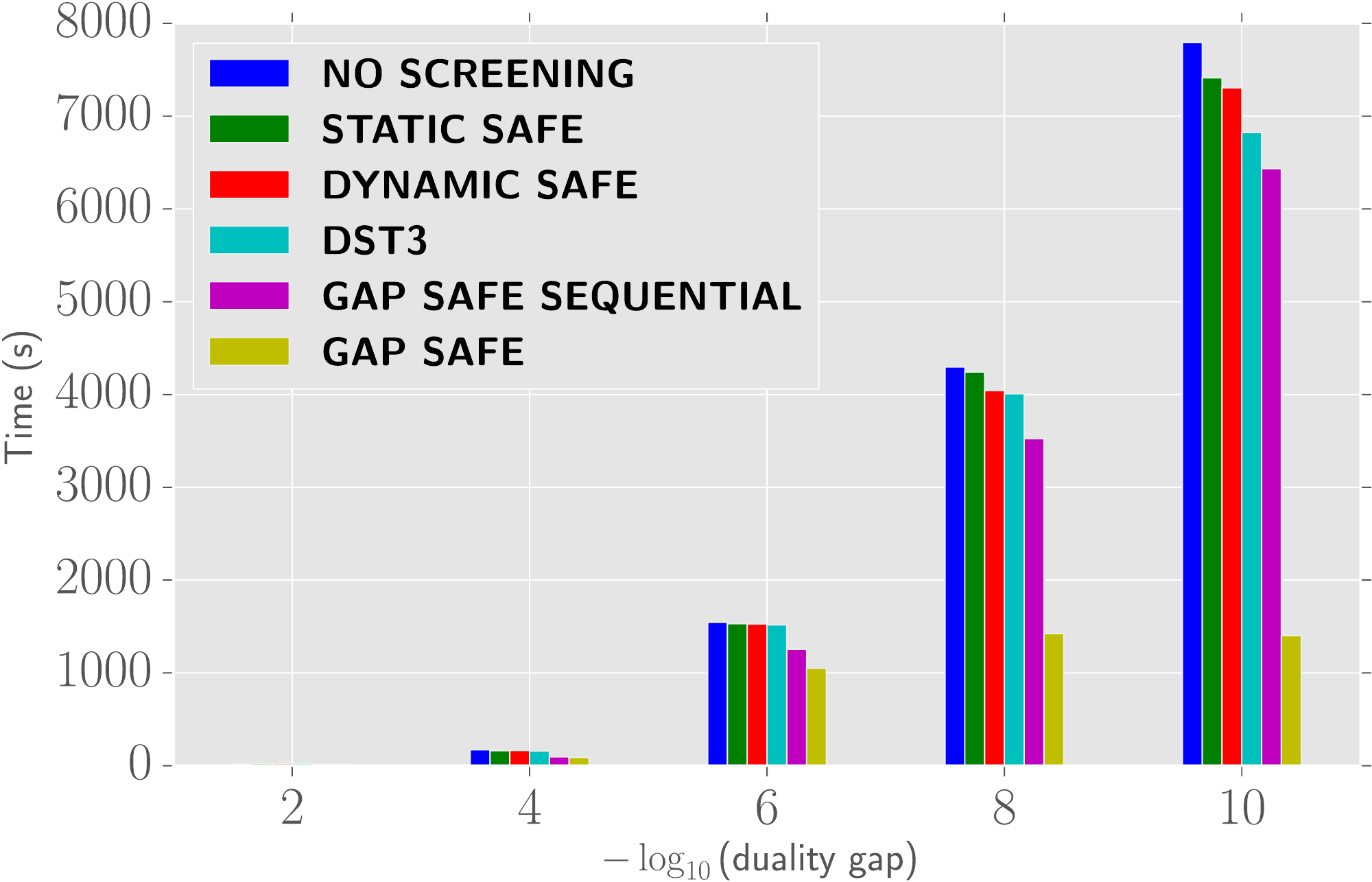}}
\caption{Experiments on NCEP/NCAR Reanalysis 1 dataset (n = 814, p = 73577).}
\end{figure*}

\begin{figure*}[ht]
\includegraphics[width=\columnwidth,keepaspectratio]{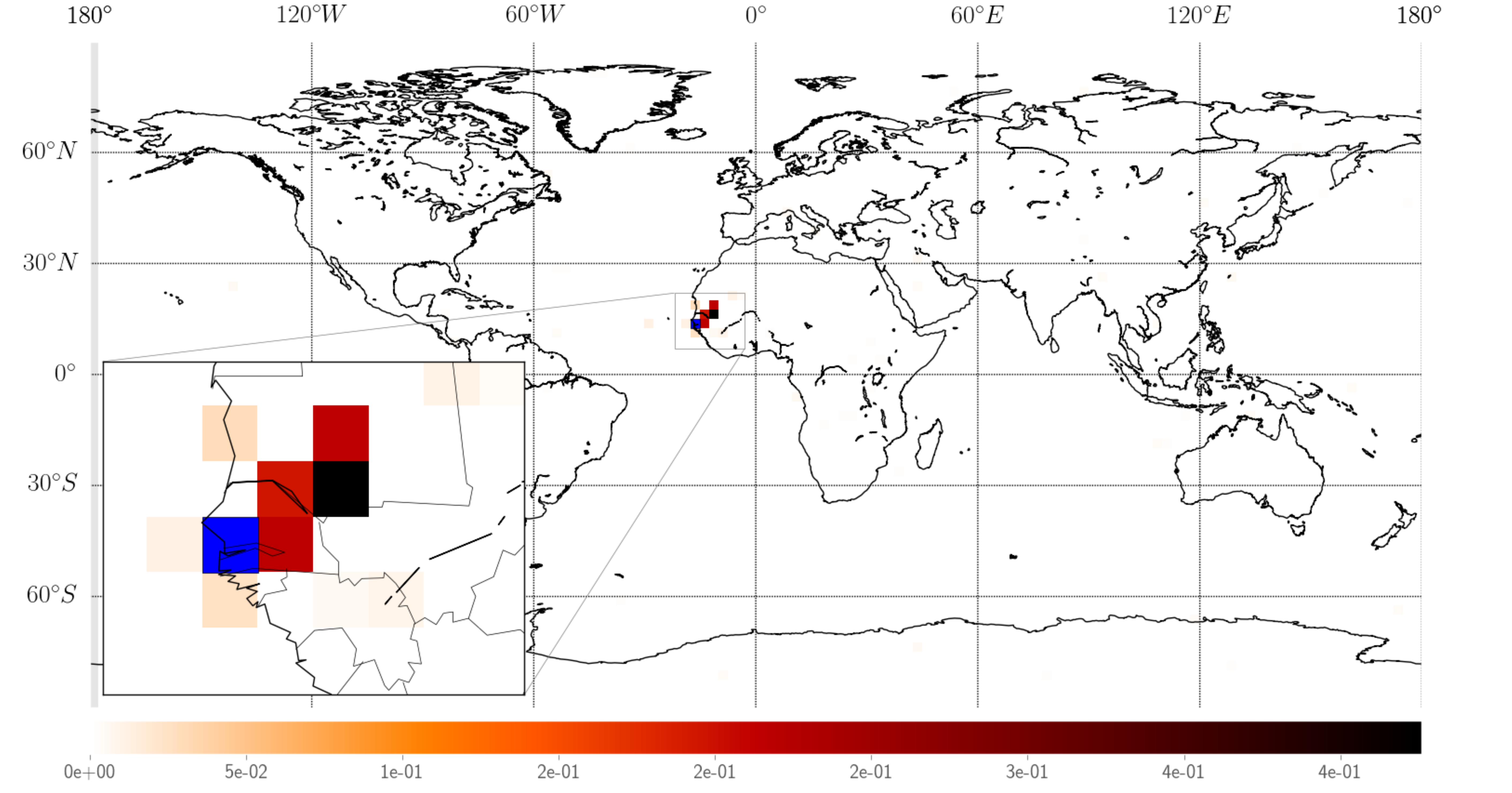}
\caption{Experiments on NCEP/NCAR Reanalysis 1 dataset (n = 814, p = 73577).
We show the active groups for the prediction of Air Temperature in a neighborhood of Dakar(location in blue). The regression coefficient are obtained by cross validation over
$100$ values of $\lambda$ and $11$ values of $\tau$. At each location, we present
the highest absolute value among the seven coefficients.}
\label{fig:support_map}
\end{figure*}

In our experiments, which aim to illustrate the computational benefit of the proposed method, we considered as target variable $y \in \bbR^{814}$, the values of \textit{Air Temperature} in a neighborhood of Dakar.
For preprocessing, we remove the seasonality and the trend present in the dataset. This is usually done in climate analysis to prevent some bias in the regression estimates. Similar data have been used in the past by \citet{chatterjee_Steinhaeuser_Banerjee_Chatterjee_Ganguly_12}, demonstrating that the
\SGL estimator is well suited for prediction in such climatology applications.
Indeed, thanks to the sparsity structure the estimates delineate via their support some predictive regions at the group level, as well as predictive feature via coordinate-wise screening.

We choose the parameter $\tau$ in the set $\{0, 0.1, \ldots, 0.9, 1\}$ by splitting in $50\%$ the observations and run a training-test validation procedure. For each value of $\tau$, we require a duality gap of $10^{-8}$ on the training part and pick the best one in term of prediction accuracy on the test part.
The result is displayed in Figure \ref{fig:cross_val}. Since the prediction error degrades increasingly
for $\lambda \leq \lambda_{\max}/10^{-2.5}$, we fix $\delta=2.5$ for the computational time benchmark
in Figure~\ref{fig:bench_time_clim}.

\subsection{Performance of the screening rules} \label{sec:performance_screening_rule}

In all our experiments, we observe that our proposed Gap Safe rule outperforms the other rules in term of computation time. On Figure~\ref{fig:lowcorr_comptime}, we can see that we need $65$s to reach convergence whereas others rules need up to $212$s at a precision of $10^{-8}$. A similar performance is observed on the real dataset (Figure~\ref{fig:bench_time_clim})
where we obtain up to a $5$x speed up
over the other rules.
The key reason behind this performance gain is the convergence
of the Gap Safe regions toward the dual optimal point as well as the efficient strategy to compute the screening rule. As shown in the
results presented on Figure~\ref{fig:synth_dataset}, our method still manages to screen out variables when $\lambda$ is small. It corresponds to low regularizations which lead to less sparse solutions but need to be explored during cross-validation.

In the climate experiments, the support map in Figure~\ref{fig:support_map} shows that the
most important coefficients are distributed in the vicinity of the target region
(in agreement with our intuition). Nevertheless, some active variables with small coefficients remain and cannot be screened out.

Note that we do not compare our method to the \textbf{TLFre} \citep{Wang_Ye14}, since this sequential rule requires the exact
knowledge of the dual optimal solution which is not
available in practice. As a consequence, one may discard active variables
which can prevent the algorithm from converging as shown in
\citep[Figure~4]{Ndiaye_Fercoq_Gramfort_Salmon15} for the Group-Lasso.
This issue still occurs with the method explored by \citet{Lee_Xing14} for overlapping groups.


\section{Conclusion} The recent GAP safe rules introduced in \cite{Fercoq_Gramfort_Salmon15,Ndiaye_Fercoq_Gramfort_Salmon15} for a wide range of regularized regression have shown great improvements in the reduction of computational burden specially in high dimension. A thorough investigation of the \SGL norm allows us to generalize the GAP safe rule to the \SGL problem. We give a new description of the dual feasible set by establishing a connection between the \SGL norm and the $\epsilon$-norm. This new point of view on the geometry of the problem helps providing an efficient algorithm to compute the dual norm and dual feasible points. Extending GAP safe rules on more general hierarchical regularizations \cite{Wang_Ye15}, is a possible direction for future research.


\vskip 0.2in
\bibliography{references_all}
\bibliographystyle{icml2016}

\appendix
\newpage
\onecolumn

\section{Additional convexity and optimization tools}
In what follows we will use the dot product notation for any $x,x' \in \bbR^d$ we write $\langle x,x'\rangle=x^\top x'$.

We denote by $\iota_C$ the indicator function of a set $C$ defined as
\begin{equation}
\iota_C: \bbR^d \rightarrow \bbR, \quad
\iota_C (x) =
\begin{cases}
 0, & \text{ if } x \in C, \\
 +\infty, &\text{ otherwise.}
\end{cases}
\end{equation}

We denote by $f^*:\bbR^d \rightarrow \bbR$ the Fenchel conjugate of $f$ defined
for any $z \in \bbR^d$ by $f^* (z) = \sup_{w \in \bbR^d} w^\top z - f(w)$.

\begin{proposition}
(\citet[Prop. 1.4]{Bach_Jenatton_Mairal_Obozinski12})
The Fenchel conjugate of the norm $\Omega$ is given by
\begin{equation}
\Omega^{*}(\xi) =  \sup_{w \in \bbR^d} [\xi^{\top} w - \Omega(w)] =
\iota_{\mathcal{B}_{\Omega^D}} (\xi).
\end{equation}
\end{proposition}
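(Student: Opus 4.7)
The plan is to split on whether $\xi$ lies in the dual unit ball $\mathcal{B}_{\Omega^D}$ and check that the Fenchel conjugate matches the indicator on both sides. The only tool needed is the defining inequality of the dual norm, $\xi^{\top} w \leq \Omega^{D}(\xi)\,\Omega(w)$ for all $w \in \bbR^d$, together with the fact that this inequality is tight up to scaling in the direction realizing $\Omega^D(\xi)$.

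First I would handle the case $\xi \in \mathcal{B}_{\Omega^D}$, i.e.\ $\Omega^D(\xi) \leq 1$. The dual norm inequality gives $\xi^\top w - \Omega(w) \leq (\Omega^D(\xi) - 1)\Omega(w) \leq 0$ for every $w$, and equality is attained at $w=0$. Hence the supremum defining $\Omega^*(\xi)$ equals $0 = \iota_{\mathcal{B}_{\Omega^D}}(\xi)$.

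Second, I would handle the case $\xi \notin \mathcal{B}_{\Omega^D}$, i.e.\ $\Omega^D(\xi) > 1$. By definition of $\Omega^D$ as a maximum over the closed unit ball $\mathcal{B}_{\Omega}$, there exists $w_0 \in \mathcal{B}_\Omega$ with $\xi^\top w_0 = \Omega^D(\xi) > 1$, and in particular $\Omega(w_0) \leq 1$. Plugging in the scaled test vector $t w_0$ with $t>0$ yields
\begin{equation*}
\xi^\top(t w_0) - \Omega(t w_0) = t\bigl(\xi^\top w_0 - \Omega(w_0)\bigr) \geq t\bigl(\Omega^D(\xi) - 1\bigr) \xrightarrow[t\to\infty]{} +\infty,
\end{equation*}
so the supremum is $+\infty = \iota_{\mathcal{B}_{\Omega^D}}(\xi)$, concluding the proof.

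There is no real obstacle here; the only subtlety is making sure the supremum in the definition of $\Omega^D$ is indeed attained (which follows from continuity of $w\mapsto \xi^\top w$ on the compact set $\mathcal{B}_\Omega$), so that one can exhibit a direction $w_0$ of strict increase in the second case. Everything else is a direct application of the dual-norm inequality.
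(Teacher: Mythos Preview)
Your argument is correct and is the standard two-case proof of this classical fact. Note that the paper does not actually supply a proof of this proposition: it is stated as a quoted result from \citet[Prop.~1.4]{Bach_Jenatton_Mairal_Obozinski12} and used as a black box, so there is no in-paper proof to compare against. Your write-up is exactly the usual justification and would serve perfectly well as the missing proof.
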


\section{Proofs}

\begin{repproposition}{prop:theoretical_screening_rules}[Theoretical screening rules]
The two levels of screening rules for the \SGL are: \\
\textbf{Feature level screening: }
\begin{equation*}
\forall j \in g, \,
|X_j^\top \ttheta{\lambda,\tau,w}| < \tau
\Longrightarrow \tbeta{\lambda,\tau,w}_j = 0.
\end{equation*}
\textbf{Group level screening: }
\begin{equation*}
\forall g \in \mathcal{G}, \,
\|\mathcal{S}_\tau(X_g^\top \ttheta{\lambda,\tau,w})\|
< (1-\tau) w_g \Longrightarrow \tbeta{\lambda,\tau,w}_g = 0.
\end{equation*}
\end{repproposition}

\begin{proof}

Let us consider $\tbeta{\lambda,\tau,w}_g \neq 0$, $g \in \mathcal{G}$. Then
combining the \textbf{subdifferential inclusion}
\eqref{eq:sub-differential_inclusion}, the subdifferential of the $\ell_2$-norm
\eqref{eq:sub-differential_2} and the decomposition of any dual feasible point
\eqref{rq:decomposition_of_dual_feasible_point}, we obtain :
\begin{align*}
X_{g}^{\top} \ttheta {\lambda,\tau,w} &= \tau v_g + (1-\tau) w_g
\frac{\tbeta{\lambda,\tau,w}_g}{\norm{\tbeta{\lambda,\tau,w}}}
\text{ where } v \in \partial \norm{\cdot}_1(\tbeta{\lambda,\tau,w}), \\
 X_{g}^{\top} \ttheta {\lambda,\tau,w} &=
\Pi_{\tau \mathcal{B}_{\infty}}(X_{g}^{\top} \ttheta {\lambda,\tau,w}) +
\ST{\tau} (X_{g}^{\top} \ttheta{\lambda,\tau,w}).
\end{align*}

So we can deduce that
$\ST{\tau} (X_{g}^{\top} \ttheta {\lambda,\tau,w}) \in (1-\tau) w_g
\left\{ \frac{\tbeta{\lambda,\tau,w}_g}{\norm{\tbeta{\lambda,\tau,w}_g}}
\right\}$.
Since $\ttheta {\lambda,\tau,w}$ is feasible
then
$\|\ST{\tau} (X_{g}^{\top} \ttheta {\lambda,\tau,w})\| < (1-\tau)w_g$ is
equivalent to
$\|\ST{\tau} (X_{g}^{\top} \ttheta {\lambda,\tau,w})\| \neq (1-\tau)w_g $
which implies, by contrapositive, that $\tbeta{\lambda,\tau,w}_g = 0$. Hence we
obtain the group level safe rule.
Furthermore, from the subdifferential of the
$\ell_1$-norm \eqref{eq:sub-differential_1}, we have:
\begin{equation*}\label{eq:Fermat_non_zero_case}
\forall j \in g, \, X_{j}^{\top} \ttheta{\lambda,\tau,w} \in
\begin{cases}
(1-\tau)w_g \left\{
\frac{\tbeta{\lambda,\tau,w}_j}{\|\tbeta{\lambda,\tau,w}\|} \right\} +
\tau \left\{ \sign(\tbeta{\lambda,\tau,w}_j) \right\},
&\text{ if } \tbeta{\lambda,\tau,w}_j \neq 0 , \\
[-\tau,\tau], &\text{ if } \tbeta{\lambda,\tau,w}_j = 0.
\end{cases}
\end{equation*}
Hence, if $\tbeta{\lambda,\tau,w}_j \neq 0$ then
$X_{j}^{\top} \ttheta{\lambda,\tau,w} = \sign(\tbeta{\lambda,\tau,w}_j) \left[
(1-\tau)w_g
\frac{|\tbeta{\lambda,\tau,w}_j|}{\|\tbeta{\lambda,\tau,w}\|} + \tau \right]$
and so $ |X_{j}^{\top} \ttheta{\lambda,\tau,w}| \geq \tau$.
By contrapositive, we obtain the feature level safe rule.
\end{proof}

\begin{repproposition}{prop:screening_bound}
For all group $g \in \mathcal{G}$ and $j \in g$,
\begin{equation}
\max_{\theta \in \mathcal{B}(\theta_c, r)} |X_j^\top \theta| \leq
|X_j^\top \theta_c| + r \|X_j\|.
\end{equation}

$\max_{\theta \in \mathcal{B}(\theta_c, r)} \| \ST{\tau} (X_g^\top \theta) \|$
is upper bounded by
\begin{equation}
\begin{cases}
 \|\ST{\tau} (X_g^\top \theta_c)\| + r \|X_g\| &\text{ if }
 \|X_g^\top \theta_c\|_{\infty} > \tau, \\
 (\|X_g^\top \theta_c\|_{\infty} + r \|X_g\| - \tau)_+ &\text{ otherwise}.
\end{cases}
\end{equation}
\end{repproposition}

\begin{figure}
\centering
\subfigure[$\mathcal{B}(\xi_c,\tilde r) \cap \tau \mathcal{B}_\infty \neq \emptyset$;
$\xi_c \in \tau  \mathring{\mathcal{B}_\infty}$]
{\includegraphics[width=.3\linewidth]{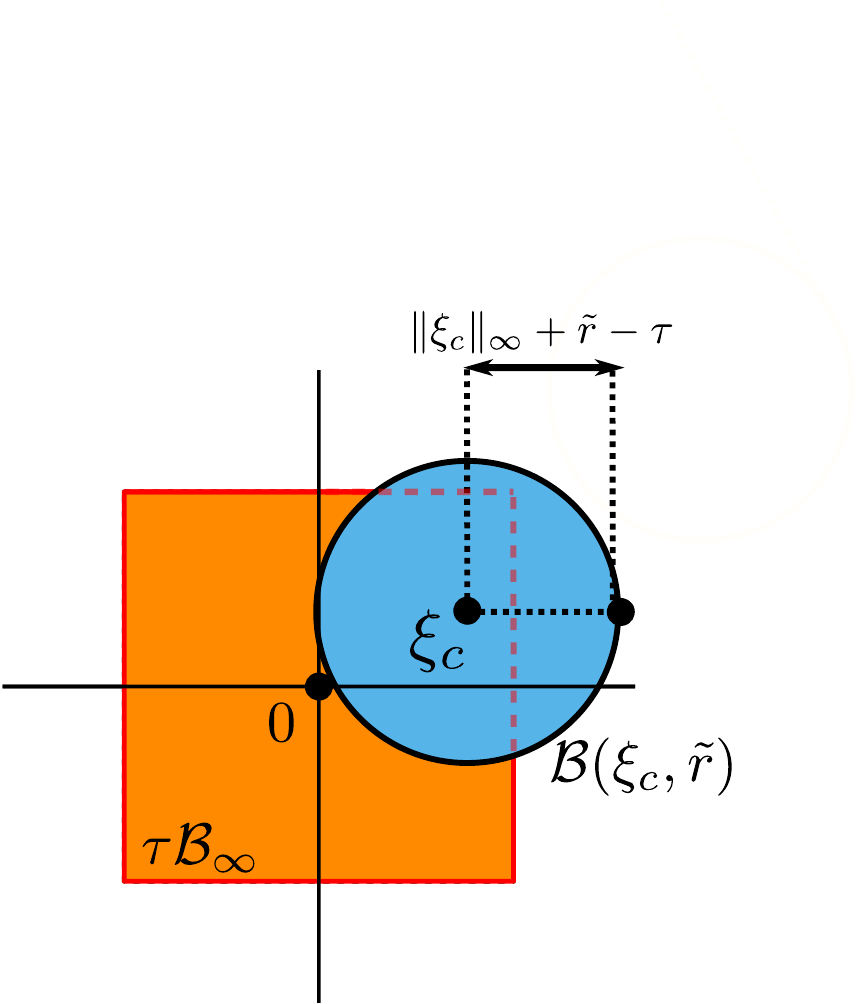}}
\subfigure[$\mathcal{B}(\xi_c, \tilde r) \subset \tau \mathcal{B}_\infty$]
{\includegraphics[width=.3\linewidth]{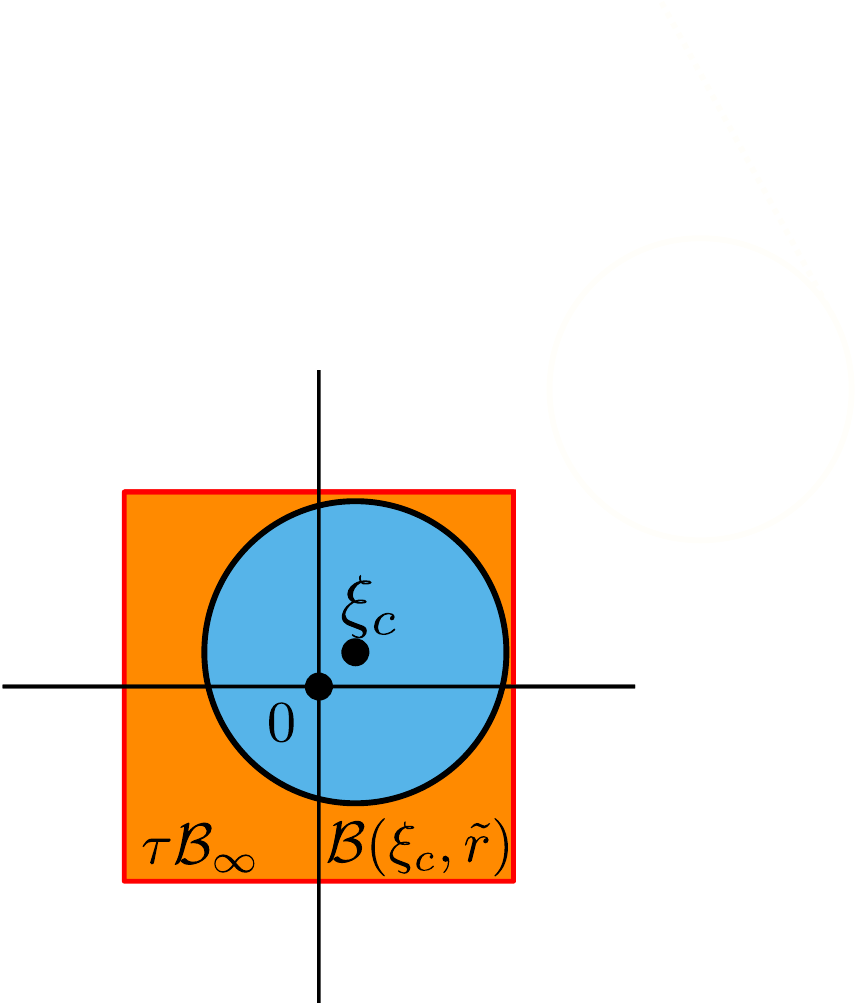}}
\subfigure[$\mathcal{B}(\xi_c, \tilde r) \cap \tau \mathcal{B}_\infty = \emptyset$;
$\xi_c \notin \tau \mathring{\mathcal{B}_\infty}$]
{\includegraphics[width=.3\linewidth]{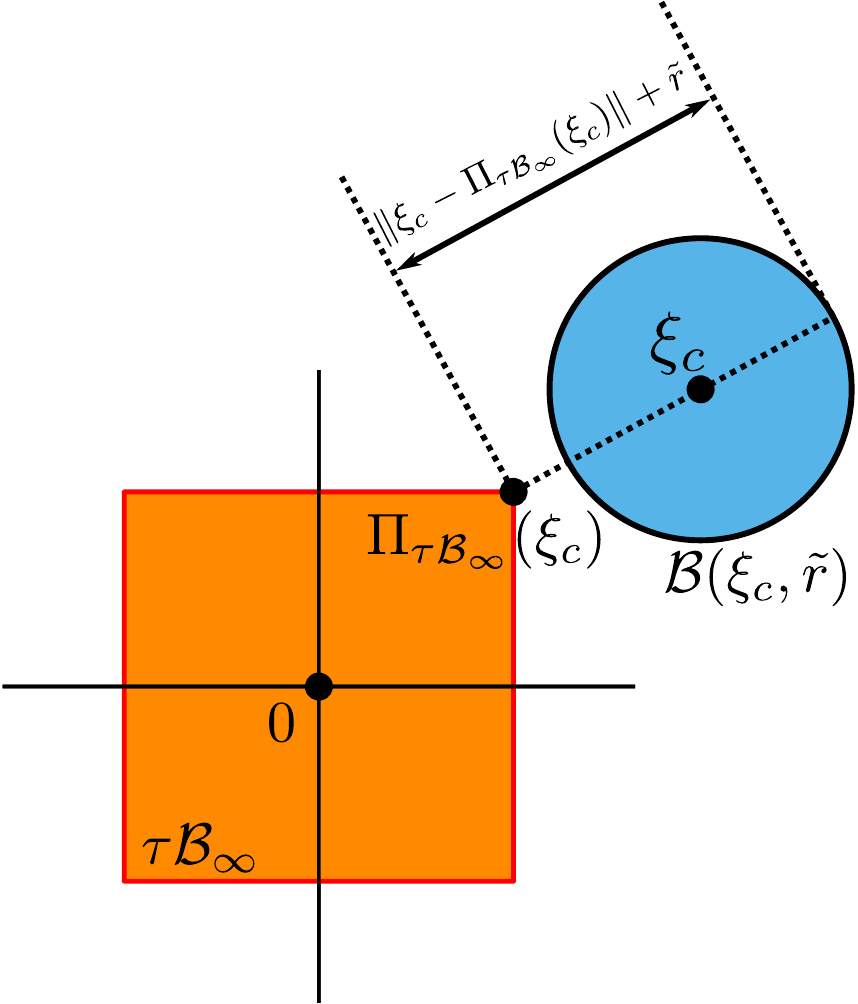}}
\end{figure}

\begin{proof}
$|X_{j}^{\top} \theta| \leq |[X_g^\top (\theta - \theta_c)]_j| + |X_j^\top \theta_c|
\leq r \|X_j\| + |X_j^\top \theta_c|
$ as soon as $\theta \in \mathcal{B}(\theta_c, r)$.

Since $\theta \in \mathcal{B}(\theta_c, r)$ implies that
$X_g^\top \theta \in \mathcal{B}(X_g^\top \theta_c, r \|X_g\|)$, we have
$\max_{\theta \in \mathcal{B}(\theta_c, r)} \|\ST{\tau} (X_g^\top \theta)\| \leq
\max_{\xi \in \mathcal{B}(\xi_c, \tilde r)} \|\ST{\tau} (\xi)\|$
where $\xi_c = X_{g}^{\top}\theta_c$ and $\tilde r = r \norm{X_j}$. From now,
we just have to show how to compute $\max_{\xi \in \mathcal{B}(\xi_c, \tilde r)}
\|\ST{\tau} (\xi)\|$.
\begin{itemize}
 \item In the case where $\xi_c \in \mathring{\tau \mathcal{B}_{\infty}}$, if
$\|\xi_c\|_{\infty} + \tilde r \leq \tau \,
( \text{ \ie } \mathcal{B}(\xi_c, \tilde r) \subset \tau \mathcal{B}_{\infty})$,
we have
$\Pi_{\tau \mathcal{B}_{\infty}} (\xi) = \xi$ and thus,
$\max_{\xi \in \mathcal{B}(\xi_c, \tilde r)} \|\ST{\tau}(\xi)\| =
\max_{\xi \in \mathcal{B}(\xi_c, \tilde r)}
\|\xi - \Pi_{\tau \mathcal{B}_{\infty}}(\xi)\| = 0.$

\item Otherwise if $\xi_c \in \mathring{\tau \mathcal{B}_{\infty}}$
and $\|\xi_c\|_{\infty} + \tilde r > \tau$, for any vector
$\xi \in \partial \mathcal{B}(\xi_c, \tilde r) \cap (\tau
\mathcal{B}_{\infty})^{c}$
and any vector $\tilde \xi \in \partial \tau \mathcal{B}_{\infty} \cap [\xi,
\xi_c]$,  $\|\xi - \Pi_{\tau \mathcal{B}_{\infty}}(\xi)\| \leq
\|\xi - \tilde \xi\| = \tilde r - \|\tilde \xi - \xi_c\|
$. Hence
\begin{equation*}
 \max_{\xi \in \mathcal{B}(\xi_c, \tilde r)}
 \|\xi - \Pi_{\tau \mathcal{B}_{\infty}}(\xi) \| \leq
\max_{\underset{\tilde \xi\in \partial \tau \mathcal{B}_{\infty} \cap [\xi,
\xi_c]}{\xi \in \partial \mathcal{B}(\xi_c, \tilde r)\cap (\tau
\mathcal{B}_{\infty})^{ c}}
}
\tilde r - \|\tilde \xi - \xi_c\|
\leq \tilde r - \min_{\xi \in \partial \tau \mathcal{B}_{\infty}} \|\xi - \xi_c\|
 = \tilde r - \tau + \|\xi_c\|_{\infty}.
\end{equation*}
This upper bound is attained. Indeed,
$ \max_{\theta \in \mathcal{B}(\xi_c, \tilde r)}
 \|\xi - \Pi_{\tau \mathcal{B}_{\infty}}(\xi) \|
= \tilde r - \|\Pi_{\tau \mathcal{B}_{\infty}}( \hat \xi) - \xi_c\|
= \tilde r - \tau + \|\xi_c\|_{\infty}
$ where $\hat \xi$ is a vector in
$\partial \mathcal{B}(\xi_c, \tilde r)$ such that
$\Pi_{\tau \mathcal{B}_{\infty}}( \hat \xi) = \xi_c + e_{j^{\star}} (\tau -
\|\xi_c\|_{\infty})$ and $j^{\star} \in \argmax_{j \in [p]} |(\xi_c)_j|$.

\item If $\xi_c \notin \mathring{\tau \mathcal{B}_{\infty}}$, since
 the projection operator on a convex set is a contraction, we have
\begin{equation*}
 \forall \xi \in \partial \mathcal{B}(\xi_c, \tilde r), \,
\|\xi - \Pi_{\tau \mathcal{B}_{\infty}}(\xi)\|  \leq
\|\xi - \Pi_{\tau \mathcal{B}_{\infty}}(\xi_c)\|
\leq \|\xi_c - \Pi_{\tau \mathcal{B}_{\infty}}(\xi_c)\| + \|\xi - \xi_c\|
= \|\xi_c - \Pi_{\tau \mathcal{B}_{\infty}}(\xi_c)\| + \tilde r.
\end{equation*}

Moreover, it is straightforward to see that the vector
$\tilde \xi :=  \tilde \gamma \xi_c + (1 - \tilde \gamma) \Pi_{\tau
\mathcal{B}_{\infty}}(\xi_c)$ where
$\tilde \gamma = 1 + \frac{\tilde r}{\norm{\xi_c} + \norm{\Pi_{\tau
\mathcal{B}_{\infty}}(\xi_c)}}$ belongs to
$\partial \mathcal{B}(\xi_c, \tilde r)$; it verifies $\Pi_{\tau
\mathcal{B}_{\infty}}(\xi_c) =
\Pi_{\tau \mathcal{B}_{\infty}}(\tilde \xi)$
and it attains this bound. \qedhere
\end{itemize}
\end{proof}

\begin{reptheorem}{th:GAP_Safe_sphere}[Safe radius]
For any $\theta \in \dualomegasgl$ and any $\beta\in \bbR^p$, one has
$ \ttheta{\lambda,\tau, w} \in
\mathcal{B}\left(\theta,\bestR[\lambda,\tau]{\beta}{\theta}\right),$ for
\begin{align*}
\bestR[\lambda,\tau]{\beta}{\theta}&=\sqrt{\frac{2(P_{\lambda,\tau,w}(\beta)-
D_{\lambda}(\theta))}{\lambda^2}},
\end{align*}
\ie the aforementioned ball is a safe
region for the \SGL problem.
\end{reptheorem}

\begin{proof}
By weak duality,
$ \forall \beta \in \bbR^p, \,D_{\lambda}(\ttheta{\lambda,\tau,w}) \leq
P_{\lambda,\tau,w}(\beta)$.
Then, note that the dual objective function \eqref{eq:general_primal_problem}
is $\lambda^2$-strongly concave. This implies:
\begin{equation*}
\forall (\theta, \theta') \in \dualomegasgl \times \dualomegasgl, \quad
D_{\lambda}(\theta) \leq
D_{\lambda}(\theta') + \nabla D_{\lambda}(\theta')^\top
(\theta-\theta') -\frac{\lambda^2}{2}
\norm{\theta - \theta'}^2.
\end{equation*}

Moreover, since $\ttheta{\lambda,\tau,w}$ maximizes the concave function
$D_{\lambda}$,  the following inequality holds true:

\begin{equation*}
\forall \, \theta \in \dualomegasgl,\quad \nabla
D_{\lambda}(\ttheta{\lambda,\tau,w})^\top (\theta-\ttheta{\lambda,\tau,w})
\leq 0.
\end{equation*}
Hence, we have for all $\theta \in \dualomegasgl$ and $\beta\in \bbR^p$:
\begin{align*}
\frac{\lambda^2}{2} \|\theta-\ttheta{\lambda,\tau,w}\|^2 &\leq
D_{\lambda}(\ttheta{\lambda,\tau,w}) - D_{\lambda}(\theta) \\
&\leq P_{\lambda,\tau,w}(\beta)- D_{\lambda}(\theta). \qedhere
\end{align*}
\end{proof}

%


\begin{repproposition}{prop:dual_convergence}
 If $\lim_{k \to \infty} \beta_k = \tbeta{\lambda, \tau, w} $, then
 $\lim_{k \to \infty} \theta_k = \ttheta{\lambda, \tau, w} $.
\end{repproposition}

\begin{proof}
Let $\alpha_k = \max(\lambda, \normsgl^D(X^\top \rho_k))$ and recall that
$\rho_k = y - X\beta_k$. We have :
\begin{align*}
\norm{\theta_k-\ttheta{ \lambda, \tau,w}} &=
\norm{\frac{1}{\alpha_k}(y-X\beta_k) - \frac{1}{\lambda} (y-X \tbeta{\lambda, \tau, w})} \\
&= \norm{ \left(\frac{1}{\alpha_k}
-\frac{1}{\lambda}\right)(y-X\beta_k) -
\frac{(X \tbeta{\lambda, \tau,w} - X\beta_k)}{\lambda}}\\
&\leq \left|\frac{1}{\alpha_k} -\frac{1}{\lambda}\right|
\norm{y-X \beta_k} +
\norm{\frac{X \tbeta{\lambda, \tau,w} - X \beta_k}{\lambda}}.
\end{align*}
If $\beta_k \rightarrow \tbeta{\lambda, \tau, w}$, then $\alpha_k \rightarrow
\max(\lambda, \normsgl^D(X^\top (y - X \tbeta{\lambda,\tau,w}))=
\max(\lambda, \lambda \normsgl^D(X^\top \ttheta{\lambda, \tau, w})) = \lambda$ since
$y - X \tbeta{\lambda,\tau,w} = \lambda \ttheta{\lambda, \tau, w}$ thanks to the \textbf{link-equation}
\eqref{eq:link_equation} and since $\ttheta{\lambda, \tau, w}$ is feasible \ie
$\normsgl^D(X^\top \ttheta{\lambda, \tau, w}) \leq 1$. Hence, both terms in the previous inequality converge to zero.
\end{proof}

\begin{repproposition}{prop:convergence_regions}
Let $(\mathcal{R}_k)_{k \in \bbN}$ be a sequence of safe regions whose diameters converge to 0. Then,
$\displaystyle \lim_{k \rightarrow \infty}
\mathcal{A}_{\text{groups}}(\mathcal{R}_k) = \mathcal{E}_{\text{groups}}$ and
$\displaystyle \lim_{k \rightarrow \infty}
\mathcal{A}_{\text{features}}(\mathcal{R}_k) = \mathcal{E}_{\text{features}}$.
\end{repproposition}

\begin{proof}
We proceed by double inclusion. First let us prove that $\exists k_0$ s.t. $\forall k \geq k_0, \mathcal{A}_{\text{groups}}(\mathcal{R}_k) \subset \mathcal{E}_{\text{groups}}$. Indeed, since the diameter of $\mathcal{R}_k$ converges to zero, for any $\epsilon > 0$ there exist $k_0 \in \bbN, \forall k \geq k_0, \forall \theta \in \mathcal{R}_k, \| \theta - \ttheta{\lambda,\tau,w} \| \leq \epsilon$.
The triangle inequality implies that $\forall g \notin \mathcal{E}_{\text{groups}}$, $\|\ST{\tau}(X_{g}^{\top} \theta)\| \leq \|\ST{\tau}(X_{g}^{\top} \theta) - \ST{\tau}(X_{g}^{\top} \ttheta{\lambda,\tau,w})\| + \|\ST{\tau}(X_{g}^{\top} \ttheta{\lambda,\tau,w})\|$.
Since the soft-thresholding operator is $1$-Lipschitz, we have:
\begin{equation*}
\|\ST{\tau}(X_{g}^{\top} \theta)\| \leq \|X_g(\theta - \ttheta{\lambda,\tau,w})\| + \|\ST{\tau}(X_{g}^{\top} \ttheta{\lambda,\tau,w})\| \leq \epsilon \|X_g\| + \|\ST{\tau}(X_{g}^{\top} \ttheta{\lambda,\tau,w})\|,
\end{equation*}
as soon as $k \geq k_0$. Moreover, $\forall g \notin
\mathcal{E}_{\text{groups}}$,
\begin{equation*}
 \|\ST{\tau}(X_{g}^{\top} \theta)\|\leq \max_{g \notin \mathcal{E}_{\text{groups}}} \|\ST{\tau}(X_{g}^{\top} \theta)\| \leq \epsilon \max_{g \notin \mathcal{E}_{\text{groups}}} \|X_g\| + \max_{g \notin \mathcal{E}_{\text{groups}}} \|\ST{\tau}(X_{g}^{\top} \ttheta{\lambda,\tau,w})\|.
\end{equation*}
It suffices to choose $\epsilon$ such that
$$\epsilon \max_{g \notin \mathcal{E}_{\text{groups}}} \|X_g\| + \max_{g \notin \mathcal{E}_{\text{groups}}} \|\ST{\tau}(X_{g}^{\top} \ttheta{\lambda,\tau,w})\| < (1-\tau) w_g, $$
that is to say $\epsilon < \frac{(1-\tau)w_g - \max_{g \notin \mathcal{E}_{\text{groups}}} \|\ST{\tau}(X_{g}^{\top} \ttheta{\lambda,\tau,w})\|}{\max_{g \notin \mathcal{E}_{\text{groups}}} \|X_g\|}$,
to remove the group $g$. For any $k \geq k_0, \,
\mathcal{E}_{\text{groups}}^{c} = \{ g \in \mathcal{G}: \|\mathcal{S}_{\tau}(X_{g}^{\top} \ttheta{\lambda})\| < (1-\tau)w_g \} \subset \mathcal{A}_{\text{groups}}(\mathcal{R}_k)^c$, the set of variables removed by our screening rule.
This proves the first inclusion.

Now we show that $\forall k \in \bbN, \mathcal{A}_{\text{groups}}(\mathcal{R}_k) \supset \mathcal{E}_{\text{groups}}$. Indeed, for all $g^{\star} \in \mathcal{E}_{\text{groups}}$, $\|\ST{\tau}(X_{g^{\star}}^{T} \ttheta{\lambda,\tau,w})\| = (1-\tau)w_{g^{\star}}$. Since for all $k$ in $\bbN$, $\ttheta{\lambda,\tau,w} \in \mathcal{R}_k$ then $\underset{\theta \in \mathcal{R}_k}{\max} \|\ST{\tau}(X_{g}^{\top} \theta)\| \geq \|\ST{\tau}(X_{g^{\star}}^{T} \ttheta{\lambda,\tau,w}) \| = (1-\tau) w_{g^{\star}}$ hence the second
inclusion holds.

We have shown that that $\forall k \geq k_0$
$\mathcal{A}_{\text{groups}}(\mathcal{R}_k) =
\mathcal{E}_{\text{groups}}$ and so
$\mathcal{A}_{\text{features}}(\mathcal{R}_k) \subset
\bigcup_{g \in \mathcal{E}_{\text{groups}}}
\left\{ j \in g: \, \max_{\theta \in \mathcal{R}_k}
|X_{j}^{\top} \theta| \geq \tau \right\}$.
Moreover, the same reasoning yields $\forall g \in \mathcal{G}$,
$\left\{ j \in g: \, \max_{\theta \in \mathcal{R}_k}
|X_{j}^{\top} \theta| \geq \tau \right\} \subset
\left\{j \in g: \,|X_{j}^{\top} \ttheta{\lambda,\tau,w}| \geq \tau \right\}$.
Hence $\forall k \geq k_0, \mathcal{A}_{\text{features}}(\mathcal{R}_k) \subset
\mathcal{A}_{\text{features}}$. The reciprocal inclusion is straightforward.
\end{proof}

\begin{repproposition}{prop:properties_of_sgl}.
For all group $g$ in $\mathcal{G}$, let
$\epsilon_g := \frac{(1 - \tau)w_g}{\tau + (1 - \tau)w_g}$
 then the \SGL norm satisfies the following properties:  for any vectors
$\beta$ and $\xi$ in $\bbR^p$
\vspace{-0.2cm}
\begin{align}
&\normsgl(\beta) = \sum_{g \in \mathcal{G}} (\tau + (1 - \tau) w_g)
\|\beta_g\|_{\epsilon_g}^{D} \\
&\normsgl^{D}(\xi) = \max_{g \in \mathcal{G}}
\frac{\norm{\xi_g}_{\epsilon_g}}{\tau + (1 - \tau) w_g}. \\
&\mathcal{B}_{\normsgl^D} =
\big\{\xi \in \bbR^p : \forall g \in \mathcal{G},
\|\ST{\tau}(\xi_g)\| \leq (1 - \tau)w_g \big\}
\end{align}
The subdifferential $\partial\normsgl(\beta)$ of the norm $\normsgl$ at $\beta$
is given by
\begin{equation*}
\bigg\{x \in \bbR^p: \forall g \in
\mathcal{G}, x_g \in \tau \partial \|\cdot\|_{1}(\beta_g) +
(1 - \tau) w_g \partial \|\cdot\|(\beta_g) \bigg\}
\end{equation*}
\end{repproposition}

\begin{proof}
\begin{align*}
\forall \beta \in \bbR^p, \,
 \Omega(\beta) &= \tau \|\beta\|_1 +
 (1 - \tau)\sum_{g \in \mathcal{G}} w_g \|\beta_g\|
 = \sum_{g \in \mathcal{G}}
\big(\tau \|\beta_g\|_1 +  (1 - \tau) w_g \|\beta_g\|\big) \\
 &= \sum_{g \in \mathcal{G}} (\tau + (1 - \tau) w_g)
 \left[\frac{\tau}{\tau + (1 - \tau) w_g} \|\beta_g\|_1 +
 \frac{(1 - \tau) w_g}{\tau + (1 - \tau) w_g} \|\beta_g\|
 \right] \\
 &=\sum_{g \in \mathcal{G}} (\tau + (1 - \tau) w_g)
\left[(1 - \epsilon_g) \|\beta_g\|_1 + \epsilon_g \|\beta_g\|\right]
= \sum_{g \in \mathcal{G}} (\tau + (1 - \tau) w_g)
\|\beta_g\|_{\epsilon_g}^{D}
\end{align*}

The definition of the dual norm reads $\displaystyle \Omega^D(\xi)= \max_{\beta:\Omega(\beta)\leq 1 } \beta^\top \xi$, and solving this problem yields:
\begin{align*}
 \Omega^D(\xi) &= \sup_{\beta: \Omega(\beta) \leq 1} \langle \beta, \xi \rangle
= \sup_{\beta} \inf_{\mu > 0}
\langle \beta, \sum_{g \in \mathcal{G}} \xi_g \rangle -
\mu \left(\sum_{g \in \mathcal{G}} \Omega_g(\beta_g) - 1 \right) \\
&= \inf_{\mu > 0} \left\{
\sum_{g \in \mathcal{G}} \sup_{\beta_g} \left[
\langle \beta_g, \xi_g \rangle - \mu \Omega_g(\beta_g) \right]
+ \mu \right\} \\
&= \inf_{\mu > 0} \left\{ \sum_{g \in \mathcal{G}} \mu
\Omega_{g}^{*}\left(\frac{\xi_g}{\mu}\right) + \mu \right\} =
\inf_{\mu > 0} \left\{ \sum_{g \in \mathcal{G}}
\iota_{\mathcal{B}_{\Omega_{g}^{D}}} \left(\frac{\xi_g}{\mu}\right) +
\mu \right\} \\
&= \inf_{\mu > 0} \left\{ \max_{g \in \mathcal{G}}
\iota_{\mathcal{B}_{\Omega_{g}^{D}}} \left(\frac{\xi_g}{\mu}\right) +
\mu \right\} =
\max_{g \in \mathcal{G}} \inf_{\mu > 0} \left\{
\Omega_{g}^{*} \left(\frac{\xi_g}{\mu}\right) + \mu \right\} \\
&= \max_{g \in \mathcal{G}} \inf_{\mu > 0} \sup_{\beta_g}
\langle \beta_g, \frac{\xi_g}{\mu} \rangle - \Omega_g(\beta_g) + \mu
\underset{\mu u_g = \beta_g}{=}
\max_{g \in \mathcal{G}} \inf_{\mu > 0} \sup_{u_g}
\langle u_g, \xi_g \rangle - \mu(\Omega_g(u_g) - 1) \\
&= \max_{g \in \mathcal{G}} \sup_{u_g: \Omega_g(u_g) \leq 1} \langle u_g,
\xi_g \rangle
= \max_{g \in \mathcal{G}} \sup_{u_g} \, \langle u_g, \xi_g \rangle \quad
\text{ s.t. } (\tau + (1-\tau)w_g) \norm{u_g}_{\epsilon_g}^{D} \leq 1 \\
&= \max_{g \in \mathcal{G}} \sup_{u_g: \Omega_g(u_g) \leq 1} \langle u_g,
\xi_g \rangle =
\max_{g \in \mathcal{G}} \sup_{{u'}_g:
\norm{{u'}_g}_{\epsilon_g}^{D} \leq 1}
\langle \frac{{u'}_g}{\tau + (1-\tau)w_g}, \xi_g \rangle =
\max_{g \in \mathcal{G}} \frac{\norm{\xi_g}_{\epsilon_g}}{\tau + (1-\tau)w_g}.
\end{align*}

We recall here the proof of \citet{Wang_Ye14} for the sake of completeness.
First let us write $\Omega(\beta) = \Omega_1(\beta) + \Omega_2(\beta)$, where
$\Omega_1(\beta) = \tau \norm{\beta}_1$ and
$\Omega_2(\beta) = (1 - \tau) \sum_{g \in \mathcal{G}} w_g \norm{\beta_g}_2$.
Since $\Omega_1$ and $\Omega_2$ are continuous everywhere, we have
(see \citet[Theorem 1]{Hiriart-Urruty06}):
$\Omega^{*}(\xi) = (\Omega_1 + \Omega_2)^{*}(\xi) =
 \min_{a + b = \xi} [\Omega_{1}^{*}(a) + \Omega_{2}^{*}(b)] =
 \min_{a} [\Omega_{1}^{*}(a) + \Omega_{2}^{*}(\xi - a)]$,
 which is also the inf-convolution (see \citet[Chapter
12]{Bauschke_Combettes11}) of these two norms.
Using the Fenchel conjugate of the $\ell_1$ norm ($\Omega_1^* = \iota_{\tau \mathcal{B}_{\infty}}$) and of the $\ell_2$ norm ($\Omega_2^*=\iota_{\mathcal{B}}$), we have
\begin{equation*}
\Omega^{*}(\xi) = \sum_{g \in \mathcal{G}} \min_{a_g}
\iota_{\tau \mathcal{B}_{\infty}} (a_g) +
\iota_{\mathcal{B}} \left(\frac{\xi_g - a_g}{(1 - \tau)w_g}\right) =
\sum_{g \in \mathcal{G}}
\iota_{\mathcal{B}} \left(\frac{\xi_g -
\Pi_{\tau \mathcal{B}_{\infty}}(\xi_g)}{(1 - \tau)w_g}\right).
\end{equation*}

Hence the indicator of the unit dual ball is
$ \iota_{\mathcal{B}_{\Omega^D}} (\xi) =
\sum_{g \in\mathcal{G}} \iota_{(1 - \tau) w_g \mathcal{B}}
\left(\xi_g - \Pi_{\tau \mathcal{B}_\infty}(\xi_g)
\right)$ and using
$\ST{\tau}(\xi_g) = \xi_g - \Pi_{\tau \mathcal{B}_\infty}$, we have:
\begin{equation*}
\mathcal{B}_{\Omega^D} =
\big\{\xi \in \bbR^p: \Omega^D(\xi) \leq 1\big\}=
\big\{\xi \in \bbR^p: \forall g \in \mathcal{G}, \,
\|\ST{\tau}(\xi_g)\| \leq (1 - \tau)w_g \big\}.
\end{equation*}
\end{proof}

\begin{repproposition}{th:sparse_groupe_burdakov}.
For $\alpha \in [0,1], R\geq0$ and $x\in\bbR^d$, the equation $\sum_{j=1}^d
\ST{\nu \alpha}(x_j)^2 = (\nu R)^2$ has a unique
solution $\nu \in \bbR_+$, denoted by $\Lambda(x,\alpha,R)$ and that can be
computed in $O(d \log d)$ operations in worst case.
\end{repproposition}

\begin{proof}
Dividing by $\nu^2$, which is positive as soon as $x \neq 0$, we get that $\sum_{j=1}^d \ST{\nu \alpha}(x_j)^2 = (\nu R)^2$
is equivalent to $\sum_{j=1}^d \ST{\alpha}(x_j/\nu)^2 = R^2$.
Note that 
$\sum_{j=1}^d \ST{\alpha}(x_j/\nu)^2=\sum_{j=1}^d \ST{\alpha}(|x_j|/\nu)^2$ so without loss of generality we assume $x \in \bbR_{+}^{d}$.

The case $\alpha = 0$ and $R = 0$ corresponds to the situation where all $x_j$ are equal to zero or we impose $\nu$ equals to infinity. So we avoid this trivial case.

If $\alpha = 0$ and $R \neq 0$, $\nu = \|x\| / R$. Indeed,
\begin{equation*}
 \sum_{j=1}^d \ST{0}(x_j/\nu)^2 = R^2 \Longleftrightarrow
 \sum_{j=1}^d (x_j/\nu)^2 = R^2 \Longleftrightarrow
 \frac{\|x\|_{2}^{2} }{ \nu^2 } = R^2 \text{ hence the result}.
\end{equation*}
If $\alpha \neq 0$ and $R = 0$, we have :
\begin{equation*}
 \sum_{j=1}^d \ST{\alpha} \left(\frac{x_j}{\nu}\right)^2 = 0
\Longleftrightarrow
 \forall j \in [d], \left(\frac{x_j}{\nu} - \alpha\right)_+ = 0
\Longleftrightarrow
 \forall j \in [d], \frac{x_j}{\nu} \leq \alpha \Longleftrightarrow
 \nu \geq \frac{\max_{j \in [d]} x_j}{\alpha}.
\end{equation*}
So we choose the smallest $\nu$ \ie $\nu = \|x\|_{\infty} / \alpha$. In all the above cases, the computation is done in $O(d)$.

Otherwise $\alpha \neq 0$ and $R \neq 0$.
The function $\nu \mapsto \sum_{j=1}^d \ST{\alpha}(x_j/\nu)^2$ is a
non-increasing continuous function with limit
$+\infty$ (resp. $0$) when $\nu
\to 0$ (resp. $\nu \to +\infty$). Hence, there is a unique solution to
$\sum_{j=1}^d \ST{\alpha}(x_j/\nu)^2 = R^2$.

We denote by $x_{(1)},\dots,x_{(d)}$ the coordinates of $x$ ordered in
decreasing order (with the convention $x_{(0)}=+\infty$ and $x_{(d+1)}=0$). Note that $\sum_{j=1}^d \ST{\alpha}(x_j/\nu)^2 = \sum_{j=1}^d
\ST{\alpha}(x_{(j)}/\nu)^2$. Then, there exists an index $j_0 \in [p]$ such
that
\begin{equation} \label{eq:finding_j_0}
 R^2 \in \left[\sum_{j = 0}^d \ST{\alpha} \left(\alpha \frac{x_{(j)}}{x_{(j_0)}}
\right)^2,\sum_{j = 0}^d \ST{\alpha} \left(\alpha \frac{x_{(j)}}{x_{(j_0+1)}}
\right)^2 \right).
\end{equation}
For such a $j_0$, one can check that $\nu \in
(x_{(j_0+1)}/\alpha,x_{(j_0)}/\alpha]$.
The definition of the soft-thresholding operator yields
\begin{equation} \label{eq:threshold_effect}
\ST{\alpha}(x_j/\nu)^2 =
\begin{cases}(x_j/\nu-\alpha)^2 &
\text{if } x_j \geq \nu\alpha,\\ 0 &\text{if } x_j < \nu \alpha.
\end{cases}
\end{equation}
It can be simplified thanks to $x_j \geq x_{(j_0)} \Rightarrow x_j \geq \nu
\alpha$ and $x_j \leq x_{(j_0+1)}
\Rightarrow x_j < \nu \alpha$.
%
Hence, $R^2 = \sum_{j=1}^{j_0} (x_{(j)}/\nu-\alpha)^2 = \sum_{j=1}^{j_0}
(x_{(j)}/\nu)^2 + \alpha^2\sum_{j=1}^{j_0} 1 - 2\alpha\sum_{j=1}^{j_0}
x_{(j)}/\nu $ so solving $\sum_{j=1}^p \ST{\alpha}(x_{(j)}/\nu)^2 = R^2$
is equivalent to solve on $\bbR_+$
\begin{equation}\label{eq:lambda_equation}
(\alpha^2 j_0-R^2) \nu^2 - \left(2 \alpha\sum_{j=1}^{j_0}
x_{(j)}\right)\nu + \sum_{j=1}^{j_0}  x_{(j)}^2=0.
\end{equation}
If $(\alpha^2 j_0-R^2)=0$, then  $\nu=\sum_{j=1}^{j_0}  x_{(j)}^2/(2\alpha
\sum_{j=1}^{j_0}  x_{(j)})$. Otherwise
$\nu$ is the unique solution lying in
$(x_{(j_0+1)}/\alpha,x_{(j_0)}/\alpha]$ of
the quadratic equation stated in Eq.~\eqref{eq:lambda_equation}.

In the worst case, to compute $\Lambda(x,\alpha,R)$, one needs to sort a vector
of size $d$, what can be done in $O(d\log(d))$ operations, and finding $j_0$
thanks to \eqref{eq:finding_j_0} requires $O(d^2)$ if we apply a naive
algorithm.

In the following, we show that one can easily reduce the complexity to
$O(d\log(d))$ in worst case.
%

For all $j$ in $[d], \, \ST{\alpha} \left(\alpha \frac{x_j}{x_{j_0}} \right) =
0$ as soon as $x_j \leq x_{j_0}$. This implies that \eqref{eq:finding_j_0} is
equivalent to
\begin{equation} \label{eq:finding_j_0_cheap}
 R^2 \in \left[\sum_{j = 0}^{j_0 - 1} \ST{\alpha} \left(\alpha
\frac{x_{(j)}}{x_{(j_0)}} \right)^2,
               \sum_{j = 0}^{j_0} \ST{\alpha} \left( \alpha
\frac{x_{(j)}}{x_{(j_0+1)}} \right)^2
         \right).
\end{equation}
Denoting $S_{j_0} := \sum_{j=1}^{j_0} x_{(j)}$ and $S_{j_0}^{(2)} :=
\sum_{j=1}^{j_0} x_{(j)}^{2}$,
a direct calculation show that \eqref{eq:finding_j_0_cheap} can be rewritten as
\begin{equation}
 R^2 \in \alpha^2 \left[ \frac{S_{j_0 - 1}^{(2)}}{x_{(j_0)}^{2}} - 2 \frac{
S_{j_0 - 1}}{x_{(j_0)}} + j_0, \,
		       \frac{S_{j_0}^{(2)}}{x_{(j_0 + 1)}^{2}} - 2 \frac{
S_{j_0}}{x_{(j_0 + 1)}} + j_0 + 1
	        \right).
\end{equation}

Finally, solving $\sum_{j=1}^p \ST{\alpha}(x_{(j)}/\nu)^2 = R^2$ is
equivalent to finding the solution of
$(\alpha^2 j_0 - R^2)\nu^2 - (2\alpha S_{j_0})\nu + S_{j_0}^{(2)} = 0
$ lying in $(x_{(j_0+1)}/\alpha,x_{(j_0)}/\alpha]$. Hence,
\begin{equation} \label{eq:solve_lambda}
\Lambda(x,\alpha,R) = \frac{
\alpha S_{j_0} -
\sqrt{\alpha^2 S_{j_0}^{2} - S_{j_0}^{(2)}(\alpha^2 j_0 - R^2)}
}{\alpha^2 j_0 - R^2} =: \nu_1
\text{ or }
\Lambda(x,\alpha,R) = \frac{
\alpha S_{j_0} +
\sqrt{\alpha^2 S_{j_0}^{2} - S_{j_0}^{(2)}(\alpha^2 j_0 - R^2)}
}{\alpha^2 j_0 - R^2} =: \nu_2.
\end{equation}
%

\begin{itemize}
 \item If $\alpha^2 j_0 - R^2 < 0$, then $\nu_2 < 0$ and so it cannot be a
solution since $\Lambda(x,\alpha,R)$ must be positive.
  \item Otherwise, we have
  $\nu_2 \geq \frac{\alpha S_{j_0}}{\alpha^2 j_0 - R^2} =
  \frac{1}{\alpha(j_0 - \frac{R^2}{\alpha^2})} \sum_{j=1}^{j_0} x_{(j)} >
  \frac{1}{\alpha j_0} \sum_{j=1}^{j_0} x_{(j)} \geq \frac{x_{(j_0)}}{\alpha}$,
  where the second inequality results from the fact that $j_0 > j_0 -
\frac{R^2}{\alpha^2}$. And again $\nu_2$ cannot be a solution since
$\Lambda(x,\alpha,R)$ belongs to $(x_{(j_0+1)}/\alpha,x_{(j_0)}/\alpha]$.
\end{itemize}
Hence, in all cases, the solution is given by $\nu_1$.

Moreover, we can significantly reduce the cost of the sort. In fact, for all
$\nu$, we have $\norm{\ST{\alpha \nu}(x)} \geq \norm{\ST{\alpha \nu}(x)}_{\infty}
= \max_{j \in [d]} (|x_{j}| - \nu \alpha)_{+}.
$ Hence,
$\norm{\ST{\alpha \nu}(x)} - \nu R \geq
\norm{x}_{\infty} - \nu \alpha - \nu R > 0$ if and only if
$\nu < \frac{\norm{x}_{\infty}}{\alpha + R}$. Combining this with
Equation \eqref{eq:threshold_effect}, we take into account only the coordinates
which have an absolute value greater than $\frac{\alpha \norm{x}_{\infty}}{\alpha +
R}$. Indeed, by contrapositive, if $\nu$ is a solution then $\nu \geq
\frac{\norm{x}_{\infty}}{\alpha + R}$ hence $x_j < \alpha
\frac{\norm{x}_{\infty}}{\alpha + R} \Rightarrow x_j < \nu \alpha
\overset{\eqref{eq:threshold_effect}}{\Rightarrow} \ST{\alpha}(x_j/\nu) =
0$.

Finally, computing $\Lambda(x,\alpha,R)$ can be done by applying Algorithm \ref{alg:sgl_burdakov}. Note that this algorithm is similar to \citep[Algorithm~4]{Burdakov_Merkulov01}.
\end{proof}

\section{Notes about others methods}


\subsection*{Extension of some previous methods to the \SGL}

\subsubsection*{Extension of \citet{ElGhaoui_Viallon_Rabbani12}: \emph{static safe region}}


The \emph{static safe region} can be obtained as in \citep{ElGhaoui_Viallon_Rabbani12} using the ball
$\mathcal{B} \left(\frac{y}{\lambda}, \norm{\frac{y}{\lambda_{\max}} -
\frac{y}{\lambda}} \right)$.

Indeed $y/\lambda_{\max}$ is a dual feasible point. Hence the distance between
$y/\lambda$ and $y/\lambda_{\max}$ is smaller than the distance between $y/\lambda$ and $\ttheta{\lambda,\tau,w}$ since the last point is the projection of $y/\lambda$ over the (close and convex) dual feasible set $\dualomegasgl$.

\subsubsection*{Extension of \citet{Bonnefoy_Emiya_Ralaivola_Gribonval14}: \emph{dynamic safe region}}
The \emph{dynamic safe region} can be obtained as in \citep{ElGhaoui_Viallon_Rabbani12} using the ball $\mathcal{B} \left(\frac{y}{\lambda}, \norm{\theta_k - \frac{y}{\lambda}} \right)$, where the sequence $(\theta_k)_{k \in \bbN}$ converges to the dual optimal vector $\ttheta{\lambda,\tau,w}$.

A sequence of dual points is required to construct such a ball, and following
\citet{ElGhaoui_Viallon_Rabbani12} we can consider the \emph{dual scaling} procedure. We have chosen a simple procedure here: Let $\theta_k = \rho_k/\max(\lambda, \normsgl^D(X^\top \rho_k))$, where $\rho_k:= y-X\beta_k$, for a primal converging sequence $\beta_k$. Hence, one can use the safe sphere $\mathcal{B} \left(\frac{y}{\lambda}, \norm{\theta_k - \frac{y}{\lambda}} \right)$  with the same reasoning as for the \emph{static safe region}.

Hence, we can easily extend the corresponding screening rules to the \SGL
thanks
to the formulation \eqref{eq:safe_rule_feature} and
\eqref{eq:safe_rule_group}.

\subsubsection*{Extension of \citet{Bonnefoy_Emiya_Ralaivola_Gribonval14}: \emph{DST3 safe region}}

Now we show that the safe regions proposed in \citet{Xiang_Xu_Ramadge11} for
the Lasso and generalized in \citet{Bonnefoy_Emiya_Ralaivola_Gribonval14} to the Group-Lasso can be adapted to the \SGL. For that, we define

\begin{equation*}
 \mathcal{V}_{\star} = \left\{ \theta \in \bbR^n:
 \norm{X_{g_{\star}}^{\top}\theta}_{\epsilon_{g_\star}} \leq
 \tau + (1 - \tau) w_{g_\star} \right\} \text{ and }
 \mathcal{H}_{\star} = \bigg\{ \theta \in \bbR^n:
 \langle \theta, \eta \rangle = \tau + (1 - \tau) w_{g_\star} \bigg\}.
\end{equation*}

Where $\eta$ is the vector normal to $\mathcal{V}_{\star}$ at the point
$\frac{y}{\lambda_{\max}}$ and is given by
$\eta:= {X_{g_{\star}}} \nabla \norm{\cdot}_{\epsilon_{g_\star}}
\left(X_{g_{\star}}^{\top}\frac{y}{\lambda_{\max}}\right)$,
where $\nabla \norm{\cdot}_{\epsilon}(x) =
\frac{\ST{(1 - \epsilon)\norm{x}_{\epsilon}}(x)}{
\norm{\ST{(1 - \epsilon)\norm{x}_{\epsilon}}(x)}_{\epsilon}^{D}}$
see Lemma~\ref{lm:gradient_burdakov_norm} below. Let
$\theta_c := \frac{y}{\lambda} -
\left(\frac{\frac{\langle \eta, y \rangle}{\lambda} -
(\tau + (1 - \tau) w_{g_\star})}{\norm{\eta}^{2}}\right) \eta$ be the
projection of $y/\lambda$ onto the hyperplane $\mathcal{H}_{\star}$
and $r_{\theta_k} := \sqrt{\norm{\frac{y}{\lambda} - \theta_k}^{2} -
\norm{\frac{y}{\lambda} - \theta_c}^{2}}$ where $\theta_k$ is a dual feasible vector (which can be obtained by dual scaling). Then, the following proposition holds.
\begin{proposition} Let $\theta_c$ and $r_{\theta_k}$ defined as above. Then
 $\ttheta{\lambda, \tau,w} \in \mathcal{B}(\theta_c, r_{\theta_k})$.
\end{proposition}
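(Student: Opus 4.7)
The plan is to view $\mathcal{H}_{\star}$ as a supporting hyperplane to the (larger) convex set $\mathcal{V}_{\star}$ at $y/\lambda_{\max}$, show that $\mathcal{V}_{\star} \supseteq \dualomegasgl$, and then use a Pythagorean identity together with the fact that $\ttheta{\lambda,\tau,w}$ is the projection of $y/\lambda$ onto $\dualomegasgl$. The whole DST3 construction rests on the observation that intersecting the ball $\mathcal{B}(y/\lambda, \|\theta_k - y/\lambda\|)$ with a halfspace containing $\ttheta{\lambda,\tau,w}$ but not $y/\lambda$ produces a dome, which is contained in a smaller ball centered at the projection of $y/\lambda$ onto the bounding hyperplane, with radius controlled by Pythagoras.

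First I would verify that $\mathcal{H}_{\star}$ is indeed the tangent hyperplane to $\mathcal{V}_{\star}$ at $y/\lambda_{\max}$. Setting $f(\theta) := \|X_{g_\star}^\top \theta\|_{\epsilon_{g_\star}}$, the chain rule gives $\nabla f(\theta) = X_{g_\star} \nabla \|\cdot\|_{\epsilon_{g_\star}}(X_{g_\star}^\top \theta) = \eta$ at $\theta = y/\lambda_{\max}$. By homogeneity of the $\epsilon$-norm, $\langle \nabla \|\cdot\|_{\epsilon_{g_\star}}(x), x\rangle = \|x\|_{\epsilon_{g_\star}}$ for any nonzero $x$, so $\langle \eta, y/\lambda_{\max}\rangle = \|X_{g_\star}^\top y/\lambda_{\max}\|_{\epsilon_{g_\star}} = \tau + (1-\tau)w_{g_\star}$, by definition of $g_\star$. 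Hence $y/\lambda_{\max} \in \mathcal{H}_{\star}$ and, since $\mathcal{V}_{\star}$ is convex with $0$ in its interior, $\mathcal{V}_{\star} \subseteq \{\theta : \langle \theta, \eta\rangle \leq \tau + (1-\tau)w_{g_\star}\}$.

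Second, from Proposition~\ref{rq:dual_feasible_set_burdakov_characterization} every feasible $\theta \in \dualomegasgl$ satisfies $\|X_{g_\star}^\top \theta\|_{\epsilon_{g_\star}} \leq \tau + (1-\tau)w_{g_\star}$, so $\dualomegasgl \subseteq \mathcal{V}_{\star}$; in particular $\langle \ttheta{\lambda,\tau,w}, \eta\rangle \leq \tau + (1-\tau)w_{g_\star}$. On the other hand $\langle y/\lambda, \eta\rangle = (\lambda_{\max}/\lambda)(\tau + (1-\tau)w_{g_\star}) \geq \tau + (1-\tau)w_{g_\star}$ whenever $\lambda \leq \lambda_{\max}$ (the only nontrivial case), so $y/\lambda$ lies on the opposite side of $\mathcal{H}_{\star}$ from $\ttheta{\lambda,\tau,w}$. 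Decomposing $\ttheta{\lambda,\tau,w} - y/\lambda = (\ttheta{\lambda,\tau,w} - \theta_c) + (\theta_c - y/\lambda)$ and noting that $\theta_c - y/\lambda$ is parallel to $\eta$ while the sign of $\langle \ttheta{\lambda,\tau,w} - \theta_c, \theta_c - y/\lambda\rangle$ is nonnegative (both points lie on the same side of $\mathcal{H}_{\star}$ as the halfspace containing $\ttheta{\lambda,\tau,w}$), one obtains the Pythagorean inequality
\begin{equation*}
\|\ttheta{\lambda,\tau,w} - \theta_c\|^2 \leq \|\ttheta{\lambda,\tau,w} - y/\lambda\|^2 - \|\theta_c - y/\lambda\|^2.
\end{equation*}

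Finally, since $\ttheta{\lambda,\tau,w} = \Pi_{\dualomegasgl}(y/\lambda)$ and $\theta_k \in \dualomegasgl$, the projection is the closest feasible point to $y/\lambda$, hence $\|\ttheta{\lambda,\tau,w} - y/\lambda\| \leq \|\theta_k - y/\lambda\|$. Chaining this with the previous inequality yields $\|\ttheta{\lambda,\tau,w} - \theta_c\|^2 \leq \|\theta_k - y/\lambda\|^2 - \|\theta_c - y/\lambda\|^2 = r_{\theta_k}^2$, which is the claim. The main obstacle is justifying that $\eta$ is truly the outward normal to $\mathcal{V}_{\star}$ at $y/\lambda_{\max}$, which boils down to the dual-norm identity $\langle \nabla \|\cdot\|_{\epsilon}(x), x\rangle = \|x\|_{\epsilon}$ at a nonzero $x$; everything else is elementary convex geometry (projection onto a hyperplane, projection onto a convex set, Pythagoras).
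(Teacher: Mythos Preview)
Your proof is correct and follows essentially the same route as the paper's: show that $\ttheta{\lambda,\tau,w}$ lies in the halfspace $\mathcal{H}_{\star}^{-}=\{\theta:\langle\eta,\theta\rangle\le\tau+(1-\tau)w_{g_\star}\}$, expand $\|\ttheta{\lambda,\tau,w}-y/\lambda\|^2$ about $\theta_c$, use the sign of the cross term (equivalently, the projection inequality for $\theta_c=\Pi_{\mathcal{H}_{\star}^{-}}(y/\lambda)$), and then invoke $\|\ttheta{\lambda,\tau,w}-y/\lambda\|\le\|\theta_k-y/\lambda\|$. The only difference is cosmetic: the paper states the last bound as ``$\mathcal{B}(y/\lambda,\|y/\lambda-\theta_k\|)$ is a safe region'' and proves the containment for every point of $\mathcal{H}_{\star}^{-}\cap\mathcal{B}(y/\lambda,\|y/\lambda-\theta_k\|)$, while you apply it directly to $\ttheta{\lambda,\tau,w}$; and you make explicit (via Euler's identity for the degree-one homogeneous $\epsilon$-norm) the facts $y/\lambda_{\max}\in\mathcal{H}_{\star}$ and $\mathcal{V}_{\star}\subset\mathcal{H}_{\star}^{-}$ that the paper takes as given from the definition of $\eta$.
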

%
\begin{proof}
We set $\mathcal{H}_{\star}^{-} := \big\{\theta \in \bbR^n:  \langle \theta,
\eta \rangle
\leq \tau + (1 - \tau) w_{g_\star} \big\}$ the negative half-space induced by the
hyperplane $\mathcal{H}_{\star}$.
Since $\ttheta{\lambda, \tau,w} \in \dualomegasgl \subset \mathcal{V}_{\star} \subset {\mathcal{H}_{\star}^{-}}$ and
$\mathcal{B}\left(\frac{y}{\lambda}, \norm{\frac{y}{\lambda} -
\theta_k}\right)$ is a safe region, then
$\ttheta{\lambda, \tau,w} \in \mathcal{H}_{\star}^{-} \cap
\mathcal{B}\left(\frac{y}{\lambda}, \norm{\frac{y}{\lambda} -
\theta_k}\right)$.

Moreover, for any $\theta \in \mathcal{H}_{\star}^{-} \cap
\mathcal{B}\left(\frac{y}{\lambda}, \norm{\frac{y}{\lambda} -
\theta_k}\right)$, we have: 
\begin{equation*}
 \norm{\frac{y}{\lambda} - \theta_k}^{2} \geq
\norm{\frac{y}{\lambda} - \theta}^{2} =
\norm{\left(\frac{y}{\lambda} - \theta_c\right) + \left(\theta_c - \theta\right)}^{2} =
\norm{\frac{y}{\lambda} - \theta_c}^{2} + \norm{\theta_c - \theta}^{2} +
2 \left\langle \frac{y}{\lambda} - \theta_c, \theta_c - \theta \right\rangle.
\end{equation*}
Since $\theta_c = \Pi_{\mathcal{H}_{\star}^{-}}(\frac{y}{\lambda})$ and
$\mathcal{H}_{\star}^{-}$ is convex,
then $\langle \theta_c - \frac{y}{\lambda}, \theta_c - \theta \rangle \leq 0$. Thus
\begin{equation*}
 \norm{\frac{y}{\lambda} - \theta_k}^{2} \geq
 \norm{\frac{y}{\lambda} - \theta_c}^{2} + \norm{\theta_c - \theta}^{2},
 \text{ hence }
 \norm{\theta - \theta_c} \leq
 \sqrt{\norm{\frac{y}{\lambda} - \theta_k}^{2} -
       \norm{\frac{y}{\lambda} - \theta_c}^{2}} =: r_{\theta_k}.
\end{equation*}

Which show that $\mathcal{H}_{\star}^{-} \cap
\mathcal{B}\left(\frac{y}{\lambda}, \norm{\frac{y}{\lambda} -
\theta_k}\right) \subset \mathcal{B}(\theta_c, r_{\theta_k})$. Hence the result.
\end{proof}

\section{\SGL plus Elastic Net}
The Elastic-Net estimator (\citet{Zou_Hastie05}) can be mixed with the \SGL by considering
\begin{equation} \label{eq:elastic_net_sgl}
\argmin_{\beta \in \bbR^p} \,
\frac{1}{2} \norm{y - X\beta}^2 + \lambda_1 \Omega(\beta) + \frac{\lambda_2}{2}
\norm{\beta}^2.
\end{equation}
By setting
$\tilde{X}=\begin{pmatrix}
X\\
\sqrt{\lambda_2} \Id_p
\end{pmatrix}\in \bbR^{n+p,p}
$
and
$\tilde{y}=\begin{pmatrix}
y\\
0
\end{pmatrix} \in \bbR^{n+p}
$,
we can reformulate \eqref{eq:elastic_net_sgl} as
\begin{equation}
\argmin_{\beta \in \bbR^p} \,
\frac{1}{2} \norm{\tilde{y} - \tilde{X}\beta}^2 + \lambda_1 \Omega(\beta),
\end{equation}
and we can adapt our GAP safe rule framework to this case.

\section{Properties of the $\epsilon$-norm}

We describe, for completeness, some properties of the
$\epsilon$-norm. The following materials are from \cite{Burdakov_Merkulov01}
with some adaptations.

\begin{lemma} \label{lm:epsilon_decomposition}
For all $\xi \in \bbR^d$, the $\epsilon$-decomposition reads:
\begin{align*}
 \xi &= \xi^{\epsilon} + \xi^{1 - \epsilon} \text{ where }
 \xi^{\epsilon} := \ST{(1 - \epsilon)\norm{\xi}_{\epsilon}} (\xi) \text{ and }
 \xi^{1 - \epsilon} := \xi - \xi^{\epsilon}. \\
 \norm{\xi^{\epsilon}} &= \epsilon \norm{\xi}_{\epsilon} \text{ and }
 \norm{\xi^{1 - \epsilon}}_{\infty} = (1 - \epsilon) \norm{\xi}_{\epsilon}.
 \text{ Hence }
 \norm{\xi}_{\epsilon} = \norm{\xi^{\epsilon}} + \norm{\xi^{1 - \epsilon}}_{\infty}.
\end{align*}
\end{lemma}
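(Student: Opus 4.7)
The plan is to unpack the definition of the $\epsilon$-norm given in \eqref{eq:2_burdakov_norm} and combine it with the identity $\ST{\tau} = \Id - \Pi_{\tau \mathcal{B}_\infty}$ stated in the preliminaries.

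First I would observe that the decomposition $\xi = \xi^\epsilon + \xi^{1-\epsilon}$ is tautological from the definition $\xi^{1-\epsilon} := \xi - \xi^\epsilon$, so nothing is required here. Next, the identity $\|\xi^\epsilon\| = \epsilon \|\xi\|_\epsilon$ is a direct consequence of \eqref{eq:2_burdakov_norm} applied at $\nu = \|\xi\|_\epsilon$: that equation reads $\|\ST{(1-\epsilon)\|\xi\|_\epsilon}(\xi)\|^2 = (\epsilon \|\xi\|_\epsilon)^2$, and since both sides are nonnegative, taking square roots gives the desired equality (recall $\xi^\epsilon$ is precisely this soft-thresholded vector).

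The less immediate part is $\|\xi^{1-\epsilon}\|_\infty = (1-\epsilon)\|\xi\|_\epsilon$. For this I would use the preliminaries' identity $\ST{\tau} = \Id - \Pi_{\tau \mathcal{B}_\infty}$ to rewrite
\begin{equation*}
\xi^{1-\epsilon} = \xi - \ST{(1-\epsilon)\|\xi\|_\epsilon}(\xi) = \Pi_{(1-\epsilon)\|\xi\|_\epsilon\, \mathcal{B}_\infty}(\xi).
\end{equation*}
Projection onto the $\ell_\infty$ ball of radius $r$ acts coordinate-wise by clipping: $[\Pi_{r\mathcal{B}_\infty}(\xi)]_i = \sign(\xi_i)\min(|\xi_i|, r)$. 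Hence $\|\xi^{1-\epsilon}\|_\infty = \min(\|\xi\|_\infty, (1-\epsilon)\|\xi\|_\epsilon)$.

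It then remains to verify that the minimum is attained by the second term, i.e., $\|\xi\|_\infty \geq (1-\epsilon)\|\xi\|_\epsilon$. The only potentially subtle step: if $\|\xi\|_\infty < (1-\epsilon)\|\xi\|_\epsilon$, all soft-thresholded coordinates vanish, so $\xi^\epsilon = 0$, and by the first identity $\epsilon\|\xi\|_\epsilon = 0$; this forces either $\xi = 0$ (trivial case, both sides zero) or $\epsilon = 0$ (edge case, handled by noting $\|\xi\|_0$ coincides with $\|\xi\|_\infty$ from the defining equation, so the projection is the identity and the claim still holds). Finally, adding the two established equalities yields $\|\xi\|_\epsilon = \epsilon\|\xi\|_\epsilon + (1-\epsilon)\|\xi\|_\epsilon = \|\xi^\epsilon\| + \|\xi^{1-\epsilon}\|_\infty$.

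The main (mild) obstacle is really just the clean bookkeeping of the edge case where the soft-thresholding thresholds everything to zero; otherwise, the proof is a direct unpacking of definitions.
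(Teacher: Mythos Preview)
Your proof is correct and follows essentially the same approach as the paper's: both identify $\xi^{1-\epsilon}$ as the coordinate-wise clipping of $\xi$ at level $(1-\epsilon)\|\xi\|_\epsilon$ (you via the projection identity $\ST{\tau}=\Id-\Pi_{\tau\mathcal{B}_\infty}$, the paper by direct case-splitting on $|\xi_i|$), and then argue that the $\ell_\infty$ norm of the clipped vector equals the clipping level. Your explicit handling of the edge cases $\xi=0$ and $\epsilon=0$ is in fact more careful than the paper's proof, which tacitly assumes some coordinate exceeds the threshold.
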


\begin{proof}
$\norm{\xi^{\epsilon}} = \norm{\ST{(1 - \epsilon)\norm{\xi}_{\epsilon}} (\xi)} =
\epsilon \norm{\xi}_{\epsilon}$ by definition of the $\epsilon$-norm
$\norm{\xi}_{\epsilon}$. Moreover,
\begin{align*}
\xi^{1 - \epsilon} &=
\sum_{i=1}^{d} \left[\xi_i - \sign(\xi_i)(|\xi_i| -
(1 - \epsilon)\norm{\xi}_{\epsilon})_{+} \right] =
\sum_{i=1}^{d} \sign(\xi_i) \left[|\xi_i| - (|\xi_i| -
(1 - \epsilon) \norm{\xi}_{\epsilon})_{+} \right]. \text{ Thus,}\\
\norm{\xi^{1 - \epsilon}}_{\infty} &=
 \max_{i \in [d]} \left|\sign(\xi_i) \left[|\xi_i| - (|\xi_i| -
(1 - \epsilon) \norm{\xi}_{\epsilon})_{+} \right] \right| \\
& = \max \big(
 \max_{\underset{|\xi_i| \leq (1 - \epsilon) \norm{\xi}_{\epsilon}}{i \in [d]}}
\left||\xi_i| - (|\xi_i| -
(1 - \epsilon) \norm{\xi}_{\epsilon})_{+} \right|,
 \max_{\underset{|\xi_i| > (1 - \epsilon) \norm{\xi}_{\epsilon}}{i \in [d]}}
\left||\xi_i| - (|\xi_i| -
(1 - \epsilon) \norm{\xi}_{\epsilon})_{+}\right| \big) \\
&= \max \big(
\max_{\underset{|\xi_i| \leq (1 - \epsilon) \norm{\xi}_{\epsilon}}{i \in [d]}}|\xi_i|,
\, (1 - \epsilon) \norm{\xi}_{\epsilon} \big) =
(1 - \epsilon) \norm{\xi}_{\epsilon}.
\end{align*}
\end{proof}

\begin{lemma} \label{lm:computation_of_epsilon_decomposition}
Let us define
$U(\norm{\xi}_{\epsilon}) := \{u \in \bbR^d: \norm{u} \leq \epsilon
\norm{\xi}_{\epsilon}\}$ and
$V(\norm{\xi}_{\epsilon}) := \{v \in \bbR^d: \norm{v}_{\infty} \leq (1 -
\epsilon)\norm{\xi}_{\epsilon}\}$. Then
\begin{equation*}
\xi^{(1 - \epsilon)} = \argmin_{\underset{\xi = u + v }{u \in
U(\norm{\xi}_{\epsilon})}} \norm{v}_{\infty} \text{ and }
\xi^{\epsilon} = \argmin_{\underset{\xi = u + v}{v \in
V(\norm{\xi}_{\epsilon})}} \norm{u}.
\end{equation*}
\end{lemma}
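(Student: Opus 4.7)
The plan is to reduce both identities to a single coordinate-wise inequality together with the defining equation of the $\epsilon$-norm. Set $\nu := \norm{\xi}_\epsilon$. Lemma \ref{lm:epsilon_decomposition} already tells us that the candidate $(u,v) = (\xi^\epsilon, \xi^{1-\epsilon})$ is feasible for both problems and attains $\norm{v}_\infty = (1-\epsilon)\nu$ and $\norm{u} = \epsilon \nu$. So all the work is to show that these are lower bounds, with uniqueness of the minimizer handled by inspecting the equality case.

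The key observation I would use is the following coordinate-wise bound: for any decomposition $\xi = u+v$ and any $i$, $|u_i| = |\xi_i - v_i| \geq (|\xi_i| - \norm{v}_\infty)_+$, hence $\norm{u}^2 \geq \sum_i (|\xi_i| - \norm{v}_\infty)_+^2 = \norm{\ST{\norm{v}_\infty}(\xi)}^2$. I would now plug this into each problem separately. For the first identity, suppose $\xi = u+v$ with $u \in U(\nu)$. Letting $M = \norm{v}_\infty$, the above gives $\norm{\ST{M}(\xi)}^2 \leq \norm{u}^2 \leq (\epsilon\nu)^2$. Since $M \mapsto \norm{\ST{M}(\xi)}^2$ is non-increasing and, by the very definition \eqref{eq:2_burdakov_norm} of the $\epsilon$-norm, equals $(\epsilon\nu)^2$ at $M = (1-\epsilon)\nu$, we conclude $M \geq (1-\epsilon)\nu$, matching the value achieved by $v = \xi^{1-\epsilon}$.

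For the second identity, the situation is even more direct: if $v \in V(\nu)$, then $\norm{v}_\infty \leq (1-\epsilon)\nu$, so the same coordinate-wise bound, together with the defining equation $\norm{\ST{(1-\epsilon)\nu}(\xi)}^2 = (\epsilon\nu)^2$, yields $\norm{u}^2 \geq (\epsilon\nu)^2 = \norm{\xi^\epsilon}^2$. This matches the value achieved by $u = \xi^\epsilon$.

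To promote these inequalities to genuine $\argmin$ identities (uniqueness), I would inspect the equality case of $|u_i| \geq (|\xi_i| - (1-\epsilon)\nu)_+$. On coordinates with $|\xi_i| > (1-\epsilon)\nu$, equality plus $|v_i| \leq (1-\epsilon)\nu$ and $u_i+v_i = \xi_i$ forces $v_i = \sign(\xi_i)(1-\epsilon)\nu$ and $u_i = \sign(\xi_i)(|\xi_i|-(1-\epsilon)\nu)$; on coordinates with $|\xi_i| \leq (1-\epsilon)\nu$, all individual lower bounds are $0$, so saturation of the summed bound forces $u_i = 0$ and $v_i = \xi_i$. This is precisely the $\epsilon$-decomposition. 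I expect the only mildly delicate step to be this equality analysis, since one has to be careful that equality in $\norm{u}^2 \geq \sum_i \ST{(1-\epsilon)\nu}(\xi_i)^2$ implies equality termwise; this however follows because each summand on the right is already a lower bound for the corresponding $u_i^2$, so no slack in one coordinate can be compensated by another.
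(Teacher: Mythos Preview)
Your proposal is correct and takes a genuinely different route from the paper's proof. The paper argues by expanding $\|u\|^2 = \|\xi^\epsilon + (\xi^{1-\epsilon} - v)\|^2$ and showing the cross term $\langle \xi^\epsilon, \xi^{1-\epsilon} - v\rangle$ is nonnegative via a coordinate computation, after first appealing (somewhat loosely, since $\|\cdot\|_\infty$ is not strictly convex) to strict convexity of the objectives for uniqueness. You instead use the single coordinate-wise inequality $|u_i| \geq (|\xi_i| - \|v\|_\infty)_+$, sum it to obtain $\|u\|^2 \geq \|\ST{\|v\|_\infty}(\xi)\|^2$, and then exploit the monotonicity of $M \mapsto \|\ST{M}(\xi)\|^2$ together with the defining equation \eqref{eq:2_burdakov_norm} of $\|\xi\|_\epsilon$. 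This is more elementary, treats both optimization problems symmetrically, and yields uniqueness directly through the equality-case analysis, bypassing any convexity claim. The paper's route has the merit of being one explicit computation for the second identity, but leaves the first identity's uniqueness resting on an argument that does not hold as written. One small point you should make explicit: for the first problem, attaining the minimum $M = (1-\epsilon)\nu$ collapses the chain $\|\ST{M}(\xi)\|^2 \leq \|u\|^2 \leq (\epsilon\nu)^2$ to an equality, so the very same termwise equality analysis you wrote for the second problem then forces $v = \xi^{1-\epsilon}$ as well.
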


\begin{proof}
\begin{itemize}
 \item \textbf{Existence and uniqueness of the solutions} \\
It is clear that
$\displaystyle \argmin_{\underset{\xi = u + v }{u \in
U(\norm{\xi}_{\epsilon})}} \norm{v}_{\infty} =
\argmin_{\xi - U(\norm{\xi}_{\epsilon})} \norm{v}_{\infty}$
and
$\displaystyle \argmin_{\underset{\xi = u + v}{v \in
V(\norm{\xi}_{\epsilon})}} \norm{u} =
\argmin_{\xi - V(\norm{\xi}_{\epsilon})} \norm{u}$. Thus, these two
problems have unique solution because we minimize strict convex functions onto convex
sets.
 \item \textbf{Uniqueness of the $\epsilon$-decomposition} \\
 From Lemma~\ref{lm:epsilon_decomposition} we have
 $\xi = \xi^{\epsilon} + \xi^{1 - \epsilon}$ where
 $\norm{\xi^{\epsilon}} = \epsilon \norm{\xi}_{\epsilon}$ and
 $\norm{\xi^{1 - \epsilon}}_{\infty} = (1 - \epsilon) \norm{\xi}_{\epsilon}$.
 Hence $\xi^{\epsilon} \in U(\norm{\xi}_{\epsilon})$ and
 $\xi^{1 - \epsilon} \in V(\norm{\xi}_{\epsilon})$. Now it suffices to show that
this $\epsilon$-decomposition is unique. \\

Suppose $\xi \neq 0$ (the uniqueness is trivial otherwise) and $v \in
V(\norm{\xi}_{\epsilon})$. We show that for any $u \in \bbR^d$ such that
$\xi = u + v, \, v \neq \xi^{1 - \epsilon}$ implies
$u \notin U(\norm{\xi}_{\epsilon})$. \\

\begin{equation*}
\norm{u}^2 = \norm{\xi - v}^2 = \norm{\xi^{\epsilon} + (\xi^{1 - \epsilon} - v)}^2
= \norm{\xi^{\epsilon}}^2 + 2 \langle{\xi^{\epsilon}}, \xi^{1 - \epsilon} - v\rangle +
\norm{\xi^{1 - \epsilon} - v}^2,
\end{equation*}
hence
$\norm{u}^2 > \epsilon^2 \norm{\xi}_{\epsilon}^{2} +
2 \langle \xi^{\epsilon}, \, \xi^{1 - \epsilon} - v \rangle$ because
$\norm{\xi^{\epsilon}} = \epsilon \norm{\xi}_{\epsilon}$ and
$\norm{\xi^{1 - \epsilon} - v} > 0$ ($v \neq \xi^{1 - \epsilon}$). Moreover,
\begin{align*}
\langle \xi^{\epsilon}, \, \xi^{1 - \epsilon} - v \rangle &=
\sum_{i = 1}^{d}
\left[\sign(\xi_i)(|\xi_i| - (1 - \epsilon)\norm{\xi}_{\epsilon})_{+} \right]
\left[\sign(\xi_i)(|\xi_i| - (|\xi_i| -
(1 - \epsilon) \norm{\xi}_{\epsilon})_{+}) - v_i \right] \\
&= \sum_{i = 1}^{d}
\left[(|\xi_i| - (1 - \epsilon)\norm{\xi}_{\epsilon})_{+} \right]
\left[(|\xi_i| - (|\xi_i| -
(1 - \epsilon) \norm{\xi}_{\epsilon})_{+}) - v_i \sign(\xi_i) \right] \\
&\geq \sum_{\underset{|\xi_i| > (1 - \epsilon)\norm{\xi}_{\epsilon}}{i = 1}}
\left[|\xi_i| - (1 - \epsilon)\norm{\xi}_{\epsilon} \right]
\left[(1 - \epsilon) \norm{\xi}_{\epsilon} - v_i \sign(\xi_i) \right] \geq 0.
\end{align*}
The last inequality hold because $v \in V(\norm{\xi}_{\epsilon})$ \ie $\forall i
\in [d], \, v_i \leq (1 - \epsilon) \norm{\xi}_{\epsilon}$. Finally, $\norm{u}^2
> \epsilon^2 \norm{\xi}_{\epsilon}^{2}$ hence the result.
\end{itemize}
\end{proof}

\begin{lemma} \label{lm:epsilon_ball}
$  \left\{ \xi \in \bbR^d: \norm{\xi}_{\epsilon} \leq \nu \right\} =
  \left\{ u + v: u, v \in \bbR^d, \norm{u} \leq \epsilon \nu,
  \norm{v}_{\infty} \leq (1 - \epsilon) \nu \right\}.
$
\end{lemma}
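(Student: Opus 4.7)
The plan is to prove the set equality by double inclusion, using the $\epsilon$-decomposition of Lemma \ref{lm:epsilon_decomposition} for the forward direction and a monotonicity argument on the defining equation \eqref{eq:2_burdakov_norm} for the reverse direction.

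For the inclusion $\subseteq$, the argument is immediate. Given $\xi$ with $\norm{\xi}_{\epsilon} \leq \nu$, I would simply take $u = \xi^{\epsilon}$ and $v = \xi^{1-\epsilon}$ from Lemma \ref{lm:epsilon_decomposition}. Then $\xi = u + v$, $\norm{u} = \epsilon \norm{\xi}_{\epsilon} \leq \epsilon \nu$, and $\norm{v}_{\infty} = (1-\epsilon) \norm{\xi}_{\epsilon} \leq (1-\epsilon)\nu$, so $\xi$ belongs to the right-hand set.

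For the reverse inclusion $\supseteq$, the key auxiliary fact is the equivalence
\begin{equation*}
\norm{\xi}_{\epsilon} \leq \nu \iff \norm{\ST{(1-\epsilon)\nu}(\xi)} \leq \epsilon \nu.
\end{equation*}
I would establish this by noting that $\nu \mapsto \norm{\ST{(1-\epsilon)\nu}(\xi)}^{2} - (\epsilon \nu)^{2}$ is continuous and strictly decreasing on $\bbR_{+}$ (the first term is non-increasing in $\nu$ since soft-thresholding with a larger threshold produces smaller entries, while $(\epsilon \nu)^2$ is non-decreasing), and vanishes exactly at $\nu = \norm{\xi}_{\epsilon}$ by the definition \eqref{eq:2_burdakov_norm}.

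Given this equivalence, the reverse inclusion reduces to a coordinate-wise soft-thresholding estimate. For $\xi = u + v$ with $\norm{u} \leq \epsilon \nu$ and $\norm{v}_{\infty} \leq (1 - \epsilon) \nu$, I would bound each coordinate as
\begin{equation*}
|\ST{(1-\epsilon)\nu}(\xi)_{i}| = (|\xi_{i}| - (1-\epsilon)\nu)_{+} \leq (|u_{i}| + |v_{i}| - (1-\epsilon)\nu)_{+} \leq |u_{i}|,
\end{equation*}
where the last inequality uses $|v_{i}| \leq (1-\epsilon)\nu$. Squaring, summing, and taking square roots yields $\norm{\ST{(1-\epsilon)\nu}(\xi)} \leq \norm{u} \leq \epsilon \nu$, and hence $\norm{\xi}_{\epsilon} \leq \nu$ by the equivalence above. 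The main obstacle is justifying the equivalence characterization cleanly, since this is the only step requiring a non-trivial monotonicity argument; the remaining pieces reduce to elementary bounds on the soft-thresholding operator.
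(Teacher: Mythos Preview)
Your proof is correct. The forward inclusion is identical to the paper's. For the reverse inclusion, the paper takes a different path: it invokes Lemma~\ref{lm:computation_of_epsilon_decomposition} (the variational characterization of the $\epsilon$-decomposition) to obtain $\norm{\xi^{\epsilon}} \leq \norm{u}$ and $\norm{\xi^{1-\epsilon}}_{\infty} \leq \norm{v}_{\infty}$, and then concludes via $\norm{\xi}_{\epsilon} = \norm{\xi^{\epsilon}} + \norm{\xi^{1-\epsilon}}_{\infty} \leq \norm{u} + \norm{v}_{\infty} \leq \nu$. Your route instead establishes the equivalence $\norm{\xi}_{\epsilon} \leq \nu \iff \norm{\ST{(1-\epsilon)\nu}(\xi)} \leq \epsilon\nu$ directly from monotonicity of the defining function in~\eqref{eq:2_burdakov_norm}, and then bounds the soft-thresholded vector coordinate-wise by $|u_i|$. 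Your argument is more self-contained, as it bypasses Lemma~\ref{lm:computation_of_epsilon_decomposition} entirely and works straight from the definition of $\norm{\cdot}_{\epsilon}$; the paper's argument is shorter once that lemma is available. One minor quibble: your claim that $\nu \mapsto \norm{\ST{(1-\epsilon)\nu}(\xi)}^{2} - (\epsilon\nu)^{2}$ is \emph{strictly} decreasing fails at the endpoint $\epsilon = 0$ (the function is merely non-increasing there), but the equivalence you need still holds since $\norm{\xi}_{\epsilon}$ is by definition the smallest zero of this function, so non-increasing suffices.
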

\begin{proof}
Thanks to Lemma~\ref{lm:epsilon_decomposition},
we have $\xi = \xi^{\epsilon} + \xi^{1 - \epsilon}$,
$\norm{\xi^{\epsilon}} = \epsilon \norm{\xi}_{\epsilon}$ and
$\norm{\xi^{1 - \epsilon}}_{\infty} = (1 - \epsilon) \norm{\xi}_{\epsilon}$. Hence,
$\norm{\xi}_{\epsilon} \leq \nu$ implies
$\norm{\xi^{\epsilon}} \leq \epsilon \nu$ and
$\norm{\xi^{1 - \epsilon}}_{\infty} \leq (1 - \epsilon) \nu$. \\

Suppose $\xi = u + v$ such that $\norm{u} \leq \epsilon \nu$ and
$\norm{v}_{\infty} \leq (1 - \epsilon) \nu$. From the $\epsilon$-decomposition,
we have
$\norm{\xi}_{\epsilon} = \norm{\xi^{\epsilon}} + \norm{\xi^{1 - \epsilon}}_{\infty}$.
Moreover, $\norm{\xi^{\epsilon}} \leq \norm{u}$ and
$\norm{\xi^{1 - \epsilon}}_{\infty} \leq \norm{v}_{\infty}$ thanks to Lemma~\ref{lm:computation_of_epsilon_decomposition}. Hence
$\norm{\xi^{\epsilon}} \leq \norm{u} + \norm{v}_{\infty} \leq
\epsilon \nu + (1 - \epsilon) \nu = \nu.$
\end{proof}

\begin{lemma}[Dual norm of the $\epsilon$-norm] \label{lm:epsilon_dual_norm}
Let $\xi \in \bbR^d$, then
$\norm{\xi}_{\epsilon}^{D} = \epsilon \norm{\xi} + (1 - \epsilon)
\norm{\xi}_1$.
\end{lemma}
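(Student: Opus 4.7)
The plan is to obtain the dual norm directly from the Minkowski-sum characterization of the $\epsilon$-ball established in Lemma~\ref{lm:epsilon_ball}, and then split the resulting maximum into two decoupled pieces whose dual norms are already known.

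First, I would specialize Lemma~\ref{lm:epsilon_ball} to $\nu = 1$ to write the unit ball of $\|\cdot\|_\epsilon$ as the Minkowski sum
\[
\mathcal{B}_{\|\cdot\|_\epsilon} = \{ u + v : u,v\in\bbR^d,\ \|u\|\leq \epsilon,\ \|v\|_\infty\leq 1-\epsilon\}.
\]
By the very definition of the dual norm,
\[
\|\xi\|_\epsilon^D = \sup_{x\in\mathcal{B}_{\|\cdot\|_\epsilon}} \langle x,\xi\rangle = \sup_{\|u\|\leq \epsilon,\ \|v\|_\infty\leq 1-\epsilon}\langle u+v,\xi\rangle.
\]

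Next, since the constraints on $u$ and $v$ are decoupled, the supremum separates as
\[
\|\xi\|_\epsilon^D = \sup_{\|u\|\leq\epsilon}\langle u,\xi\rangle + \sup_{\|v\|_\infty\leq 1-\epsilon}\langle v,\xi\rangle = \epsilon\,\|\cdot\|^D(\xi) + (1-\epsilon)\,\|\cdot\|_\infty^D(\xi).
\]
Using the standard identities $\|\cdot\|^D = \|\cdot\|$ and $\|\cdot\|_\infty^D = \|\cdot\|_1$ yields exactly $\epsilon \|\xi\| + (1-\epsilon)\|\xi\|_1$.

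There is essentially no obstacle here once Lemma~\ref{lm:epsilon_ball} is in hand: the only subtlety worth a line of explanation is the separability of the joint supremum, which holds because the admissible sets for $u$ and $v$ are independent and the objective is linear. I would simply note that the suprema over the Euclidean ball of radius $\epsilon$ and the $\ell_\infty$ ball of radius $1-\epsilon$ are attained at $u^\star = \epsilon\,\xi/\|\xi\|$ (for $\xi\neq 0$) and $v^\star_i = (1-\epsilon)\sign(\xi_i)$ respectively, giving the announced closed form.
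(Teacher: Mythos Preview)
Your proposal is correct and follows essentially the same argument as the paper: both invoke Lemma~\ref{lm:epsilon_ball} to rewrite the $\epsilon$-unit ball as a Minkowski sum, then split the supremum into two decoupled maximizations yielding $\epsilon\|\xi\|^D + (1-\epsilon)\|\xi\|_\infty^D$ and conclude via the standard dual-norm identities. Your added remarks on separability and the explicit maximizers are harmless elaborations, not a different route.
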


\begin{proof}
 \begin{align*}
\norm{\xi}_{\epsilon}^{D} &:= \max_{\norm{x}_{\epsilon} \leq 1} \xi^{\top} x
= \max_{\underset{\norm{v}_{\infty} \leq 1 - \epsilon}{\norm{u}
\leq \epsilon}} \xi^{\top} (u + v) \text{ thanks to Lemma~\ref{lm:epsilon_ball}}
 \\
&= \epsilon \max_{\norm{u} \leq 1} \xi^{\top} u +
   (1 - \epsilon) \max_{\norm{v}_{\infty} \leq 1} \xi^{\top} v
= \epsilon \norm{\xi}^D + (1 - \epsilon) \norm{\xi}_{\infty}^{D}.\qedhere
\end{align*}
\end{proof}

\begin{lemma} \label{lm:gradient_burdakov_norm}
Let $\xi \in \bbR^d \backslash \{0\}$. Then
 $\nabla \norm{\cdot}_{\epsilon}(\xi) =
\frac{\xi^{\epsilon}}{\norm{\xi^{\epsilon}}_{\epsilon}^{D}}$.
\end{lemma}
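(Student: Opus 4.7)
The plan is to apply the implicit function theorem to the defining equation \eqref{eq:2_burdakov_norm} of the $\epsilon$-norm. Set
\[
F(\xi,\nu) := \sum_{i=1}^d \mathcal{S}_{(1-\epsilon)\nu}(\xi_i)^2 - (\epsilon\nu)^2,
\]
so that $F(\xi, \|\xi\|_\epsilon)=0$ identically. Because $x \mapsto \mathcal{S}_\tau(x)^2 = (|x|-\tau)_+^2$ is $C^1$ jointly in $(x,\tau)$ (its $x$-derivative is $2\mathcal{S}_\tau(x)$ and its $\tau$-derivative is $-2(|x|-\tau)_+$, both continuous), $F$ is $C^1$. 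Assuming $\partial_\nu F \neq 0$, the implicit function theorem gives $\nabla \|\cdot\|_\epsilon(\xi) = -\partial_\xi F / \partial_\nu F$, evaluated at $\nu=\|\xi\|_\epsilon$.

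First I would compute the numerator coordinate-wise: $\partial_{\xi_i} F = 2\,\mathcal{S}_{(1-\epsilon)\nu}(\xi_i) = 2[\xi^\epsilon]_i$ by the very definition of $\xi^\epsilon$ in Lemma \ref{lm:epsilon_decomposition}. Hence $\partial_\xi F = 2\xi^\epsilon$.

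Next I would compute the denominator. Differentiating the sum in $\nu$ yields $-2(1-\epsilon)\sum_i (|\xi_i|-(1-\epsilon)\nu)_+ = -2(1-\epsilon)\|\xi^\epsilon\|_1$, while differentiating $(\epsilon\nu)^2$ gives $-2\epsilon^2\nu$ (with a minus sign since it appears with a minus in $F$). Using the identity $\epsilon\nu = \|\xi^\epsilon\|$ from Lemma \ref{lm:epsilon_decomposition}, the latter rewrites as $-2\epsilon\|\xi^\epsilon\|$. Summing,
\[
\partial_\nu F = -2\bigl[(1-\epsilon)\|\xi^\epsilon\|_1 + \epsilon\|\xi^\epsilon\|\bigr] = -2\,\|\xi^\epsilon\|_\epsilon^D,
\]
by Lemma \ref{lm:epsilon_dual_norm}. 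Dividing and cancelling the factors of $-2$ yields $\nabla\|\cdot\|_\epsilon(\xi) = \xi^\epsilon/\|\xi^\epsilon\|_\epsilon^D$, as claimed.

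The only technical point to check is that $\partial_\nu F \ne 0$, which here amounts to $\|\xi^\epsilon\|_\epsilon^D>0$. For $\xi \ne 0$ and $\epsilon\in(0,1]$ this is immediate: $\|\xi\|_\epsilon > 0$ forces $\|\xi^\epsilon\| = \epsilon\|\xi\|_\epsilon>0$, so the dual $\epsilon$-norm of $\xi^\epsilon$ is strictly positive. (The degenerate case $\epsilon=0$ corresponds to $\|\cdot\|_\infty$, which is not everywhere differentiable, so the lemma should be understood for $\epsilon\in(0,1]$.) This is the mildly delicate step; everything else is a direct differentiation followed by a substitution of the two identities from Lemmas \ref{lm:epsilon_decomposition} and \ref{lm:epsilon_dual_norm}.
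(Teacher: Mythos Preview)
Your proof is correct and follows essentially the same approach as the paper: both apply the implicit function theorem to the defining relation of the $\epsilon$-norm and identify the resulting quotient with $\xi^{\epsilon}/\|\xi^{\epsilon}\|_{\epsilon}^{D}$ via Lemmas~\ref{lm:epsilon_decomposition} and~\ref{lm:epsilon_dual_norm}. The only cosmetic difference is that you work with the squared equation $F(\xi,\nu)=\|\ST{(1-\epsilon)\nu}(\xi)\|^{2}-(\epsilon\nu)^{2}$, whereas the paper uses $h(\nu,\xi)=\|\ST{(1-\epsilon)\nu}(\xi)\|-\epsilon\nu$; this slightly streamlines the differentiation (no chain rule through the Euclidean norm) and makes the $C^{1}$ regularity of the implicit function more transparent, but the two computations are otherwise identical.
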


\begin{proof}
 Let us define $h: \bbR \times \bbR^d \mapsto \bbR$ by
 $h(\nu, \xi) = \norm{\ST{(1 - \epsilon)\nu}(\xi)} - \epsilon \nu$. Then
 we have
 \begin{align*}
\frac{\partial h}{\partial \nu} (\nu, \xi) &=
 \frac{{\ST{(1 - \epsilon)\nu}(\xi)}^{\top}}{\norm{\ST{(1 - \epsilon)\nu}(\xi)}}
 \frac{\partial \ST{(1 - \epsilon)\nu}(\xi)}{\partial \nu} - \epsilon =
 - \frac{{\ST{(1 - \epsilon)\nu}(\xi)}^{\top}}{\norm{\ST{(1 - \epsilon)\nu}(\xi)}}
 (1 - \epsilon) \sign(\xi) - \epsilon \\
&= - \frac{\norm{\ST{(1 - \epsilon)\nu}(\xi)}_1}{\norm{\ST{(1 - \epsilon)\nu}(\xi)}}
 (1 - \epsilon) - \epsilon =
- \frac{(1 - \epsilon) \norm{\ST{(1 - \epsilon)\nu}(\xi)}_1 +
\epsilon \norm{\ST{(1 - \epsilon)\nu}(\xi)}}
{\norm{\ST{(1 - \epsilon)\nu}(\xi)}} \\
&= - \frac{\norm{\ST{(1 - \epsilon)\nu}(\xi)}_{\epsilon}^{D}}
{\norm{\ST{(1 - \epsilon)\nu}(\xi)}} \text{ thanks to Lemma~\ref{lm:epsilon_dual_norm}}.
 \end{align*}

By definition of the $\epsilon$-norm, $h(\norm{\xi}_{\epsilon}, \xi) = 0$. Since
$\frac{\partial h}{\partial \nu} (\norm{\xi}_{\epsilon}, \xi) =
- \frac{\norm{\xi^{\epsilon}}_{\epsilon}^{D}}{\epsilon \norm{\xi}_{\epsilon}} \neq 0$,
we obtain by applying the Implicit Function Theorem
\begin{equation*}
\nabla \norm{\cdot}_{\epsilon}(\xi) \times \frac{\partial h}{\partial \nu}
(\norm{\xi}_{\epsilon}, \xi) +
\frac{\partial h}{\partial \xi} (\norm{\xi}_{\epsilon}, \xi) = 0 \text{ hence }
\nabla \norm{\cdot}_{\epsilon}(\xi) = - \frac{\frac{\partial h}{\partial \xi}
(\norm{\xi}_{\epsilon}, \xi)}{\frac{\partial h}{\partial \nu}
(\norm{\xi}_{\epsilon}, \xi)}.
\end{equation*}
Moreover,
$\frac{\partial h}{\partial \xi}(\norm{\xi}_{\epsilon}, \xi) =
\frac{\ST{(1 - \epsilon)\norm{\xi}_{\epsilon}}(\xi)}{\norm{\ST{(1 -
\epsilon)\norm{\xi}_{\epsilon}}(\xi)}} = \frac{\xi^{\epsilon}}{\norm{\xi^{\epsilon}}} =
\frac{\xi^{\epsilon}}{\epsilon \norm{\xi}_{\epsilon}}$ hence the result:
$\nabla \norm{\cdot}_{\epsilon}(\xi) =
\frac{\xi^{\epsilon}}{\norm{\xi^{\epsilon}}_{\epsilon}^{D}}$.
\end{proof}



\end{document}